\documentclass{article}
\usepackage{iclr2027_conference,times}
\usepackage{svg}

\usepackage{amsmath,amsfonts,bm}

\def\eqref#1{(\ref{#1})}

\def\1{\bm{1}}

\DeclareMathAlphabet{\mathsfit}{\encodingdefault}{\sfdefault}{m}{sl}
\SetMathAlphabet{\mathsfit}{bold}{\encodingdefault}{\sfdefault}{bx}{n}

\usepackage{fontawesome5}
\usepackage[svgnames]{xcolor}
\usepackage[most]{tcolorbox}

\definecolor{mygreen}{HTML}{79BF41}
\definecolor{myblue}{HTML}{4DBCC9}
\definecolor{myred}{HTML}{FFB6C1} 
\definecolor{codegreen}{rgb}{0,0.6,0}
\definecolor{codegray}{rgb}{0.5,0.5,0.5}
\definecolor{codepurple}{rgb}{0.58,0,0.82}
\definecolor{backcolour}{rgb}{0.95,0.95,0.92}

\lstdefinestyle{mystyle}{
    backgroundcolor=\color{backcolour},   
    commentstyle=\color{codegreen},
    keywordstyle=\color{magenta},
    numberstyle=\tiny\color{codegray},
    stringstyle=\color{codepurple},
    basicstyle=\ttfamily\scriptsize,
    breakatwhitespace=false,         
    breaklines=true,                 
    captionpos=b,                    
    keepspaces=true,                                 
    numbersep=1pt,                  
    showspaces=false,                
    showstringspaces=false,
    showtabs=false,                  
    tabsize=1
}
\tcbset {
  base/.style={
    arc=0mm, 
    bottomtitle=0.5mm,
    boxrule=0mm,
    colbacktitle=black!10!white, 
    coltitle=black, 
    fonttitle=\bfseries, 
    left=1mm,
    leftrule=1mm,
    right=1mm,
    top=1mm,
    bottom=1mm,
    title={#1},
    toptitle=0.75mm, 
    width=\textwidth
  }
}

\newtcolorbox{greybox}[1]{
  colframe=black!15!white,
  base={#1},
  breakable
}

\newtcolorbox{promptbox}[1]{
  colframe=black!15!white,
  base={#1},
  leftrule=0mm,
  breakable,
}

\newtcolorbox{bluebox}[1]{
  colframe=myblue!50!white,
  colback=myblue!15!white,
  base={#1},
  breakable
}

\newtcolorbox{greenbox}[1]{
  colframe=mygreen!50!white,
  colback=mygreen!15!white,
  base={#1},
  breakable
}

\newtcolorbox{redbox}[1]{
  colframe=myred!50!white,
  colback=myred!15!white,
  base={#1},
  breakable
}

\usepackage{microtype}
\usepackage{graphicx}
\usepackage{booktabs} 
\usepackage{listings}

\usepackage{amsmath}
\usepackage{amssymb}
\usepackage{mathtools}
\usepackage{amsthm}
\usepackage{wrapfig}
\usepackage{booktabs}       
\usepackage{amsfonts}       
\usepackage{nicefrac}       
\usepackage{microtype}      
\usepackage{graphicx}      
\usepackage{subcaption}    

\usepackage{xcolor}
\usepackage{tikz}
\usetikzlibrary{shapes}
\usepackage{subcaption,siunitx,booktabs}

\usepackage[utf8]{inputenc} 
\usepackage[T1]{fontenc}    
\definecolor{blue1}{HTML}{2E86AB}
\usepackage[colorlinks=true,
            linkcolor=red,
            citecolor=blue1,
            filecolor=blue1,
            urlcolor=blue1]{hyperref}
\usepackage{url}

\usepackage{tcolorbox}

\usepackage{dsfont}
\usepackage{enumitem}

\usepackage[capitalize,noabbrev]{cleveref}

\theoremstyle{plain}
\newtheorem{theorem}{Theorem}[section]
\newtheorem{proposition}[theorem]{Proposition}

\theoremstyle{definition}

\newtheorem{assumption}[theorem]{Assumption}
\theoremstyle{remark}

\usepackage{minitoc}

\title{I Act Therefore I Am: When Is JEPA's Action-Conditioning Enough to Learn Causal Mechanisms?}
\author{Yuhang Liu\textsuperscript{1,2}, Zhuo Huang\textsuperscript{2}, Javen Qinfeng Shi\textsuperscript{1,2}
\\
\textsuperscript{1}Responsible AI Research Centre, Australia\\
\textsuperscript{2}Australian Institute for Machine Learning, Adelaide University\\
\texttt{yuhang.liu01@adelaide.edu.au}\\
}

\iclrfinalcopy
\begin{document}
\maketitle

\doparttoc 
\faketableofcontents

\begin{abstract}
Recent empirical and theoretical advances suggest that joint-embedding predictive architectures (JEPAs) may learn meaningful representations for action-conditioned prediction of future outcomes, thus becoming one of the foundational structures for world models. However, accurate prediction does not, in general, necessarily imply recovery of underlying causal states that give rise to the observed dynamics. This work investigates when and how JEPAs can recover the underlying causal states from observations. We first introduce a latent variable model, in which high-dimensional observations are generated from latent causal states whose dynamics are governed by action-conditioned transition mechanisms. Based on this formulation, we develop a general information-theoretic objective that combines conditional likelihood maximization for learning transition dynamics with entropy maximization for preserving latent state information. We then establish identifiability conditions under which representations learned by this general objective recover the underlying latent causal states up to component-wise invertible transformations and permutation. One key condition for such identifiability is sufficient action-induced variation in the transition mechanisms. Guided by this finding, we instantiate the general objective with an action-modulated Gaussian additive-noise model, yielding action-modulated JEPA (A-JEPA). Experiments on synthetic environments verify the theoretical findings under the identifiability conditions and robustness to moderate violations, while visual benchmarks demonstrate improved state recovery and transfer to unseen transition mechanisms.

\end{abstract}
\section{Introduction}
\label{sec: intro}
World models seek to encode observations into compact, low-dimensional representations and learn action-conditioned transition dynamics among them, thereby capturing the consequences of possible actions and supporting planning and decision-making \citep{ha2018recurrent,hafner2019learning,hafner2025mastering}. The joint-embedding predictive architecture (JEPA) provides a simple yet effective approach toward this goal by predicting future representations from past representations, conditioned on actions \citep{lecun2022path}. Empirically, this paradigm has achieved strong prediction performance \citep{assran2025v,maes2026leworldmodel,destrade2025value,huang2026vjepa}. Recent theoretical work has further provided formal guarantees for the predictive representations learned by JEPAs. In particular, under a specific Gaussian latent-variable model, JEPA can recover the underlying latent variables up to an orthogonal transformation, thereby supporting accurate future prediction \citep{klindt2026does}.

Despite recent progress in learning predictive representations, both empirically and theoretically, accurate prediction does not, in general, imply recovery of the underlying latent causal states governing the observed dynamics. In particular, representations may achieve accurate future prediction by exploiting statistical correlations without recovering the underlying causal mechanisms~\citep{pearl2009causality,scholkopf2023statistical,parascandolo2018learning,scholkopf2021toward}. Such correlations may fail to hold in previously unseen situations, such as under environmental changes, leading to degraded generalization~\citep{ye2024spurious,scholkopf2021toward}.

A natural way to address this limitation is to go beyond prediction alone and require world models to capture the underlying causal mechanisms governing the observed dynamics, giving rise to causal world models \citep{richens2024robust,zhu2022offline,ceriscioli2026agents}. Recent work has begun to explore this direction within the JEPA framework \citep{nam2026causal}. However, this approach relies primarily on heuristically imposed causal inductive biases, and it still urgently needs formal guarantees for recovering the underlying causal states from learned representations. This work therefore asks the following question:
\begin{center}
\begin{greybox}{}
When and how can JEPA recover the underlying causal states from observational data?
\vskip -0.1in
\end{greybox}
\label{key_question}
\end{center}

To answer the question, this work makes the following contributions. 
\begin{itemize}[leftmargin=*]
    \item We introduce a latent variable model to formulate JEPA in Sec.~\ref{sec:setting}, in which observations are generated from latent causal states evolving through action-conditioned transition mechanisms. Based on this, we develop a general information-theoretic objective that combines conditional likelihood maximization with entropy maximization for learning predictive and information-preserving representations.
    \item Building on the formulation above, we establish an identifiability theory in Sec.~\ref{sec:identifiability}, showing that, under certain assumptions, representations learned by the proposed general objective recover the latent causal states up to component-wise invertible transformations and permutation. \textit{To the best of our knowledge, this is the first component-wise identifiability result within the JEPA framework.}
    \item Guided by the theoretical finding, we instantiate the general objective with an action-modulated Gaussian additive-noise model, yielding action-modulated JEPA (A-JEPA) in Sec.~\ref{sec:case-study}.
    \item We conduct extensive experiments in Sec.~\ref{sec:exp} on simulated data, verifying the theoretical findings when the identifiability assumptions hold and demonstrating robustness under moderate violations. Experiments on visual benchmarks further show substantially improved component-wise latent-state recovery and strong transferability to unseen transition mechanisms.
\end{itemize}

\section{A Latent Variable Model and an Information-Theoretic JEPA}
\label{sec:setting}
To study when and how JEPA can recover the underlying latent causal states, we first formulate the observed action-conditioned dynamics through a latent variable model. Under this formulation, we examine what a JEPA objective requires to support latent-state recovery, motivating a general information-theoretic objective that combines transition predictiveness with information preservation. We then characterize the global optima of this objective, providing the foundation for the identifiability analysis in Sec.~\ref{sec:identifiability}.

\subsection{Setup: A Latent Causal Generative Model}
\label{sec:latent-model}

Consider a latent causal generative process indexed by time $t$, with latent state $\mathbf s_t \in \mathbb R^d$, observed action $\mathbf a_t$, and high-dimensional observation $\mathbf x_t$. We assume that the latent dynamics are governed by a temporal structural causal model (SCM) whose associated graph is a directed acyclic graph, such that each latent variable $s_{t,i}$ depends only on a subset of variables in the previous state. Let $\mathrm{pa}_i \subseteq \{1,\ldots,d\}$ denote the indices of the causal parents of $s_{t,i}$ in $\mathbf s_{t-1}$. The action $\mathbf a_{t-1}$ is treated as an observed exogenous input that may modulate each transition mechanism.

We model the latent causal dynamics and observation process as:
\begin{align}
s_{t,i} &:= f_i\left( \mathbf s_{t-1,\mathrm{pa}_i}, \mathbf a_{t-1}, \epsilon_{t,i} \right), \quad i=1,\ldots,d,
\label{eq:latent-scm} \\
\mathbf x_t &:= \boldsymbol{\mathrm g}(\mathbf s_t).
\label{eq:observation-model}
\end{align}
Here, $f_i$ denotes the structural causal mechanism governing the transition of the $i$-th latent variable, $\epsilon_{t,i}$ is the corresponding exogenous disturbance, and $\boldsymbol{\mathrm g}$ is an unknown deterministic observation mapping. Together, Eqs.~\eqref{eq:latent-scm} and~\eqref{eq:observation-model} define the latent causal generative process underlying the observed action-conditioned dynamics.

We assume that the exogenous disturbances $\epsilon_{t,1},\ldots,\epsilon_{t,d}$ are mutually independent and independent of $(\mathbf s_{t-1},\mathbf a_{t-1})$, following the standard independent-noise assumption in structural causal models~\citep{pearl2009causality,peters2017elements,scholkopf2021toward}.
The temporal SCM in Eq.~\eqref{eq:latent-scm} therefore induces the action-conditioned transition factorization:
\begin{equation}
p(\mathbf s_t \mid \mathbf s_{t-1}, \mathbf a_{t-1}) = \prod\nolimits_{i=1}^{d} p\left(s_{t,i} \mid \mathbf s_{t-1,\mathrm{pa}_i}, \mathbf a_{t-1} \right).
\label{eq:causal-transition-factorization}
\end{equation}

\subsection{A General Information-Theoretic JEPA Objective}
\label{sec:information-jepa}
Given the latent causal generative model in Eqs.~\eqref{eq:latent-scm} and~\eqref{eq:observation-model}, we investigate what a JEPA objective needs to learn representations that recover the underlying latent causal states $\mathbf s_t$. Conventional JEPA objectives primarily encourage prediction of the future representation from the previous representation and action. Intuitively, beyond supporting future prediction, latent-state recovery additionally requires the learned representation to retain sufficient information about the underlying latent state. This motivates two complementary principles:
\begin{itemize}[leftmargin=*]
    \item \textit{Transition Predictiveness.} The learned representation should support accurate prediction of the future representation from the previous representation and action.
    \item \textit{Information Preservation.} The learned representation should avoid discarding information about the underlying latent state that is necessary for its recovery.
\end{itemize}
We instantiate these two principles through conditional likelihood maximization and entropy maximization, respectively. Let
$\mathbf z_t=\boldsymbol{\mathrm h}(\mathbf x_t)$ denote the representation produced by an encoder $\boldsymbol{\mathrm h}$, and let
$p_{\boldsymbol\phi}(\mathbf z_t\mid\mathbf z_{t-1},\mathbf a_{t-1})$
denote a parameterized action-conditioned transition model. As a result, we arrive at the following general information-theoretic JEPA objective:
\begin{equation}
\mathcal L_{\lambda}(\boldsymbol{\mathrm{h}},\boldsymbol{\mathrm{\boldsymbol{\mathrm{\phi}}}}) = -\mathbb E \left[ \log p_{\boldsymbol{\mathrm{\boldsymbol{\mathrm{\phi}}}}} (\mathbf z_t\mid\mathbf z_{t-1},\mathbf a_{t-1}) \right] -\lambda H(\mathbf z_t),
\label{eq:information-jepa-objective}
\end{equation}
where $H(\mathbf z_t)$ denotes the differential entropy of the learned representation and $\lambda$ controls the trade-off between transition predictiveness and information preservation. The first term encourages the learned transition model to accurately characterize the action-conditioned prediction in representation space, while the second discourages the learned representation from discarding information about the underlying latent state by favoring high-entropy representations.

We next provide an initial theoretical analysis of the objective in Eq.~\eqref{eq:information-jepa-objective} to examine the properties it provides toward latent state recovery. For a given encoder $\boldsymbol{\mathrm h}$, the latent generative model together with the encoder induces an action-conditioned transition distribution in the representation space, which we denote by $Q_{\boldsymbol{\mathrm h}}(\mathbf z_t\mid\mathbf z_{t-1},\mathbf a_{t-1})$. In contrast, $p_{\boldsymbol{\phi}}(\mathbf z_t\mid\mathbf z_{t-1},\mathbf a_{t-1})$ denotes the parameterized transition model optimized to approximate this induced distribution. With these definitions, we obtain the following result.

\begin{theorem}
\label{thm:optimal-solution}
Suppose the observations are generated according to the latent causal generative model in Eqs.~\eqref{eq:latent-scm}-\eqref{eq:observation-model}. Let $\mathbf z_t=\boldsymbol{\mathrm h}(\mathbf x_t)$ with $\boldsymbol{\mathrm h}:\mathcal X\rightarrow(0,1)^d$, and let $(\boldsymbol{\mathrm h}^{\star},\boldsymbol{\phi}^{\star})$ be a global
minimizer of Eq.~\eqref{eq:information-jepa-objective} with $\lambda\geq1$ that attains its lower bound. Then
\begin{align}
I(\mathbf z_t;\mathbf z_{t-1},\mathbf a_{t-1}) &= I(\mathbf s_t;\mathbf s_{t-1},\mathbf a_{t-1}), \label{eq:optimal-info}\\
p_{\boldsymbol{\phi}^{\star}} (\mathbf z_t\mid\mathbf z_{t-1},\mathbf a_{t-1}) &= Q_{\boldsymbol{\mathrm h}^{\star}}(\mathbf z_t\mid\mathbf z_{t-1},\mathbf a_{t-1})
\quad \mathrm{a.e.}
\label{eq:optimal-transition}
\end{align}
\end{theorem}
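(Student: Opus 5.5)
We decompose the objective into a model-fit gap plus an information term, and then bound the information term using the data-processing inequality and the compact support of $\mathbf z_t$. Write $-\mathbb E[\log p_{\boldsymbol\phi}(\mathbf z_t\mid\mathbf z_{t-1},\mathbf a_{t-1})] = H_{Q}(\mathbf z_t\mid\mathbf z_{t-1},\mathbf a_{t-1}) + \mathbb E\,\mathrm{KL}\bigl(Q_{\boldsymbol{\mathrm h}}\,\|\,p_{\boldsymbol\phi}\bigr)$, where the conditional entropy is computed under the distribution induced by $\boldsymbol{\mathrm h}$. Using $H(\mathbf z_t\mid\mathbf z_{t-1},\mathbf a_{t-1}) = H(\mathbf z_t) - I(\mathbf z_t;\mathbf z_{t-1},\mathbf a_{t-1})$, this gives
\begin{equation*}
\mathcal L_\lambda(\boldsymbol{\mathrm h},\boldsymbol\phi) = \mathbb E\,\mathrm{KL}\bigl(Q_{\boldsymbol{\mathrm h}}\,\|\,p_{\boldsymbol\phi}\bigr) - I(\mathbf z_t;\mathbf z_{t-1},\mathbf a_{t-1}) - (\lambda-1)H(\mathbf z_t).
\end{equation*}
Each of the three terms can now be bounded separately.

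First, the KL term is nonnegative. It vanishes if and only if $p_{\boldsymbol\phi}=Q_{\boldsymbol{\mathrm h}}$ almost everywhere, which requires the model class to be expressive enough to contain the induced transition. We take this to be implicit in the assumption that the lower bound is attained.

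Second, consider the mutual-information term. The chain $(\mathbf z_{t-1},\mathbf a_{t-1}) \leftarrow (\mathbf s_{t-1},\mathbf a_{t-1}) \to \mathbf s_t \to \mathbf z_t$ holds because $\mathbf z_t = \boldsymbol{\mathrm h}(\boldsymbol{\mathrm g}(\mathbf s_t))$ and $\mathbf z_{t-1}$ is a deterministic function of $\mathbf s_{t-1}$. Applying data processing on both sides gives $I(\mathbf z_t;\mathbf z_{t-1},\mathbf a_{t-1})\le I(\mathbf s_t;\mathbf s_{t-1},\mathbf a_{t-1})$. This bound is attained, for example, when $\boldsymbol{\mathrm h}\circ\boldsymbol{\mathrm g}$ is injective, so the supremum of this term over encoders is exactly $I(\mathbf s_t;\mathbf s_{t-1},\mathbf a_{t-1})$.

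Third, the entropy term. Since $\mathbf z_t\in(0,1)^d$, the maximum-entropy property of the uniform distribution on a bounded set gives $H(\mathbf z_t)\le 0$, with equality for the uniform law. Because $\lambda\ge 1$, the coefficient $-(\lambda-1)$ is nonpositive, so we get $-(\lambda-1)H(\mathbf z_t)\ge 0$. Combining the three bounds yields
\begin{equation*}
\mathcal L_\lambda \ge -I(\mathbf s_t;\mathbf s_{t-1},\mathbf a_{t-1}).
\end{equation*}

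The main obstacle is showing that this bound is attained simultaneously, that is, that some encoder makes $\mathbf z_t$ uniform on $(0,1)^d$ while also preserving the mutual information. For this we construct an encoder via a Darmois / Knothe--Rosenblatt map. Let $\mathbf T$ be the conditional-CDF transform sending the marginal law of $\mathbf s_t$ to the uniform distribution on $(0,1)^d$. This map is invertible almost everywhere, assuming $\mathbf s_t$ has a density. Set $\boldsymbol{\mathrm h}=\mathbf T\circ\boldsymbol{\mathrm g}^{-1}$, assuming $\boldsymbol{\mathrm g}$ is injective as is standard. Since $\mathbf T$ is invertible, mutual information is preserved, and by construction the entropy is zero. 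So the bound is achievable and equals the global minimum.

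Finally, suppose $(\boldsymbol{\mathrm h}^\star,\boldsymbol\phi^\star)$ attains the bound. Each of the three inequalities must then be tight. Tightness of the KL inequality gives Eq.~\eqref{eq:optimal-transition}, and tightness of the data-processing inequality gives Eq.~\eqref{eq:optimal-info}. Tightness of the entropy inequality additionally yields a uniform marginal when $\lambda>1$; this is not needed for the statement.

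Two technical points remain to check. The decomposition of $\mathcal L_\lambda$ requires finiteness of all entropies involved. The construction of $\mathbf T$ requires a density for $\mathbf s_t$; this need not hold if the transitions are degenerate, and in that case we would simply assume it.
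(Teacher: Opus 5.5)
Your proposal is correct and matches the paper's proof. You use the same decomposition of $\mathcal L_\lambda$ into $-I(\mathbf z_t;\mathbf z_{t-1},\mathbf a_{t-1})$, the expected KL gap, and $-(\lambda-1)H(\mathbf z_t)$; the same two-sided data-processing bound; the bound $H(\mathbf z_t)\le 0$ on $(0,1)^d$; and the conclusion that each nonnegative term must vanish at a bound-attaining minimizer. Your Darmois/Knothe--Rosenblatt attainability construction is not needed, because the theorem already assumes the minimizer attains the lower bound (the paper simply conditions on attainability), and it would add assumptions the statement does not make: injective $\boldsymbol{\mathrm g}$, a density for $\mathbf s_t$, and a sufficiently expressive $p_{\boldsymbol\phi}$.
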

\begin{proof}
    See Appendix~\ref{app:proof-optimal-solution}.
\end{proof}

\paragraph{Potential for Latent-State Recovery.}
Theorem~\ref{thm:optimal-solution} provides two key characterizations of the objective in Eq.~\eqref{eq:information-jepa-objective} at a global optimum. First, the learned representation preserves all transition-predictive information contained in the underlying latent state, as characterized by Eq.~\eqref{eq:optimal-info}. This ensures that the encoder $\boldsymbol h$ retains the information in the latent state that is relevant for predicting the transition dynamics. Second, the learned transition model matches the transition distribution induced in the representation space, as shown by Eq.~\eqref{eq:optimal-transition}. This ensures that $\boldsymbol{\phi}$ reproduces the induced action-conditioned transition distribution in the learned representation space. Together, these two properties ensure that the learned representation retains the information required to describe the latent transition, while the learned transition model faithfully reproduces the corresponding conditional distribution in representation space.

However, these properties only characterize the learned representation at the level of transition-relevant information preservation and conditional distribution matching, and do not by themselves guarantee component-wise recovery of the individual latent causal states. That is, the learned representation may still correspond to non-trivial mixtures of the underlying latent causal states, e.g., through an orthogonal transformation. We therefore study additional conditions under which such mixing can be ruled out in the next section.
\section{Identifiability Analysis: Answering When}
\label{sec:identifiability}
In this section, we study conditions under which the learned representation recovers the latent causal states up to component-wise invertible transformations and permutation, i.e., component-wise identifiability. Without additional assumptions, such identifiability may not in general be guaranteed. We introduce the following conditions on the latent causal generative model in Eqs.~\eqref{eq:latent-scm} and~\eqref{eq:observation-model}.

\begin{assumption} 
\label{assump:regularity}
The mapping $\boldsymbol{\mathrm g}$ is smooth and invertible. 
\end{assumption}
Assumption~\ref{assump:regularity} is to ensure that the observation process Eq.~\ref{eq:observation-model} does not discard information about the latent state, so that the latent state can, in principle, be recovered from the observation. Intuitively, if the observation mapping is non-invertible, distinct latent states may generate the same observation, making exact latent state recovery impossible in general.

\begin{assumption}
\label{assump:action-variability}
For any $\mathbf s_t$, there exist $2d+1$ actions $\mathbf a_{t-1}^{(0)},\ldots,\mathbf a_{t-1}^{(2d)}$ such that the matrix
\begin{equation}
\mathbf L = \left[ \mathbf w(\mathbf s_t,\mathbf s_{t-1},\mathbf a_{t-1}^{(1)}) - \mathbf w(\mathbf s_t,\mathbf s_{t-1},\mathbf a_{t-1}^{(0)}), \ldots, \mathbf w(\mathbf s_t,\mathbf s_{t-1},\mathbf a_{t-1}^{(2d)}) - \mathbf w(\mathbf s_t,\mathbf s_{t-1},\mathbf a_{t-1}^{(0)}) \right]
\end{equation}
is invertible. Here, for each $i$, $ q_i=\log p\!\left(s_{t,i} \mid \mathbf s_{t-1,\mathrm{pa}_i}, \mathbf a_{t-1} \right)$, and
\begin{equation}
\mathbf w(\mathbf s_t,\mathbf s_{t-1},\mathbf a_{t-1})
:=
\left(
\frac{\partial q_1}{\partial s_{t,1}},
\ldots,
\frac{\partial q_d}{\partial s_{t,d}},
\frac{\partial^2 q_1}{\partial s_{t,1}^2},
\ldots,
\frac{\partial^2 q_d}{\partial s_{t,d}^2}
\right)^\top .
\end{equation}
\end{assumption}
Intuitively, Assumption~\ref{assump:action-variability} requires different actions to induce sufficiently diverse changes in the latent transition mechanisms. Such action-induced variation provides distinct signatures for individual latent causal variables, allowing them to be distinguished from arbitrary mixtures.

With the main assumptions above, we can now establish the main identifiability result, as follows:
\begin{theorem}
\label{thm:componentwise-identifiability}
Suppose the latent causal states $\mathbf s_t$ and observations $\mathbf x_t$ follow the generative model defined in Eqs.~\eqref{eq:latent-scm} and~\eqref{eq:observation-model}. Assume that Assumptions~\ref{assump:regularity} and-\ref{assump:action-variability} hold. Let $\boldsymbol{\mathrm h}:\mathcal X\rightarrow(0,1)^d$ be any smooth
encoder associated with a global minimizer of
Eq.~\eqref{eq:information-jepa-objective} with $\lambda>1$. Then the true latent causal states $\mathbf s_t$ are related to the learned representations $\mathbf z_t$, which are obtained by the encoder $\boldsymbol{\mathrm h}$, by the following relationship: 
\begin{equation}
    \mathbf z_t = \left( r_1(s_{t,\pi(1)}), \ldots, r_d(s_{t,\pi(d)}) \right),
\end{equation}
where $\pi$ is a permutation and $r_1,\ldots,r_d$ are one-dimensional invertible transformations.
\end{theorem}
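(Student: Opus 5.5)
We first use Theorem~\ref{thm:optimal-solution} to show that, at a global minimizer with $\lambda>1$, the map from latent states to representations is a diffeomorphism. By Assumption~\ref{assump:regularity}, $\mathbf z_t=\boldsymbol{\mathrm h}(\boldsymbol{\mathrm g}(\mathbf s_t))=:\boldsymbol{\mathrm v}(\mathbf s_t)$ with $\boldsymbol{\mathrm v}$ smooth.

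\textbf{Step 1: $\boldsymbol{\mathrm v}$ is invertible.} Decompose the objective as
\begin{equation*}
\mathcal L_\lambda = \mathrm{KL}\bigl(Q_{\boldsymbol{\mathrm h}}\,\|\,p_{\boldsymbol\phi}\bigr) - I(\mathbf z_t;\mathbf z_{t-1},\mathbf a_{t-1}) - (\lambda-1)H(\mathbf z_t).
\end{equation*}
With $\lambda>1$, the last term is minimized by the maximal entropy on $(0,1)^d$. The mutual-information term is bounded by the data-processing inequality. A non-injective $\boldsymbol{\mathrm v}$, or one that collapses dimensions, would lose entropy or mutual information. Hence at an optimum that attains the bound, $\boldsymbol{\mathrm v}$ is a bijection onto its image, and we argue it is a diffeomorphism.

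\textbf{Step 2: change of variables.} Using Eq.~\eqref{eq:optimal-transition}, the learned transition density equals the induced density:
\begin{equation*}
\log p_{\boldsymbol\phi}(\mathbf z_t\mid\mathbf z_{t-1},\mathbf a_{t-1}) = \sum_{i=1}^d q_i\bigl(s_{t,i}\mid \mathbf s_{t-1,\mathrm{pa}_i},\mathbf a_{t-1}\bigr) - \log\bigl|\det \mathbf J_{\boldsymbol{\mathrm v}}(\mathbf s_t)\bigr|.
\end{equation*}
The crucial structural ingredient is that the learned model must also factorize component-wise in $\mathbf z$. I would argue this holds because the model class of Eq.~\eqref{eq:information-jepa-objective} is assumed to have a factorized transition, analogous to Eq.~\eqref{eq:causal-transition-factorization}, with terms $\tilde q_j(z_{t,j}\mid\mathbf z_{t-1},\mathbf a_{t-1})$. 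Then write $\mathbf s_t=\boldsymbol{\mathrm v}^{-1}(\mathbf z_t)$ and take the mixed derivative $\partial^2/\partial z_{t,j}\partial z_{t,k}$ for $j\neq k$. The left side vanishes, and $\log|\det|$ depends only on $\mathbf z_t$, independent of $\mathbf a_{t-1}$. This gives
\begin{equation*}
0=\sum_{i=1}^d\Bigl(\frac{\partial^2 q_i}{\partial s_{t,i}^2}\frac{\partial s_{t,i}}{\partial z_{t,j}}\frac{\partial s_{t,i}}{\partial z_{t,k}} + \frac{\partial q_i}{\partial s_{t,i}}\frac{\partial^2 s_{t,i}}{\partial z_{t,j}\partial z_{t,k}}\Bigr) - \frac{\partial^2 \log|\det \mathbf J_{\boldsymbol{\mathrm v}^{-1}}|}{\partial z_{t,j}\partial z_{t,k}}.
\end{equation*}

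\textbf{Step 3: eliminate via actions.} Evaluate this identity at the $2d+1$ actions of Assumption~\ref{assump:action-variability}, with $\mathbf s_t$ and $\mathbf s_{t-1}$ fixed, and subtract the $\mathbf a^{(0)}$ equation. The Jacobian term cancels, leaving $\mathbf L^\top \mathbf c=0$ with
\begin{equation*}
\mathbf c=\Bigl(\frac{\partial^2 s_{t,i}}{\partial z_{t,j}\partial z_{t,k}},\ \frac{\partial s_{t,i}}{\partial z_{t,j}}\frac{\partial s_{t,i}}{\partial z_{t,k}}\Bigr)_{i=1}^{d}.
\end{equation*}
Note that $\mathbf L$ is $2d\times 2d$, matching $2d$ differences. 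Invertibility of $\mathbf L$ forces $\frac{\partial s_{t,i}}{\partial z_{t,j}}\frac{\partial s_{t,i}}{\partial z_{t,k}}=0$ for all $i$ and all $j\neq k$. So each row of $\mathbf J_{\boldsymbol{\mathrm v}^{-1}}$ has at most one nonzero entry. Because the Jacobian is invertible, it is a generalized permutation matrix at every point. By continuity and connectedness, the permutation is constant. Hence $s_{t,i}$ depends only on $z_{t,\pi^{-1}(i)}$, and inverting gives $z_{t,j}=r_j(s_{t,\pi(j)})$ with $r_j$ strictly monotone.

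\textbf{Main obstacle.} I expect the hardest part to be Step 1 together with the factorization requirement in Step 2. We need rigorous invertibility of $\boldsymbol{\mathrm v}$ from the optimality conditions, which needs care since $\mathbf z_t$ is bounded. We also need to justify that the optimal $p_{\boldsymbol{\phi}^\star}$ factorizes across components without tacitly assuming the conclusion. For the latter, I would first try to show that the maximal-entropy solution, together with the conditional independence of the true noises, forces any matching model to be factorized. If that fails, I would state factorization as a property of the parameterized transition class.
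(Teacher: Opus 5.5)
\paragraph{The gap: Step~1.} The gap is Step~1, and it carries over into Step~2. The claim that a non-injective or dimension-collapsing $\boldsymbol{\mathrm v}$ ``would lose entropy or mutual information'' is an assertion, not an argument, and it does not name the mechanism that actually does the work. Maximal entropy constrains only the law of $\mathbf z_t$. The equality in Eq.~\eqref{eq:optimal-info} is equivalent to the two conditional independences $I(\mathbf s_t;\mathbf s_{t-1},\mathbf a_{t-1}\mid\mathbf z_t)=0$ and $I(\mathbf z_t;\mathbf s_{t-1}\mid\mathbf z_{t-1},\mathbf a_{t-1})=0$, i.e.\ to sufficiency statements. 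A folding or rank-deficient statistic stays sufficient whenever the log-likelihood ratio is constant along its fibres. In the paper it is Assumption~\ref{assump:action-variability} that rules this out, and your Step~1 never invokes it.

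Step~2 also needs more than local invertibility. Writing $\log Q_{\boldsymbol{\mathrm h}}$ as $\sum_i q_i-\log|\det J_{\boldsymbol{\mathrm v}}|$ requires every $\mathbf z_t$ to have a single preimage, and requires $\mathbf z_{t-1}$ to determine $\mathbf s_{t-1}$ (the learned model conditions on $\mathbf z_{t-1}$). That is global injectivity, which nothing you have established provides. Your rank intuition can be made rigorous another way: if $p(\mathbf s_t)$ is continuous and positive on a connected support, uniformity of $\mathbf z_t$ forces $|\det J_{\boldsymbol{\mathrm v}}|\ge p(\mathbf s_t)$ everywhere. For $d\ge 2$, however, this still does not give global injectivity.

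\paragraph{How the paper avoids a global inverse.} From the two conditional independences and Bayes' rule, it derives the pointwise identity $p(\mathbf s_t\mid\mathbf s_{t-1},\mathbf a_{t-1})/p(\mathbf s_t)=Q_{\boldsymbol{\mathrm h}}(\mathbf z_t\mid\mathbf z_{t-1},\mathbf a_{t-1})/p(\mathbf z_t)$, which holds without any invertibility. Differencing its logarithm over the actions of Assumption~\ref{assump:action-variability} and differentiating in $\mathbf s_t$ puts each column of the top $d\times 2d$ block of $\mathbf L$ in the range of $J_{\boldsymbol{\mathrm v}}(\mathbf s_t)^\top$. Since that block has rank $d$, $J_{\boldsymbol{\mathrm v}}$ is nonsingular everywhere.

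The paper then works only with a local inverse $\boldsymbol T$ and uses the same identity in place of your change of variables. The action-independent term $\log p(\mathbf z_t)-\log p(\mathbf s_t)$ reduces to your $-\log|\det J_{\boldsymbol{\mathrm v}}|$ when $\boldsymbol{\mathrm v}$ is injective, and it cancels across actions exactly as yours does. So your Step~3 goes through pointwise. Global injectivity emerges only at the end, because each $r_i$ depends on a single coordinate with nonvanishing derivative on a connected interval. Assumption~\ref{assump:action-variability} is thus used twice: its first-order block gives nondegeneracy, and the full $\mathbf L$ excludes mixing.

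\paragraph{Factorization.} The paper does exactly what your fallback says: it takes $p_{\boldsymbol\phi}=\prod_j p_{\boldsymbol\phi,j}$ as a property of the model class. Your first idea cannot work. With an unrestricted $p_{\boldsymbol\phi}$, the objective is invariant under composing $\boldsymbol{\mathrm h}$ with any volume-preserving diffeomorphism of $(0,1)^d$, so coordinate-mixing encoders would be optimal too.
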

\begin{proof}
    See Appendix~\ref{app:componentwise-identifiability}.
\end{proof}

\paragraph{Implication: Answering When.}
Theorem~\ref{thm:componentwise-identifiability} shows that JEPA can recover the latent causal states up to component-wise invertible transformations and permutation. A key condition underlying this result is sufficient action variability, i.e., Assumption~\ref{assump:action-variability}: different actions must induce sufficiently diverse changes in the latent transition mechanisms to rule out arbitrary mixing of the latent causal variables. This provides a primary answer to the ``when'' part of our central question: JEPA can identify the underlying latent causal states when the observed actions induce sufficiently rich variation in the transition mechanisms. Intuitively, if the transition mechanisms remain unchanged across actions, different latent mixtures may remain observationally indistinguishable, making component-wise recovery difficult.

\section{From Identifiability to Practice: Answering How}
\label{sec:case-study}

Theorem~\ref{thm:componentwise-identifiability} identifies sufficient action-induced variation in the latent transition mechanisms as a key condition for component-wise recovery. While the result applies to general nonparametric transition mechanisms, it does not directly specify how such variation should be modeled and optimized in practice. We therefore consider a concrete transition model in which action-induced mechanism variation admits an explicit and tractable characterization.

To this end, we consider an action-modulated additive-noise model (ANM) with Gaussian noise as a representative instantiation. This choice is motivated by three considerations. First, ANMs provide a widely used class for modeling causal mechanisms~\citep{hoyer2008nonlinear,JMLR:v15:peters14a,buhlmann2014cam}. Second, the Gaussian formulation yields a tractable conditional likelihood. Most importantly, it makes the action-variability condition in Assumption~\ref{assump:action-variability} directly interpretable through action-dependent conditional means and variances, as discussed below.

\paragraph{Action-Modulated Gaussian ANM in Latent Space.} We instantiate the latent causal transition model in
Eq.~\eqref{eq:causal-transition-factorization} as an action-modulated Gaussian ANM, as follows:
\begin{equation}
s_{t,i}
=
f_i(\mathbf s_{t-1,\mathrm{pa}_i},\mathbf a_{t-1})
+
\sigma_i(\mathbf a_{t-1})
\epsilon_{t,i},
\qquad
\epsilon_{t,i}\sim\mathcal N(0,1),
\label{eq:gaussian-anm}
\end{equation}
where $f_i$ characterizes the deterministic transition mechanism and
$\sigma_i$ models the corresponding transition uncertainty. Both are allowed to vary with the action $\mathbf a_{t-1}$. Accordingly, the conditional transition distribution takes the form:
\begin{equation}
p(\mathbf s_t\mid\mathbf s_{t-1},\mathbf a_{t-1})=\prod\nolimits_{i=1}^{d}\mathcal N\left(s_{t,i};
f_i(\mathbf s_{t-1,\mathrm{pa}_i},\mathbf a_{t-1}),
\sigma_i^2(\mathbf a_{t-1})
\right).
\label{eq:gaussian-factorization}
\end{equation}
Under this formulation, action-induced changes in the transition mechanisms are directly reflected by the changes in the conditional means $f_i$ and variances $\sigma_i^2$. For this Gaussian model, the first- and second-order
derivatives of the conditional log-density appearing in Assumption~\ref{assump:action-variability} take the form
\begin{equation}
\frac{\partial q_i}{\partial s_{t,i}} = -\frac{s_{t,i}-f_i}{\sigma_i^2}, \qquad \frac{\partial^2 q_i}{\partial s_{t,i}^2} = -\frac{1}{\sigma_i^2},
\label{eq:gaussian-score}
\end{equation}
where the dependence of $f_i$ on $(\mathbf s_{t-1,\mathrm{pa}_i},\mathbf a_{t-1})$ and of $\sigma_i^2$ on $\mathbf a_{t-1}$ is omitted for clarity. Eq.~\eqref{eq:gaussian-score} provides a direct interpretation of Assumption~\ref{assump:action-variability}. Action-induced changes in the conditional mean $f_i$ affect the first-order derivative, whereas changes in the conditional variance $\sigma_i^2$ affect both the first- and second-order derivatives. Consequently, sufficient action variability requires actions to induce sufficiently rich variation in these conditional transition statistics so that the resulting matrix $\mathbf L$ has full rank.

\paragraph{Gaussian Transition Model in Representation Space.}
Following the Gaussian instantiation in the latent space, we parameterize
the learned transition model to preserve the component-wise temporal
causal structure. Specifically, let $\mathrm{pa}_i$ denote the parent
set of the $i$-th representation component in the previous state. We
model
\begin{equation}
p_{\boldsymbol{\phi}}
(\mathbf z_t\mid\mathbf z_{t-1},\mathbf a_{t-1})
=
\prod\nolimits_{i=1}^{d}
\mathcal N\left(
z_{t,i};
\mu_{\boldsymbol{\phi},i}
(\mathbf z_{t-1,\mathrm{pa}_i},\mathbf a_{t-1}),
\sigma_{\boldsymbol{\phi},i}^2
(\mathbf a_{t-1})
\right).
\label{eq:learned-gaussian-transition}
\end{equation}
Here,
$\mu_{\boldsymbol{\phi},i}
(\mathbf z_{t-1,\mathrm{pa}_i},\mathbf a_{t-1})$
and
$\sigma_{\boldsymbol{\phi},i}
(\mathbf a_{t-1})>0$
denote the conditional mean and standard deviation of the $i$-th
representation component, respectively. This parameterization mirrors the
latent transition model by allowing each representation component to
depend only on its temporal parents and the action. Implementations are detailed in Appendix~\ref{sec:graph-implementation}.

The corresponding negative log-likelihood (NLL) term in
Eq.~\eqref{eq:information-jepa-objective} is
\begin{equation}
\begin{aligned}
\mathrm{NLL}_{\boldsymbol{\phi}}
(\mathbf z_t\mid\mathbf z_{t-1},\mathbf a_{t-1})
=
\frac{1}{2}\sum_{i=1}^{d}
\Bigg[
\frac{
\left(
z_{t,i}
-
\mu_{\boldsymbol{\phi},i}
(\mathbf z_{t-1,\mathrm{pa}_i},\mathbf a_{t-1})
\right)^2
}{
\sigma_{\boldsymbol{\phi},i}^2
(\mathbf a_{t-1})
}+
\log
\sigma_{\boldsymbol{\phi},i}^2
(\mathbf a_{t-1})
\Bigg].
\end{aligned}
\label{eq:gaussian-nll}
\end{equation}
Here, we omit the additive constant
$d\log(2\pi)/2$, which does not affect optimization.

The remaining challenge is to instantiate the entropy term in
Eq.~\eqref{eq:information-jepa-objective}. Direct estimation of
differential entropy in high-dimensional representation spaces is
generally difficult. We therefore adopt a contrastive surrogate for the
entropy term, motivated by the established asymptotic connection between
contrastive learning with negative samples and entropy-regularized
representation learning~\citep{wang2020understanding,von2021self}.
For a theoretical justification of this surrogate, see
Sec.~\ref{app:contrastive-entropy}.

Specifically, given an observed transition
$(\mathbf z_{t-1},\mathbf a_{t-1},\mathbf z_t)$, we sample $K$
alternative future representations
$\mathbf z_t^{(1)},\ldots,\mathbf z_t^{(K)}$
from other transitions in the minibatch and evaluate them under the same
action-conditioned transition model. Combining the NLL in
Eq.~\eqref{eq:gaussian-nll} with this contrastive surrogate yields the
practical objective
\begin{equation}
\small
\mathcal L
=
\mathbb E\left[
\alpha\,
\mathrm{NLL}_{\boldsymbol{\phi}}
(\mathbf z_t\mid\mathbf z_{t-1},\mathbf a_{t-1})
+
(1-\alpha)
\log\frac{1}{K}
\sum_{k=1}^{K}
\exp\left(
-\frac{
\mathrm{NLL}_{\boldsymbol{\phi}}
(\mathbf z_t^{(k)}
\mid\mathbf z_{t-1},\mathbf a_{t-1})
}{\tau}
\right)
\right].
\label{eq:gaussian-contrastive-loss}
\end{equation}
Here, $\alpha\in(0,1)$ controls the trade-off between transition
prediction and information preservation, while $\tau>0$ denotes the
temperature. The first term assigns high likelihood to the future representation, whereas the second discourages the same transition model from assigning high likelihood to alternative future representations. Together, the two terms encourage representations that are both transition-predictive and information-preserving, thus fulfilling our JEPA objective principles.

\paragraph{Implication: Answering How.}
Eq.~\eqref{eq:gaussian-contrastive-loss} provides a practical instantiation of the two principles underlying Eq.~\eqref{eq:information-jepa-objective}. The Gaussian NLL implements transition predictiveness through conditional likelihood maximization, while the contrastive term provides a tractable surrogate for entropy-based information preservation. More importantly, the action-modulated Gaussian transition translates the abstract variability condition in Assumption~\ref{assump:action-variability} into a concrete modeling principle: by parameterizing action-dependent conditional means and variances, the learned transition model can capture the action-induced mechanism variation to ensure component-wise identifiability.
\section{Empirical Evaluation}
\label{sec:exp}
We first evaluate A-JEPA on synthetic systems generated according to the action-modulated latent transition model in Sec.~\ref{sec:case-study}, where the theoretical assumptions are satisfied by construction, to empirically examine the identifiability results developed above. We then consider controlled violations of these assumptions to assess how the method behaves when the theoretical conditions are only approximately satisfied. Finally, we evaluate on visual control environments to study whether the same design facilitates recovery of task-relevant state factors.

\subsection{Simulation}
\label{sec:simulation}

\paragraph{Data generation.} We generate synthetic temporal causal systems according to the action-modulated Gaussian ANM in Sec.~\ref{sec:case-study}, with ground-truth latent states and temporal DAGs. The setup provides controlled variation along the key dimensions of our analysis: latent dimensionality, action-induced mechanism variation, observation invertibility, and transition-noise specification. Actions modulate both the conditional means and variances of the latent transition mechanisms, while latent states are mapped to observations through an invertible nonlinear mixing function in the matched setting. This controlled setup allows us to evaluate component-wise state recovery, transition-structure recovery, and robustness to violations of the identifiability and modeling assumptions. See Sec.~\ref{app:simulation-details} for details.

\paragraph{Model and Evaluation.} We train the encoder and action-conditioned transition model using the objective in Eq.~\eqref{eq:gaussian-contrastive-loss}. Consistent with the component-wise identifiability result in Theorem~\ref{thm:componentwise-identifiability}, we evaluate latent-state recovery using Spearman component-wise mean correlation coefficient (MCC). Specifically, we compute absolute Spearman correlations between learned and ground-truth latent coordinates, perform optimal matching using the Hungarian algorithm, and average the matched correlations. For transition-structure recovery, we align the learned coordinates using the same matching and evaluate the recovered temporal graph using edge F1, precision, recall, and structural Hamming distance (SHD).


\begin{figure}[h]
    \centering
     \vspace{-5pt}
    \begin{subfigure}[t]{0.32\textwidth}
        \centering
        \includegraphics[width=\linewidth]{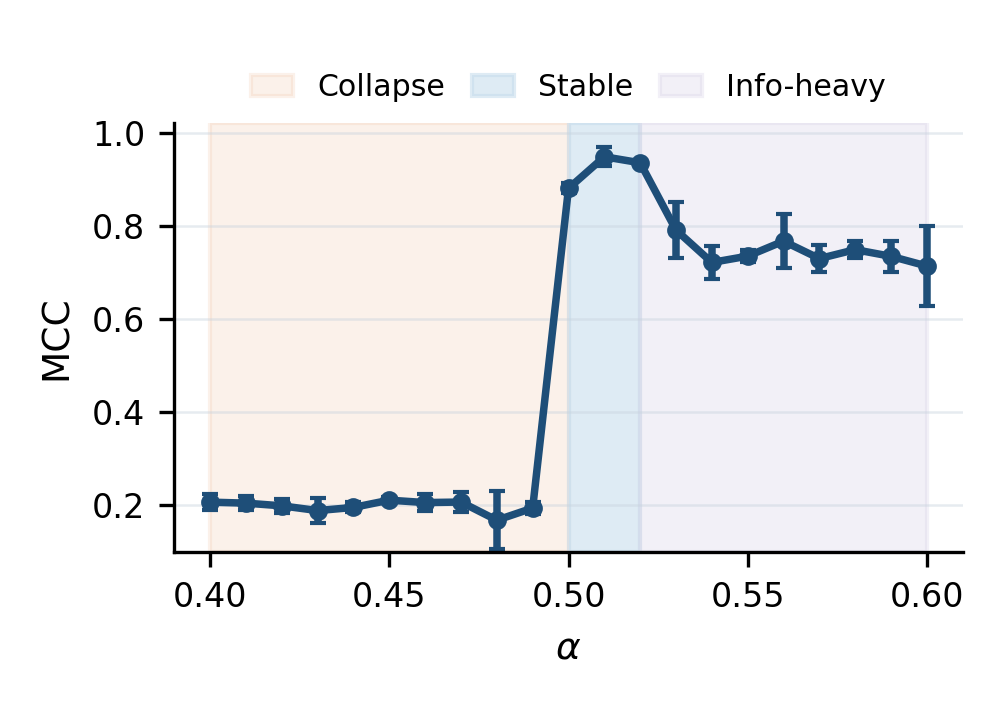}
        \caption{Trade-off hyperparameter $\alpha$.}
        \label{fig:sim-alpha-sweep}
    \end{subfigure}
    \hfill
    \begin{subfigure}[t]{0.32\textwidth}
        \centering
        \includegraphics[width=\linewidth]{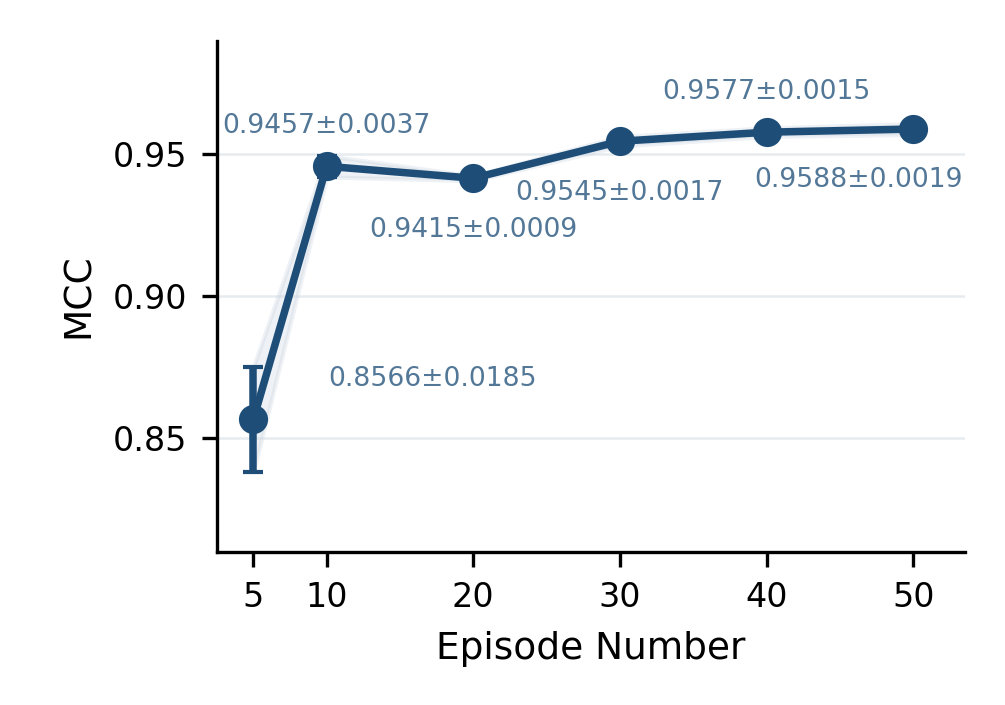}
        \caption{Action variability.}
        \label{fig:sim-action-variability}
    \end{subfigure}
    \hfill
    \begin{subfigure}[t]{0.32\textwidth}
        \centering
        \includegraphics[width=\linewidth]{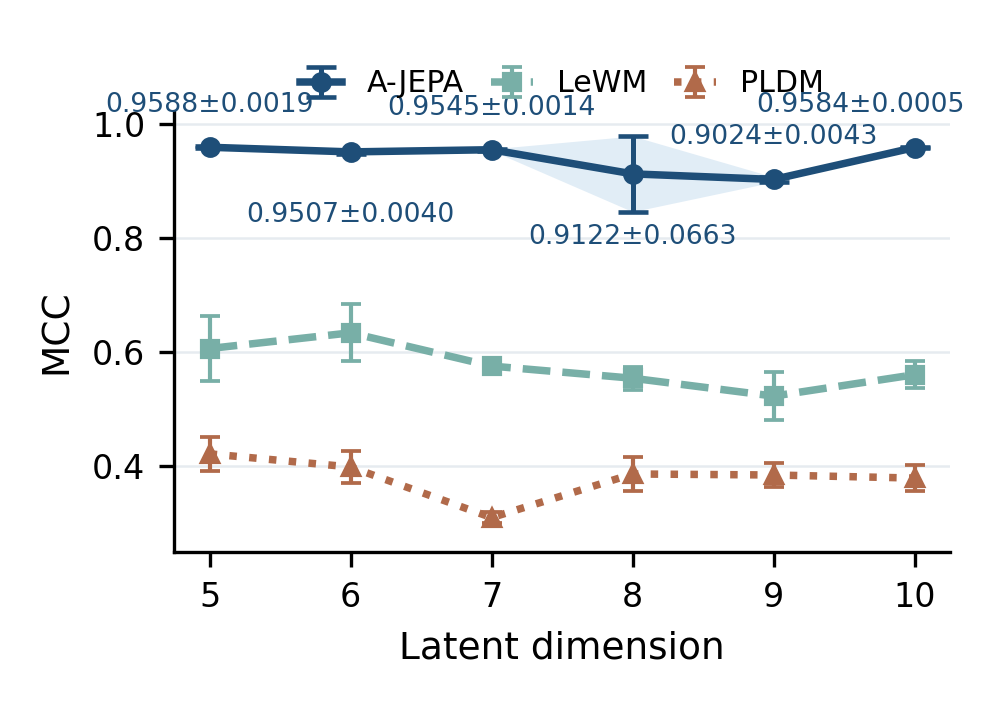}
        \caption{Latent dimension scaling.}
        \label{fig:sim-latent-scaling}
    \end{subfigure}
    \caption{Analysis of latent-state recovery. Left: effect of the trade-off between transition likelihood and information preservation. Middle: effect of increasing action-induced mechanism variation. Right: component-wise recovery across latent dimensions compared with JEPA-style baselines.}
       \vspace{-10pt}
    \label{fig:sim-main-analysis}
\end{figure}

\paragraph{Latent-State Recovery Analysis.} We first examine the factors governing latent-state recovery. Varying the trade-off parameter $\alpha$ in Eq.~\eqref{eq:gaussian-contrastive-loss} reveals a sharp dependence on the balance between transition likelihood and information preservation (Fig.~\ref{fig:sim-alpha-sweep}). MCC remains low for $\alpha<0.50$, peaks at $0.9479$ near $\alpha=0.51$, and then gradually decreases as greater weight is placed on transition likelihood. This highlights the importance of balancing transition predictiveness with information preservation for component-wise recovery. We next vary the number of distinct action settings. As shown in Fig.~\ref{fig:sim-action-variability}, MCC improves from $0.8566$ with five actions to $0.9588$ with fifty, with gains gradually saturating. This trend is consistent with Assumption~\ref{assump:action-variability}: richer action-induced mechanism variation provides stronger signals for distinguishing latent causal variables. Finally, Fig.~\ref{fig:sim-latent-scaling} evaluates recovery as the latent dimension increases from $d=5$ to $d=10$. A-JEPA maintains MCC around $0.90$-$0.96$, substantially outperforming LeWM-style and PLDM baselines. This shows that strong component-wise recovery persists as dimensionality increases.

\begin{figure}[h]
    \centering
    \begin{subfigure}[t]{0.32\textwidth}
        \centering
        \includegraphics[width=\linewidth]
        {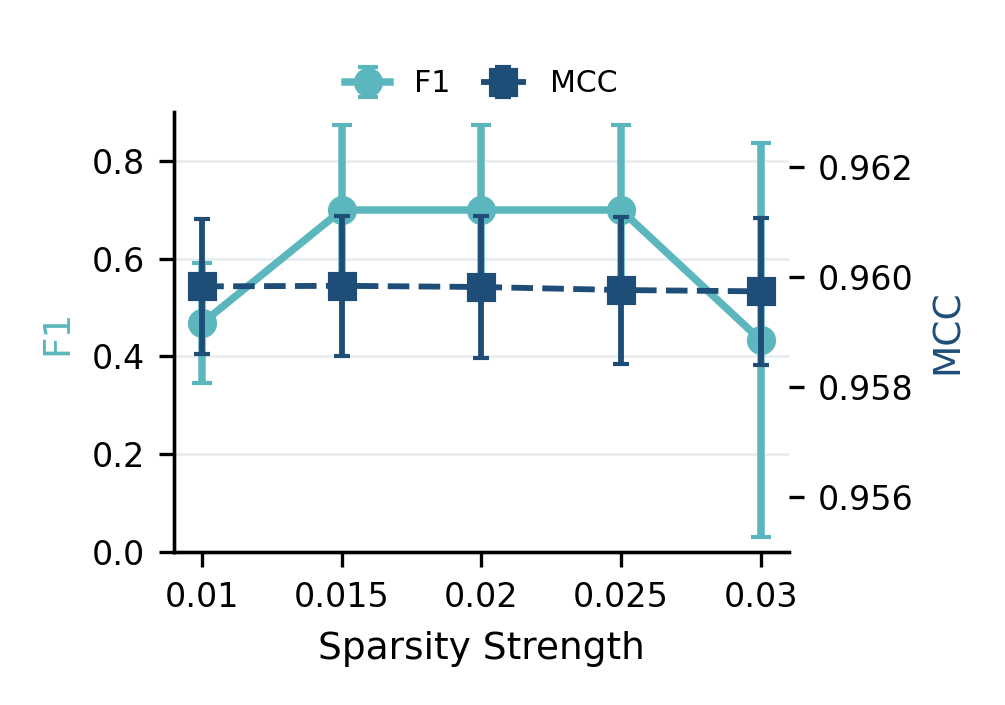}
        \caption{Parent-gate sparsity.}
        \label{fig:sim-parent-gate}
    \end{subfigure}
    \hfill
    \begin{subfigure}[t]{0.32\textwidth}
        \centering
        \includegraphics[width=\linewidth]
        {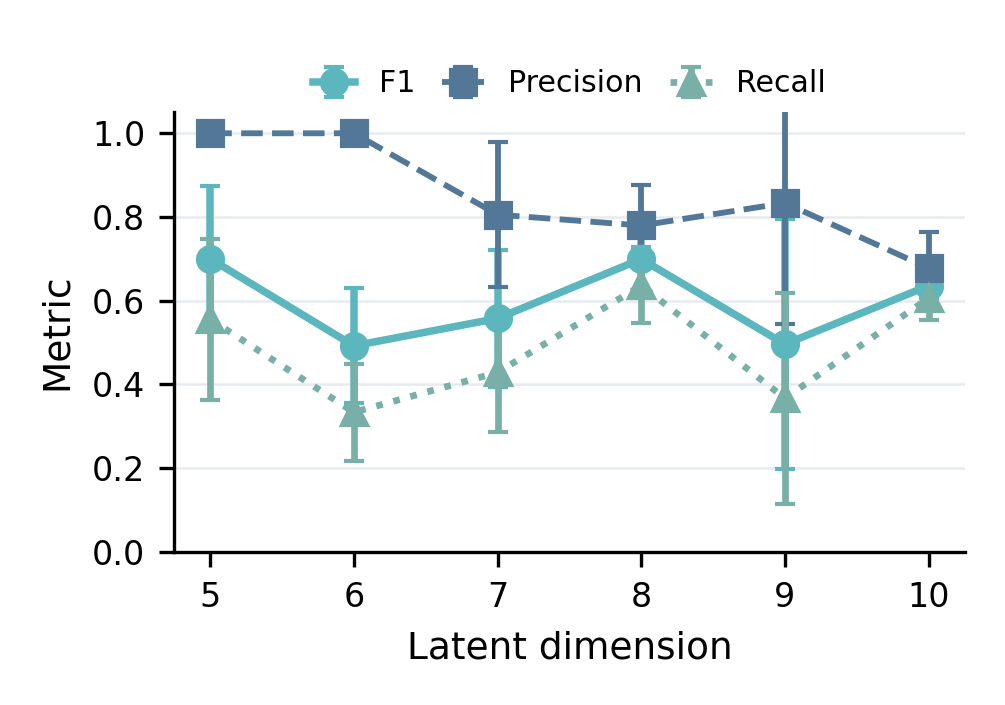}
        \caption{Graph recovery.}
        \label{fig:sim-graph-metrics}
    \end{subfigure}
    \hfill
    \begin{subfigure}[t]{0.32\textwidth}
        \centering
        \includegraphics[width=\linewidth]
        {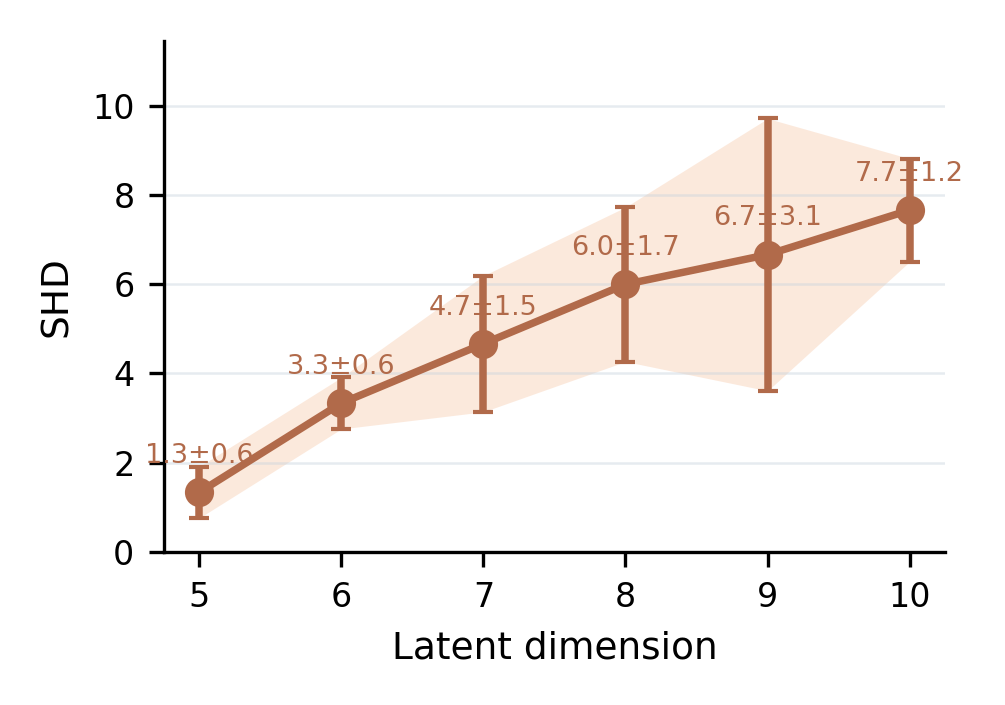}
        \caption{Structural Hamming distance.}
        \label{fig:sim-graph-shd}
    \end{subfigure}
    \caption{Analysis of latent structure recovery. Left: effect of parent-gate sparsity on graph recovery. Middle: F1, precision, and recall across latent dimensions. Right: SHD across latent dimensions.}
       \vspace{-10pt}
    \label{fig:sim-structure-recovery}
\end{figure}

\paragraph{Transition-Structure Recovery Analysis.} We further examine whether the recovered latent representations support identification of the underlying causal transition structure. As shown in Fig.~\ref{fig:sim-parent-gate}, latent-state recovery remains stable at MCC around $0.96$, whereas graph recovery is more sensitive to the gate sparsity regularization (See Eq.~\eqref{eq:sparse-ajepa-objective} for implementation.), with edge F1 reaching approximately $0.70$ in the best-performing range. We then evaluate graph recovery as the latent dimension increases from $d=5$ to $d=10$. Fig.~\ref{fig:sim-graph-metrics} shows consistently high precision with moderate recall, while Fig.~\ref{fig:sim-graph-shd} shows increasing SHD as dimensionality grows. Together, these results indicate that the recovered representations retain meaningful information about the underlying transition structure, although graph recovery becomes increasingly challenging in higher dimensions.

\begin{figure}[h]
    \centering
    \begin{subfigure}[t]{0.32\textwidth}
        \centering
        \includegraphics[width=\linewidth]
        {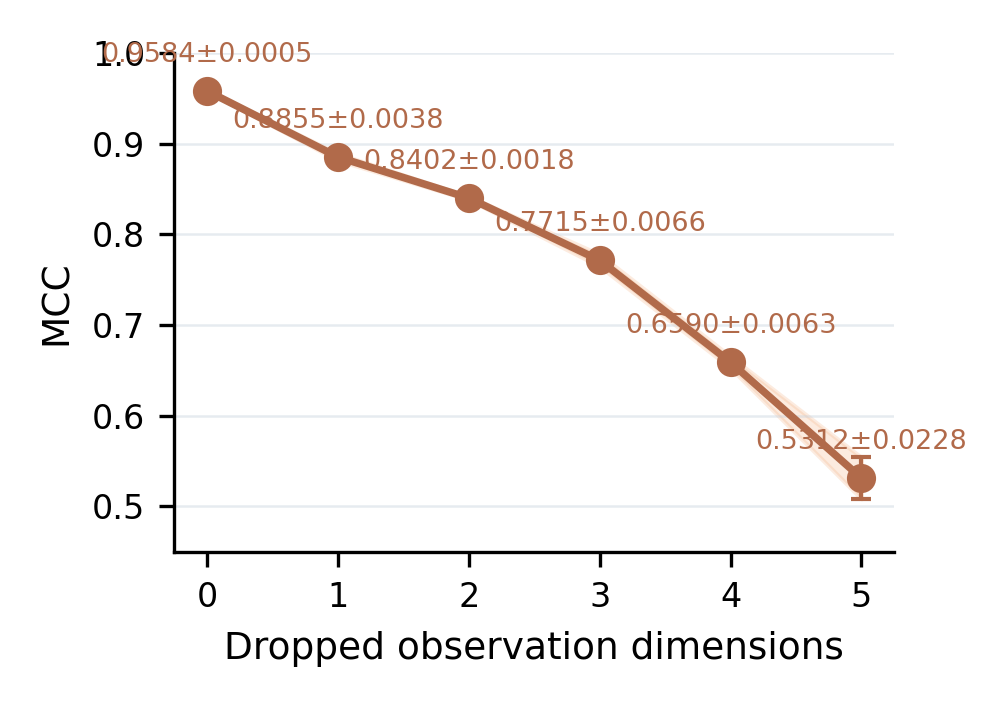}
        \caption{Observation invertibility.}
        \label{fig:sim-invertibility}
    \end{subfigure}
    \hfill
    \begin{subfigure}[t]{0.32\textwidth}
        \centering
        \includegraphics[width=\linewidth]
        {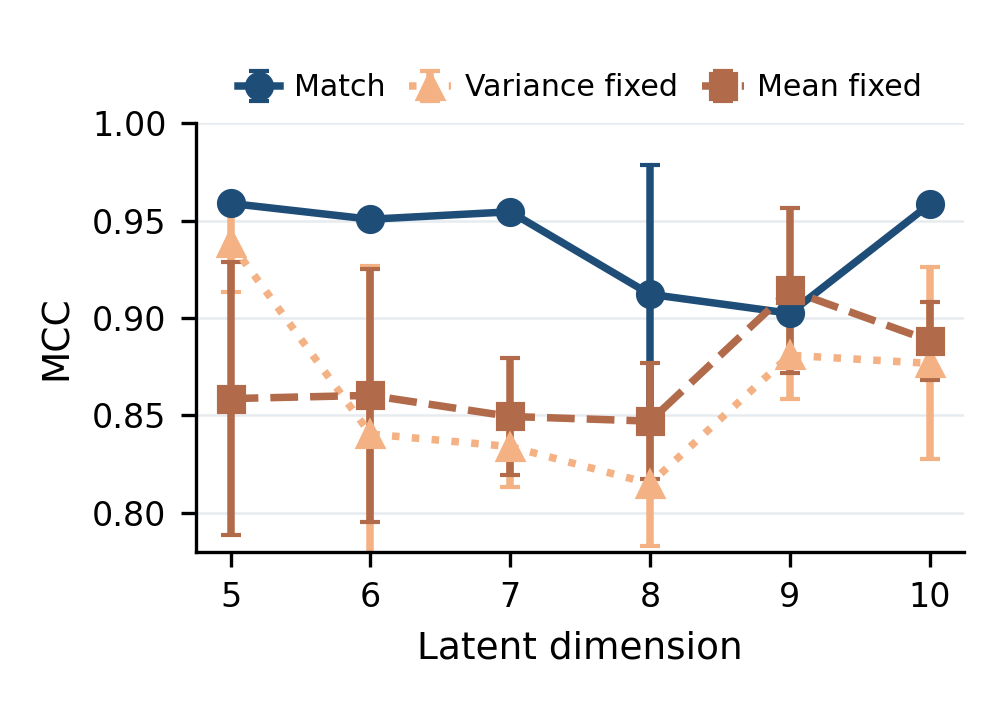}
        \caption{Action-induced variation.}
        \label{fig:sim-mechanism-violation}
    \end{subfigure}
    \hfill
    \begin{subfigure}[t]{0.32\textwidth}
        \centering
        \includegraphics[width=\linewidth]
        {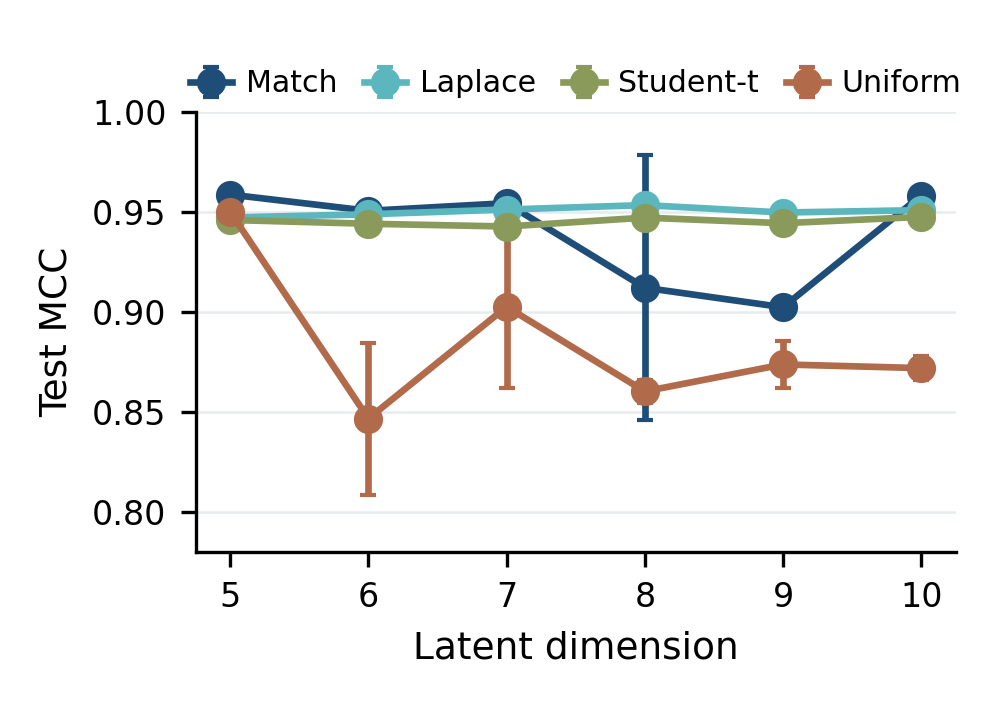}
        \caption{Noise misspecification.}
        \label{fig:sim-noise}
    \end{subfigure}

    \caption{Assumption and model-specification analysis. 
    Left: latent-state recovery degrades as the observation mapping becomes
    increasingly non-invertible. Middle: reducing action-induced variation in
    either the conditional mean or variance degrades recovery. Right: the
    proposed method remains robust to Laplace and Student-$t$ noise, while
    uniform noise leads to a larger degradation.}
    \label{fig:sim-assumption-violations}
\end{figure}

\paragraph{Assumption and Model-Specification Analysis.}
Finally, we examine sensitivity to the assumptions and modeling choices underlying our framework. First, progressively removing observed coordinates degrades MCC from $0.9584$ to $0.5312$ (Fig.~\ref{fig:sim-invertibility}), consistent with the importance of an information-preserving observation mapping. Second, we weaken action-induced mechanism variation by allowing actions to modulate only the conditional mean or variance. Both restrictions degrade latent-state recovery across dimensions (Fig.~\ref{fig:sim-mechanism-violation}), supporting the role of rich action-induced variation in Assumption~\ref{assump:action-variability}. Finally, replacing Gaussian noise with Laplace or Student-$t$ noise causes little degradation, whereas uniform noise has a larger effect (Fig.~\ref{fig:sim-noise}), indicating robustness to moderate transition-noise misspecification.

\begin{figure*}[h]
    \centering

    \begin{minipage}{0.24\textwidth}
        \centering\textbf{\textsc{TwoRoom}}
    \end{minipage}
    \begin{minipage}{0.24\textwidth}
        \centering\textbf{\textsc{OGB}}
    \end{minipage}
    \begin{minipage}{0.24\textwidth}
        \centering\textbf{\textsc{Reacher}}
    \end{minipage}
    \begin{minipage}{0.24\textwidth}
        \centering\textbf{\textsc{PushT}}
    \end{minipage}

    \vspace{1mm}

    \begin{subfigure}[t]{0.24\textwidth}
        \centering
        \includegraphics[width=\linewidth]{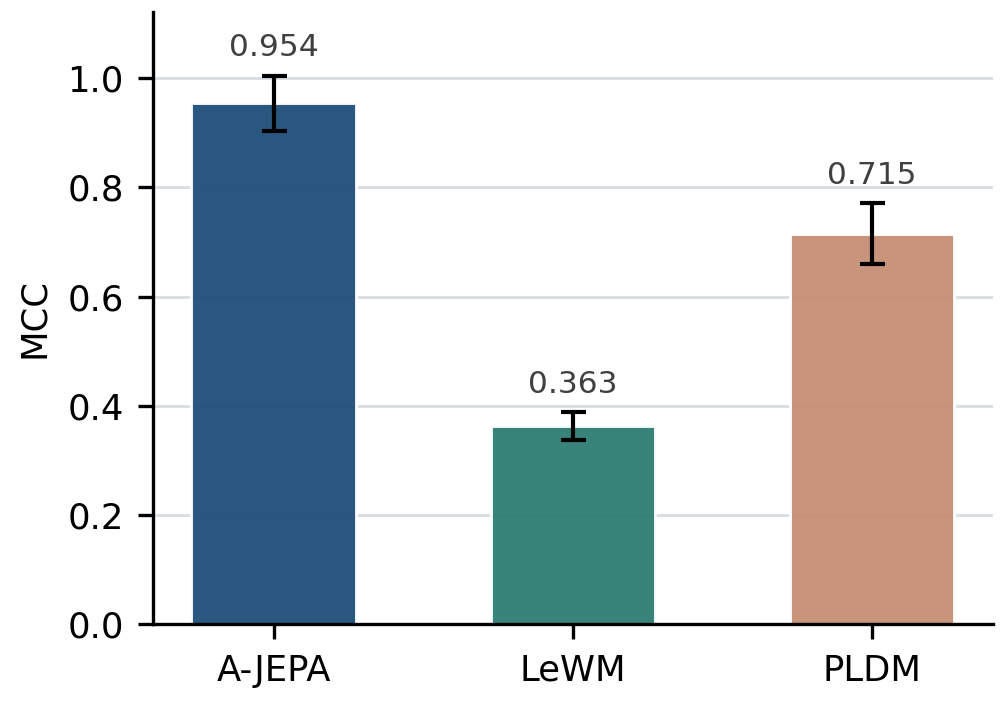}
        \caption{MCC}
    \end{subfigure}
    \hfill
    \begin{subfigure}[t]{0.24\textwidth}
        \centering
        \includegraphics[width=\linewidth]{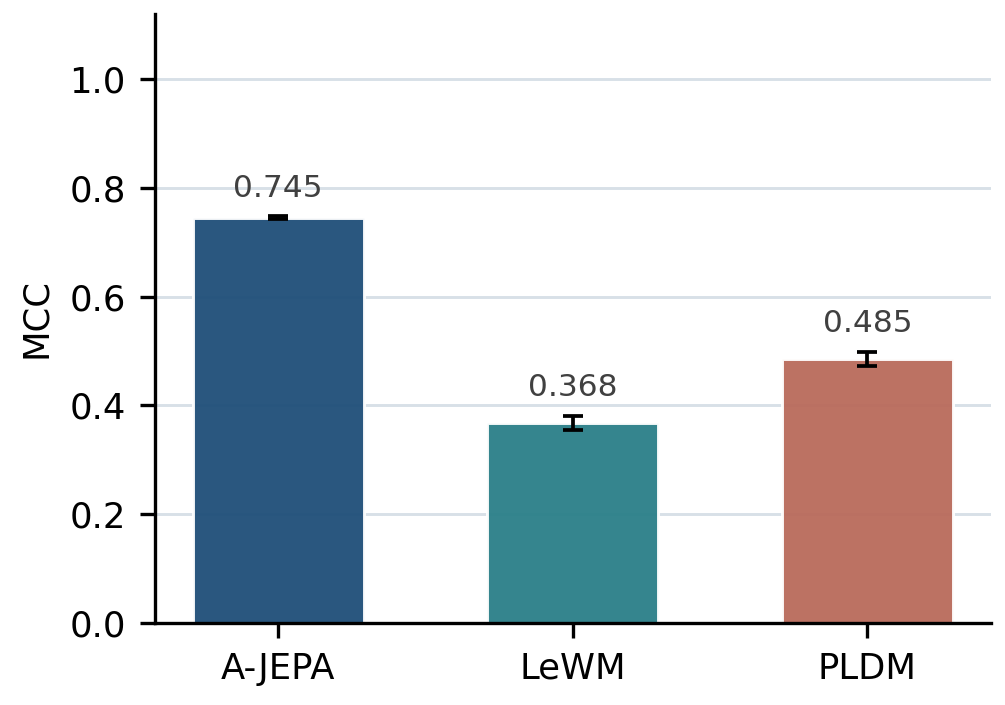}
        \caption{MCC}
    \end{subfigure}
    \hfill
    \begin{subfigure}[t]{0.24\textwidth}
        \centering
        \includegraphics[width=\linewidth]{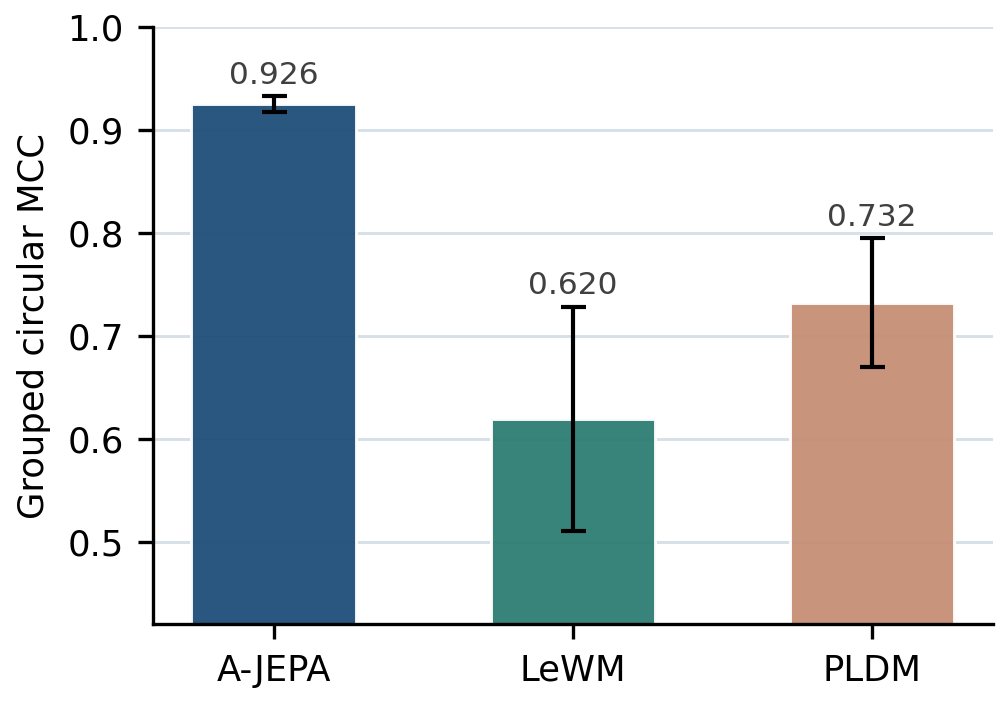}
        \caption{MCC}
    \end{subfigure}
    \hfill
    \begin{subfigure}[t]{0.24\textwidth}
        \centering
        \includegraphics[width=\linewidth]{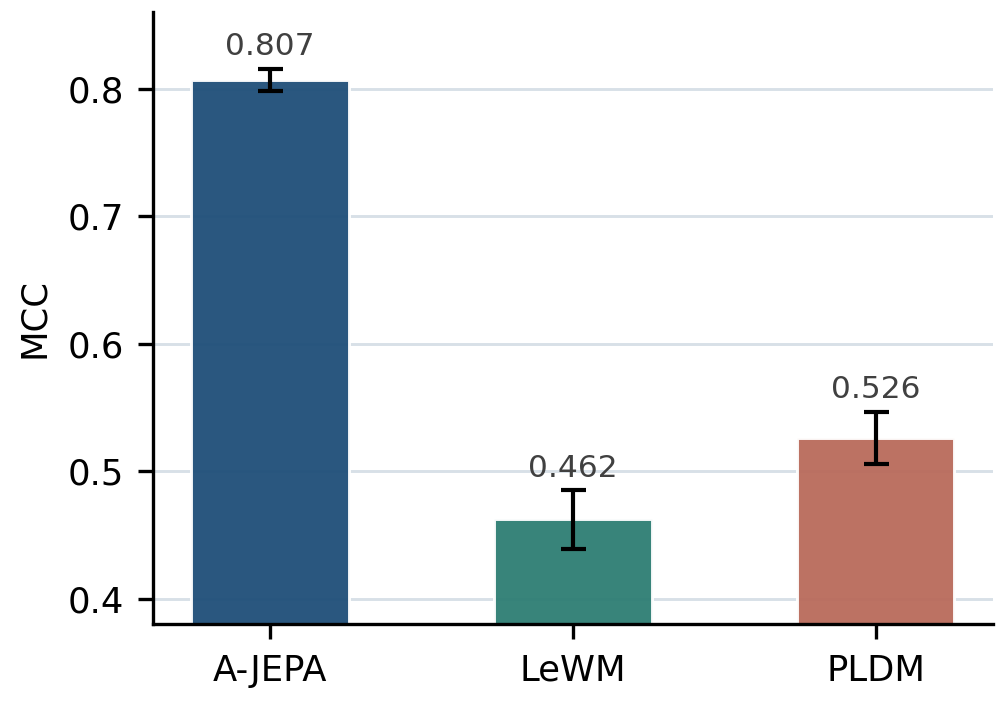}
        \caption{MCC}
    \end{subfigure}

    \vspace{1mm}

    \begin{subfigure}[t]{0.24\textwidth}
        \centering
        \includegraphics[width=\linewidth]{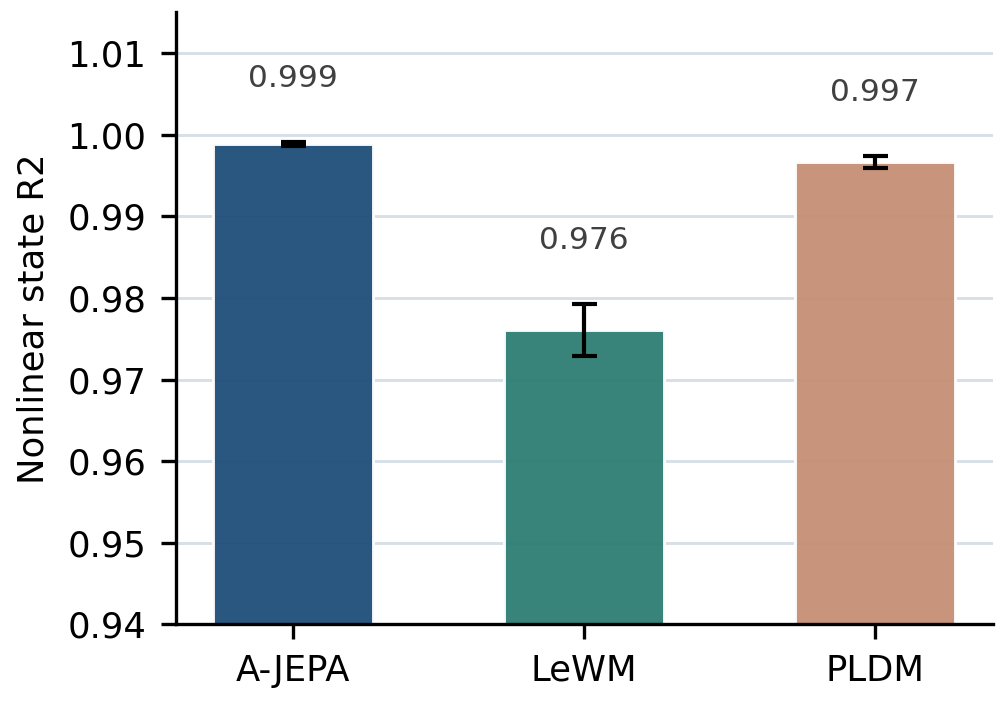}
        \caption{State $R^2$}
    \end{subfigure}
    \hfill
    \begin{subfigure}[t]{0.24\textwidth}
        \centering
        \includegraphics[width=\linewidth]{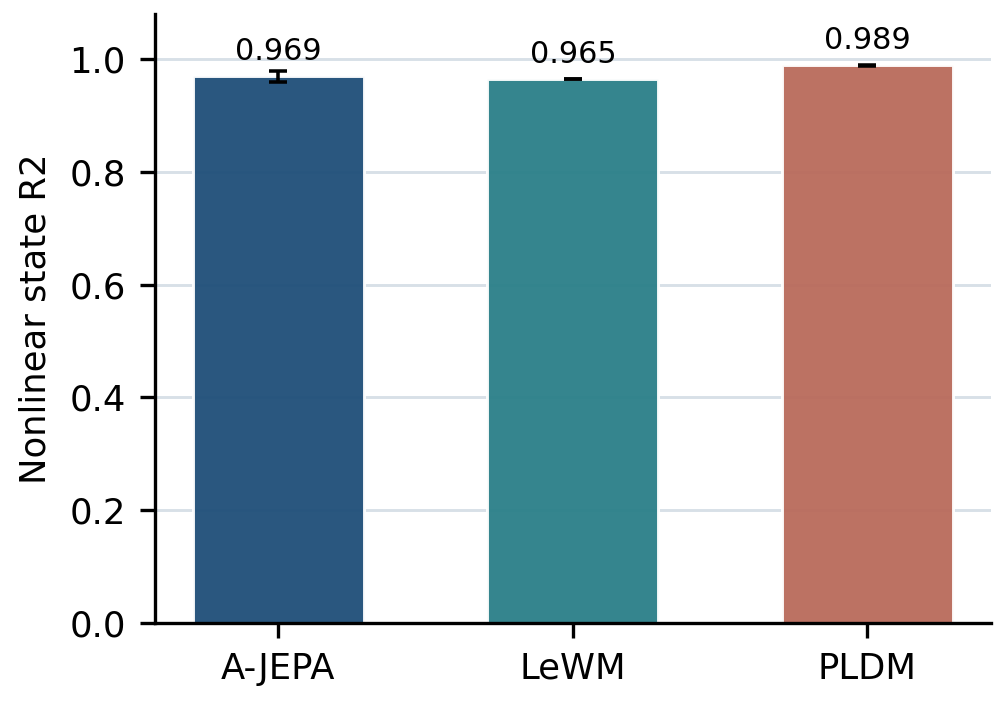}
        \caption{State $R^2$}
    \end{subfigure}
    \hfill
    \begin{subfigure}[t]{0.24\textwidth}
        \centering
        \includegraphics[width=\linewidth]{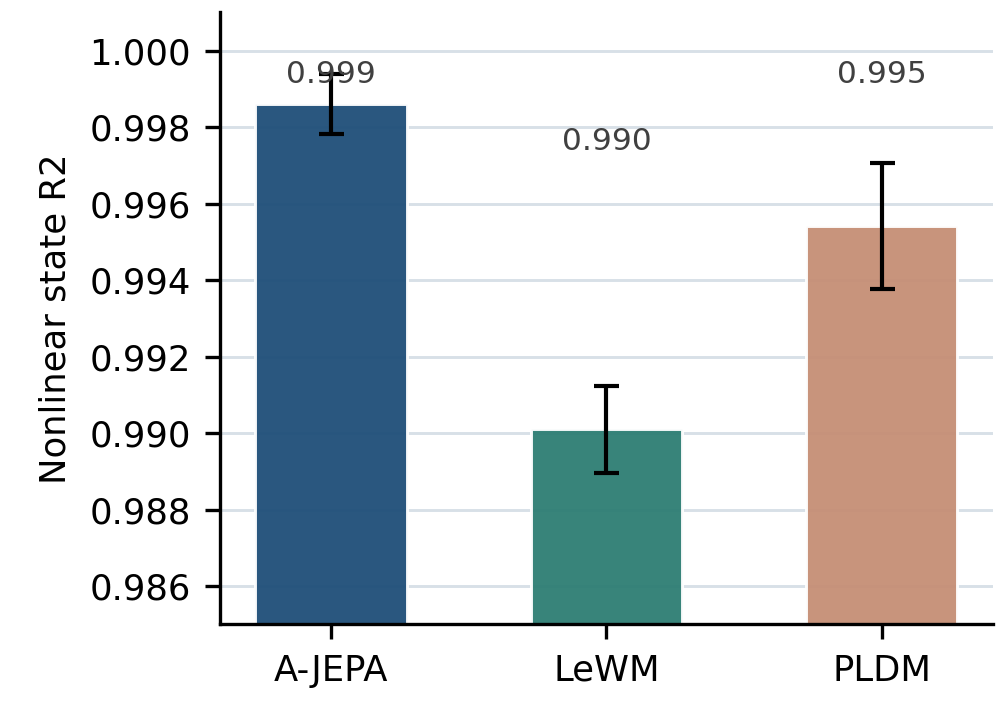}
        \caption{State $R^2$}
    \end{subfigure}
    \hfill
    \begin{subfigure}[t]{0.24\textwidth}
        \centering
        \includegraphics[width=\linewidth]{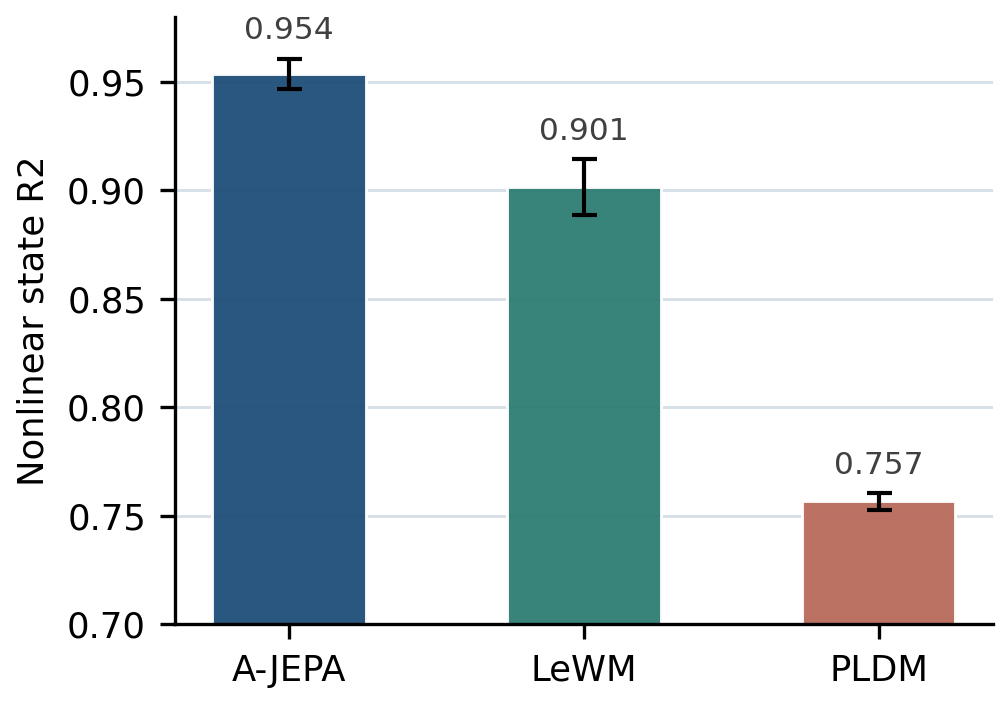}
        \caption{State $R^2$}
    \end{subfigure}
    \vspace{1mm}

    \begin{subfigure}[t]{0.24\textwidth}
        \centering
        \includegraphics[width=\linewidth]{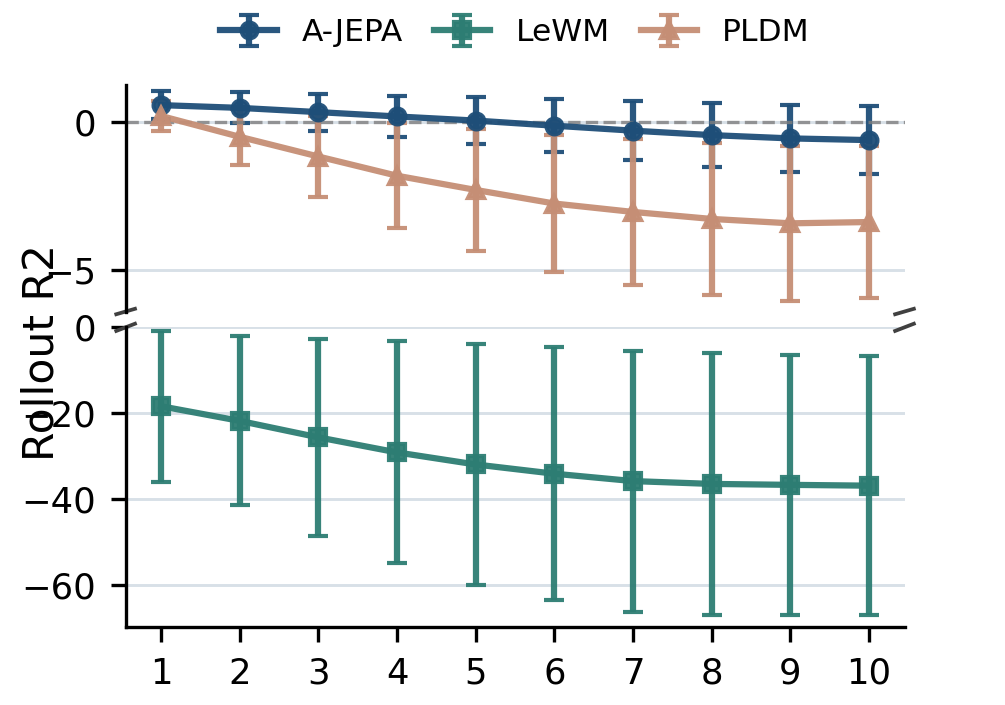}
        \caption{Rollout $R^2$}
    \end{subfigure}
    \hfill
    \begin{subfigure}[t]{0.24\textwidth}
        \centering
        \includegraphics[width=\linewidth]{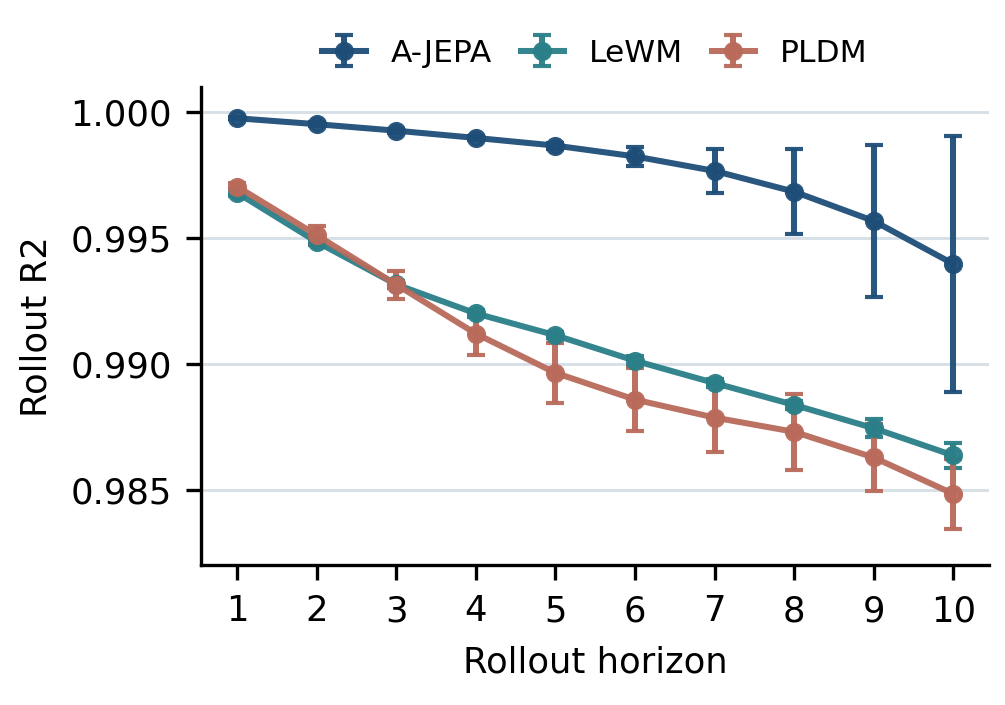}
        \caption{Rollout $R^2$}
    \end{subfigure}
    \hfill
    \begin{subfigure}[t]{0.24\textwidth}
        \centering
        \includegraphics[width=\linewidth]{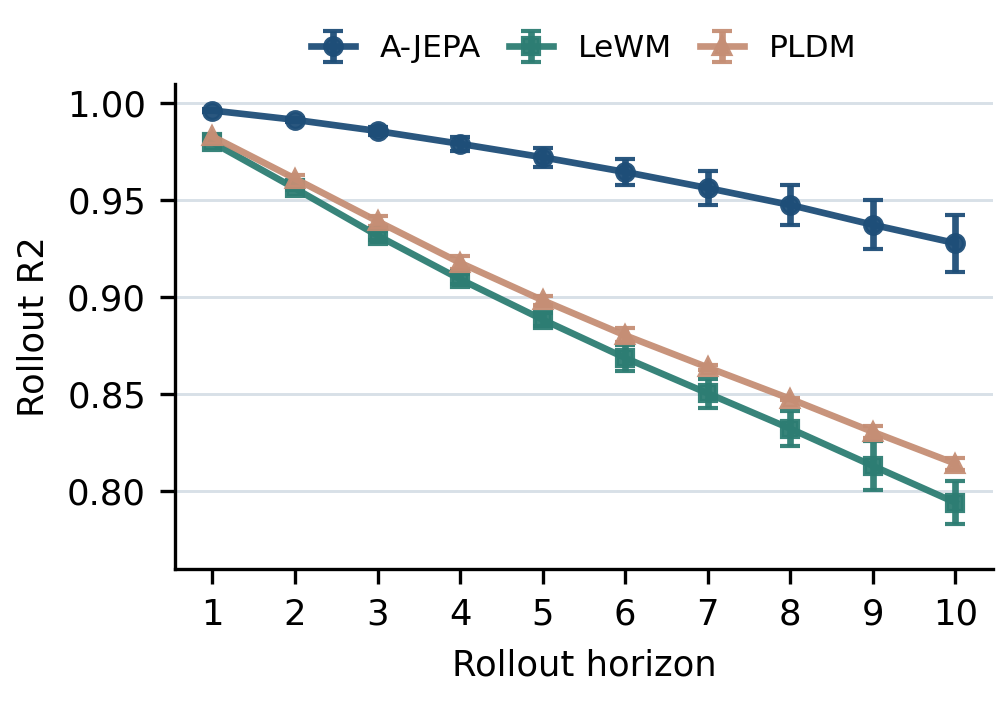}
        \caption{Rollout $R^2$}
    \end{subfigure}
    \hfill
    \begin{subfigure}[t]{0.24\textwidth}
        \centering
        \includegraphics[width=\linewidth]{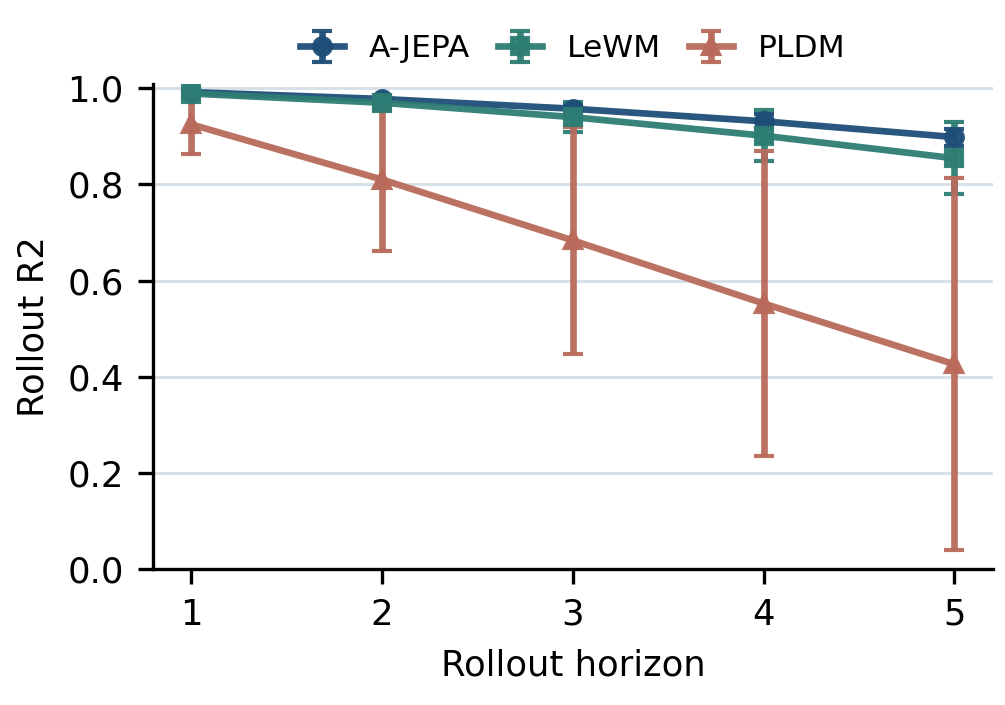}
        \caption{Rollout $R^2$}
    \end{subfigure}

    \caption{Representation and transition recovery across four visual control environments.Columns correspond to \textsc{TwoRoom}, \textsc{OGB}, \textsc{Reacher}, and \textsc{PushT}, while rows report component-wise state recovery (MCC), joint-state information preservation (nonlinear state $R^2$), and multi-step action-conditioned prediction (rollout $R^2$), respectively. A-JEPA consistently improves component-wise recovery while retaining strong state information and predictive dynamics.
}
    \label{fig:visual-control-main}
\end{figure*}
\subsection{Visual Control Environments}
\label{sec:visual-control}

We next examine whether the theoretical and empirical findings above extend beyond the controlled simulation setting to visual environments with more complex action-conditioned dynamics. We consider four environments: \textsc{TwoRoom}~\citep{sobal2025stress}, \textsc{OGB}~\citep{ogbench_park2025}, \textsc{Reacher}~\citep{tassa2018deepmindcontrolsuite}, and \textsc{PushT}~\citep{zhou2025dinowm}, covering increasingly challenging state transitions and interactions. We compare the proposed A-JEPA with two JEPA-style baselines, LeWM~\citep{maes2026leworldmodel} and PLDM~\citep{sobal2026learning}. We evaluate the learned representations from three complementary perspectives. First, we measure component-wise recovery using MCC, which directly evaluates whether individual learned coordinates align with the underlying state factors. Second, we train a nonlinear decoder from the learned representations to the ground-truth states and report nonlinear state $R^2$, providing a proxy for how much joint-state information remains recoverable from the representation irrespective of coordinate-wise alignment. Third, we recursively apply the learned transition model under action sequences and report rollout $R^2$, measuring how well the learned representation supports action-conditioned transition prediction over increasing horizons. See Sec.~\ref{app:visual-benchmark-details}.

\paragraph{Component-Wise State Recovery.}
As shown in the first row of Fig.~\ref{fig:visual-control-main}, A-JEPA
consistently achieves higher MCC than the JEPA-style baselines
across all four environments. The improvement is observed in both simpler
environments such as \textsc{TwoRoom} and \textsc{OGB}, and more complex
continuous-control environments including \textsc{Reacher} and \textsc{PushT}.
These results show that the component-wise recovery advantage persists in more complex visual environments.

\paragraph{Information Preservation Is Not Component-Wise Recovery.}
We next ask whether the lower MCC of the baselines merely reflects information
loss. The middle row of Fig.~\ref{fig:visual-control-main} shows that all
methods retain substantial information about the underlying joint state when
evaluated with a nonlinear decoder. Importantly, high state information does
not necessarily imply component-wise recovery. For example, on
\textsc{OGB}, PLDM achieves nonlinear state $R^2=0.989$ but an MCC of only
$0.485$. This distinction is consistent with our theoretical analysis:
information preservation alone does not rule out mixing between latent causal
factors.

\paragraph{Action-Conditioned Transition Prediction.}
The bottom row of Fig.~\ref{fig:visual-control-main} evaluates recursive
action-conditioned prediction over increasing rollout horizons. A-JEPA
generally exhibits more stable long-horizon predictions, with particularly
clear improvements on \textsc{Reacher} and \textsc{PushT}. Taken together,
these results show that the improved component-wise recovery of A-JEPA is
achieved while retaining substantial state information and strong predictive
dynamics. We emphasize that this establishes an empirical association rather
than implying that component-wise identifiability alone guarantees improved
rollout performance.

\subsection{Robot Mechanism Transfer.}
A key motivation for learning causal transition mechanisms is their potential to transfer beyond the environments in which they are learned. If the learned representation primarily captures environment-specific visual correlations, its transition dynamics are likely to degrade under environmental changes. In contrast, representations that capture causal action-conditioned mechanisms should support more stable predictive dynamics across such shifts. In our final experiment, we therefore examine whether the learned action-conditioned dynamics transfer beyond the training environments.

\begin{wrapfigure}{r}{0.48\textwidth}
    \centering
    \vspace{-0.8em}
    \includegraphics[width=\linewidth]{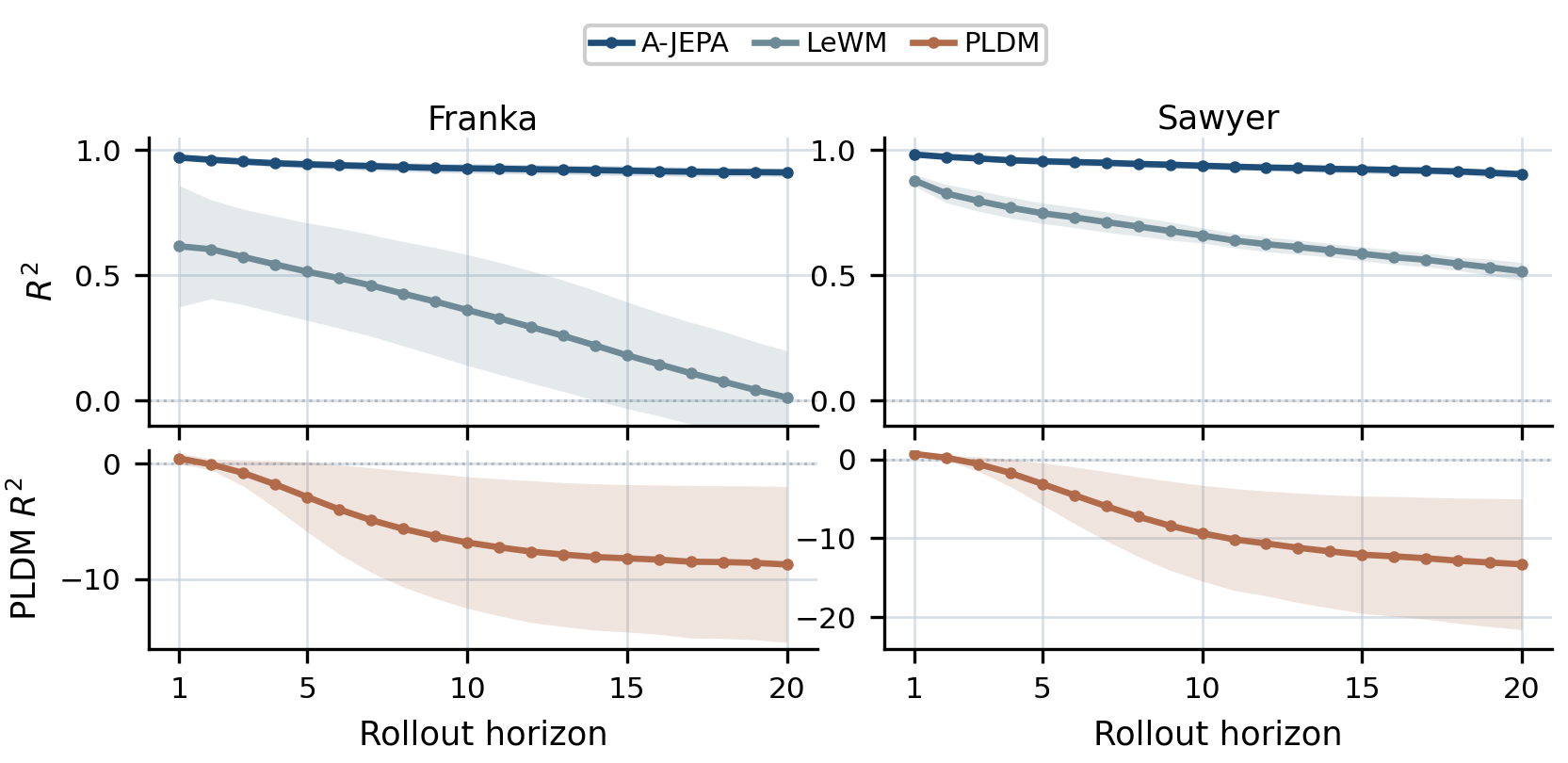}
    \caption{Held-out robot transfer on RoboNet. A-JEPA maintains substantially more stable latent rollout predictions on unseen Franka and Sawyer robots. See Sec.~\ref{app:visual-transfer-details} for details.
    }
    \label{fig:robonet-transfer}
    \vspace{-1.0em}
\end{wrapfigure}
We construct a held-out-robot experiment on the RoboNet dataset~\citep{dasari2019robonet}, which contains trajectories collected from multiple robot platforms. We train the models on Baxter, Kuka, and WidowX and evaluate them without adaptation on held-out Franka and Sawyer robots. This split introduces substantial changes in robot morphology, visual appearance, and embodiment. At test time, all model parameters are frozen. Given the encoded state of a held-out robot and its subsequent action sequence, we recursively roll the learned transition model forward for up to $20$ steps. At each horizon $h$, we compare the predicted latent state $\hat{\mathbf z}_{t+h}$ with the encoding of the corresponding future observation, $\mathbf z_{t+h}=\boldsymbol{\mathrm h}(\mathbf x_{t+h})$, using rollout $R^2$, where larger \(R^2\) values denote more accurate latent rollout prediction. As shown in Fig.~\ref{fig:robonet-transfer}, A-JEPA maintains stable latent
rollout predictions on both held-out robots, with rollout $R^2$ remaining
above $0.90$ at horizon $20$ for both Franka and Sawyer. In contrast,
matched-capacity LeWM and PLDM baselines exhibit substantially stronger
degradation as the rollout horizon increases. Since all methods use the same
encoder architecture and latent dimension and are evaluated without adaptation,
the results provide evidence that A-JEPA learns action-conditioned latent
dynamics with stronger transfer across unseen robot platforms.

\section{Conclusion}
We studied when and how JEPA can recover latent causal states from action-conditioned observations. We developed an information-theoretic objective and established component-wise identifiability under sufficient action-induced variation in the transition mechanisms, which motivates the practical A-JEPA formulation. Experiments on synthetic, visual, and held-out robot settings support the theory and show improved state recovery, robustness, and transfer of the learned dynamics.

\clearpage
\newpage
\section*{AI Use Statement}
Generative AI tools were used to assist with language editing. All scientific ideas, theoretical results, experimental design, analysis, and conclusions were developed and verified by the authors. The authors take full responsibility for the content of this paper.

\section*{Ethics Statement}
This work studies representation learning using publicly available models and datasets. It does not involve human subjects or private personal data.

\section*{Reproducibility Statement}
We provide the assumptions and complete proofs of the theoretical results in the appendix. Detailed experimental protocols, dataset processing procedures, model configurations, and evaluation metrics are also provided in the appendix to facilitate reproduction of our results.



\bibliography{reference}
\bibliographystyle{iclr2027_conference}


\newpage
\appendix
\onecolumn
\part{Appendix} 
\parttoc         %

\newpage
\section{Related Work}
\label{app: rw}
\paragraph{Development of JEPA.}
The JEPA framework was originally inspired by the human brain's perception system~\citep{lecun2022path} to comprehend a dynamic environment in the representation space, which has become a core component of autonomous intelligent systems. Later, JEPA was employed as a self-supervised learning paradigm, such as shown in I-JEPA~\citep{assran2023self}, MC-JEPA~\citep{bardes2023mc}, and LeJEPA~\citep{balestriero2025lejepa}. Empowered by the strong self-supervision capability, JEPA brings significant advancements to various modalities such as video, audio, 3D vision, point cloud, etc.~\citep{bardes2024revisiting, fei2023jepa, hu20243d, saito2025point}, as well as multimodal learning~\citep{lei2024m3, chen2026vl, cornelissen2026mumo}. Further, through large-scale pretraining, it is shown that JEPA is capable of fulfilling a world model~\citep{ha2018world} to enable understanding, predicting, and planning in realistic environments~\citep{assran2025v, mur2026v, zhang2026thinkjepa, maes2026leworldmodel}. Recently, the significance of JEPA has extended beyond AI, touching on realistic and fundamental applications such as medical imaging, spatial transcriptomics, antibody engineering, healthcare, and particle physics~\citep{zhou2026beyond, rabinowitz2026kukulu, zhou2026cellos, birk2026multi, shihabi2026hi, kim2026lightweight, letellier2026jetparticle}. The proposed A-JEPA serves as a backbone with provable identifiability guarantees and might be readily adapted to a broad range of JEPA variants. Under the mild conditions specified in Assumptions~\ref{assump:regularity} and~\ref{assump:action-variability}, A-JEPA remains applicable to practical settings involving model scaling, dataset scaling, and multimodal learning. Overall, this work advances JEPA research by combining broadly generalizable empirical improvements with rigorous theoretical guarantees, while providing valuable insights into action-based JEPA training.

Recent theoretical analysis of JEPA provides latent-variable recovery guarantees up to an orthogonal transformation~\citep{klindt2026does}, under the assumption of Gaussian latent dynamics. Related work on controlled world models considers Gaussian latent states and establishes joint state and transition identifiability under spectral separation and non-degenerate conditional action excitation, again up to an orthogonal transformation~\citep{zhang2026identifiability}. Another line of work removes the Gaussian latent-distribution assumption by introducing a physics-grounded symbolic world model, under the assumption that a predefined symbolic atom library contains a causal basis for the underlying world generator~\citep{dobrin2026identifiability}. In contrast, our analysis assumes neither a Gaussian form for the latent transition mechanisms nor a known symbolic basis, and establishes component-wise identifiability for general nonparametric causal transitions.

\paragraph{Nonlinear ICA.}
This work is related to nonlinear ICA
~\citep{hyvarinen2016unsupervised,hyvarinen2019nonlinear,hyvarinen2017nonlinear,khemakhem2020variational}, which studies the identifiability of latent components from nonlinear observations, typically under assumptions of mutual independence among latents. In particular, two key assumptions in our analysis, Assumptions~\ref{assump:regularity} and~\ref{assump:action-variability}, are closely related to the invertible observation mapping and variability conditions commonly used in nonlinear ICA identifiability theory. Our setting differs in that the latent variables are causal states whose transition mechanisms may depend on one another through a temporal causal graph, rather than mutually independent latent components, which introduces additional structure and also solution ambiguity.

\paragraph{Causal Representation Learning} This work is also related to causal representation learning (CRL)~\citep{scholkopf2021toward}, which aims to recover latent causal variables from observations. A broad line of work has established identifiability results by exploiting temporal dependence ~\citep{yao2021learning,song2023temporally,chen2024caring,li2025identification,lippe2022citris,lachapelle2022disentanglement}. Different from most of these works, which typically establish identifiability through likelihood matching in observational space, we study identifiability within the JEPA framework, where representations are identified through action-conditioned prediction in latent
space. Further, another line of work exploits interventions or multiple environments for identifiability ~\citep{brehmer2022weakly,von2024nonparametric,ahuja2023interventional,seigal2022linear,buchholz2023learning,varici2023score,shen2022weakly,liuidentifiable,JMLR:v26:22-0136}. Many of these works focus on instantaneous causal structures. while we consider temporal causal states and exploit action-induced variation in these mechanisms for identifiability.

\paragraph{Model-Based Reinforcement Learning}
Action-conditioned latent prediction has been extensively studied in model-based reinforcement learning. PlaNet and Dreamer learn latent dynamics by matching an action-conditioned predictive prior to an observation-conditioned posterior, enabling planning or policy learning through imagined latent trajectories \citep{hafner2019learning,hafner2019dream,hafner2025mastering}. Other world-model approaches also learn action-conditioned latent dynamics without requiring full observation reconstruction: MuZero learns latent dynamics specialized for predicting quantities relevant to planning \citep{schrittwieser2020mastering}, while TPC, EfficientZero, and TD-MPC/TD-MPC2 employ temporal prediction, latent consistency, or task-oriented latent dynamics for planning and control \citep{nguyen2021temporal,ye2021mastering,hansen2022temporal,hansen2024td}. Closely related self-predictive approaches such as SPR explicitly predict future latent representations through learned transition models \citep{schwarzer2020data}, while reconstruction-free representation-learning methods based on bisimulation seek task-relevant latent abstractions without pixel decoding \citep{zhang2020learning}. More recently, DreamerPro and R2-Dreamer replace observation reconstruction with self-supervised representation regularization \citep{deng2022dreamerpro,morihira2026r2}. These works demonstrate a broader pattern of learning action-conditioned latent dynamics together with mechanisms that preserve task-relevant or non-degenerate representations. However, they primarily target predictive and control performance rather than establishing recovery of the underlying causal state up to component-wise transformations. Our results address this identifiability question for this broader family of predictive representation objectives.

\section{Limitations}
\label{app:limit}

One of the main contributions of this work is establishing a component-wise identifiability result under the JEPA framework. As in existing identifiability analyses, our result relies on several assumptions on the underlying generative process. In particular, we assume that the observation mapping is smooth and invertible, such that information about the latent state is preserved through the observation process. Closely related assumptions are standard in nonlinear ICA and causal representation learning~\citep{hyvarinen2016unsupervised,hyvarinen2019nonlinear,khemakhem2020variational,von2024nonparametric,JMLR:v26:22-0136}. We further require sufficient action-induced variation in the latent transition mechanisms, formalized through a full-rank variability condition. This type of variability condition also has a well-established precedent: it originates from identifiability analyses for nonlinear ICA ~\citep{hyvarinen2016unsupervised,hyvarinen2019nonlinear, khemakhem2020variational} and has subsequently been adapted, in different forms, to causal representation learning ~\citep{zhang2024causal,JMLR:v26:22-0136,yao2021learning}.

As in other identifiability analyses of latent-variable models, directly verifying these assumptions in real-world settings is inherently challenging, because the underlying latent generative process is unobserved. Nevertheless, our controlled experiments show strong recovery under the matched setting and robustness when observation invertibility, action-induced mechanism variation, and transition-noise specification are moderately perturbed. Developing weaker identifiability conditions and tighter connections between the general objective and practical estimators remains an important direction for future work.

\newpage
\section{Proof of Theorem~\ref{thm:optimal-solution}}
\label{app:proof-optimal-solution}

\begin{proof}
For brevity, let $\mathbf c_t := (\mathbf z_{t-1},\mathbf a_{t-1})$. For a given encoder $\boldsymbol{\mathrm h}$, $Q_{\boldsymbol{\mathrm h}}(\mathbf z_t\mid\mathbf c_t)$ denotes the conditional transition distribution induced in the representation space. Then, the conditional negative log-likelihood in the proposed objective Eq.~\eqref{eq:information-jepa-objective} can be decomposed as
\begin{align}
-\mathbb E \left[ \log p_{\boldsymbol{\phi}}(\mathbf z_t\mid\mathbf c_t) \right]=& -\mathbb E_{\mathbf c_t}
\mathbb E_{\mathbf z_t\sim Q_{\boldsymbol{\mathrm h}}(\cdot\mid\mathbf c_t)} \left[ \log Q_{\boldsymbol{\mathrm h}}
(\mathbf z_t\mid\mathbf c_t) \right]+ \mathbb E_{\mathbf c_t} \mathbb E_{\mathbf z_t\sim Q_{\boldsymbol{\mathrm h}}(\cdot\mid\mathbf c_t)} \left[ \log \frac{ Q_{\boldsymbol{\mathrm h}} (\mathbf z_t\mid\mathbf c_t) }{
p_{\boldsymbol{\phi}} (\mathbf z_t\mid\mathbf c_t) } \right] \nonumber\\
=& H(\mathbf z_t\mid\mathbf c_t) + \mathbb E_{\mathbf c_t} \left[ D_{\mathrm{KL}} \left( Q_{\boldsymbol{\mathrm h}}(\cdot\mid\mathbf c_t) \Vert p_{\boldsymbol{\phi}}(\cdot\mid\mathbf c_t) \right) \right].
\label{eq:proof-cross-entropy}
\end{align}
Substituting Eq.~\eqref{eq:proof-cross-entropy} into the proposed objective Eq.~\eqref{eq:information-jepa-objective} and using $H(\mathbf z_t\mid\mathbf c_t) = H(\mathbf z_t) - I(\mathbf z_t;\mathbf c_t)$, we obtain
\begin{align}
\mathcal L_{\lambda}(\boldsymbol{\mathrm h},\boldsymbol{\phi}) = - I(\mathbf z_t;\mathbf z_{t-1},\mathbf a_{t-1})
+ \mathbb E_{\mathbf c_t} \left[ D_{\mathrm{KL}} \left( Q_{\boldsymbol{\mathrm h}}(\cdot\mid\mathbf c_t)
\,\Vert\, p_{\boldsymbol{\phi}}(\cdot\mid\mathbf c_t) \right) \right]+ (\lambda-1)\big(-H(\mathbf z_t)\big).
\label{eq:proof-objective-decomposition}
\end{align}

Since $\mathbf z_t=\boldsymbol{\mathrm h}(\boldsymbol{\mathrm g}(\mathbf s_t))$ is a deterministic function of $\mathbf s_t$, and thus $(\mathbf z_{t-1},\mathbf a_{t-1})$ is a deterministic function of $(\mathbf s_{t-1},\mathbf a_{t-1})$, applying the data processing inequality to the two arguments gives
\begin{align}
I(\mathbf z_t;\mathbf z_{t-1},\mathbf a_{t-1})
\le I(\mathbf s_t;\mathbf z_{t-1},\mathbf a_{t-1}) \le
I(\mathbf s_t;\mathbf s_{t-1},\mathbf a_{t-1}).
\label{eq:proof-dpi}
\end{align}
Moreover, because
$\boldsymbol{\mathrm h}:\mathcal X\rightarrow(0,1)^d$,
the differential entropy of $\mathbf z_t$ satisfies
\begin{equation}
H(\mathbf z_t)
\leq
\log \operatorname{Vol}((0,1)^d)
=
0.
\label{eq:proof-entropy-bound}
\end{equation}
Together with the non-negativity of the KL divergence and the data-processing
inequality in Eq.~\eqref{eq:proof-dpi}, Eq.~\eqref{eq:proof-objective-decomposition}
gives
\begin{align}
\mathcal L_{\lambda}(\boldsymbol{\mathrm h},\boldsymbol{\phi}) + I(\mathbf s_t;\mathbf s_{t-1},\mathbf a_{t-1})
=&\; \underbrace{I(\mathbf s_t;\mathbf s_{t-1},\mathbf a_{t-1})-I(\mathbf z_t;\mathbf z_{t-1},\mathbf a_{t-1})}_{\geq 0}
\nonumber\\
&+\underbrace{\mathbb E_{\mathbf c_t}\left[D_{\mathrm{KL}}\left(Q_{\boldsymbol{\mathrm h}}(\cdot\mid\mathbf c_t)\Vert
p_{\boldsymbol{\phi}}(\cdot\mid\mathbf c_t)\right)\right]}_{\geq 0}\nonumber\\
&+\underbrace{(\lambda-1)\big(-H(\mathbf z_t)\big)}_{\geq 0},
\label{eq:proof-nonnegative-terms}
\end{align}
where the last term is non-negative when $\lambda\geq1$. As a result, we have:
\begin{equation}
\mathcal L_{\lambda}(\boldsymbol{\mathrm h},\boldsymbol{\phi}) \geq - I(\mathbf s_t;\mathbf s_{t-1},\mathbf a_{t-1}).
\end{equation}

When this lower bound is attainable, any global minimizer attaining the bound must make all three non-negative terms in Eq.~\eqref{eq:proof-nonnegative-terms} equal to zero. That is,
\begin{equation}
I(\mathbf z_t;\mathbf z_{t-1},\mathbf a_{t-1}) = I(\mathbf s_t;\mathbf s_{t-1},\mathbf a_{t-1}),
\label{app:eq:info}
\end{equation}
and
\begin{equation}
p_{\boldsymbol{\phi}^{\star}} (\mathbf z_t\mid\mathbf z_{t-1},\mathbf a_{t-1}) = Q_{\boldsymbol{\mathrm h}^{\star}} (\mathbf z_t\mid\mathbf z_{t-1},\mathbf a_{t-1}) \quad\mathrm{a.e.}
\end{equation}
This completes the proof. As an additional consequence, when $\lambda>1$, Eq.~\eqref{eq:proof-nonnegative-terms} also implies $H(\mathbf z_t)=0$. Since the maximum differential entropy over
$(0,1)^d$ is $0$, attained uniquely by the uniform distribution, we obtain $\mathbf z_t\sim\mathrm{Unif}((0,1)^d)$.
\end{proof}

\newpage
\section{Proof of Theorem~\ref{thm:componentwise-identifiability}}
\label{app:componentwise-identifiability}
\paragraph{Proof sketch.}
The proof proceeds in three steps.

\textbf{Step I: Local invertibility.}
Using the information-preservation result in
Theorem~\ref{thm:optimal-solution} together with sufficient action
variability, we show that the Jacobian of
$\boldsymbol r=\boldsymbol h\circ\boldsymbol g$ has full rank everywhere, which implies that $\boldsymbol r$ is locally invertible.

\textbf{Step II: Ruling out cross-component mixing.}
Using the local inverse of $\boldsymbol r$, together with the
sufficient action variability, i.e., Assumption~\ref{assump:action-variability}, we show that each row of the Jacobian of
the local inverse contains at most one nonzero entry.

\textbf{Step III: Component-wise identifiability.} Combining Steps~I and~II, the Jacobian of the local inverse must be a generalized permutation matrix. Under regularity of the latent domain, the permutation pattern is fixed throughout the domain, yielding component-wise invertible transformations of the latent causal states.
\begin{proof}
\noindent\textbf{Step I: Local invertibility of $\boldsymbol r$.} Let $\boldsymbol r=\boldsymbol h\circ\boldsymbol g$, $\mathbf z_t=\boldsymbol r(\mathbf s_t)$. By Theorem~\ref{thm:optimal-solution}, specifically
Eq.~\eqref{app:eq:info}, every global minimizer satisfies:
\begin{equation}
I(\mathbf z_t;\mathbf z_{t-1},\mathbf a_{t-1}) = I(\mathbf s_t;\mathbf s_{t-1},\mathbf a_{t-1}).
\label{eq:proof-mi-equality}
\end{equation}
Since $\mathbf z_t$ is a deterministic function of $\mathbf s_t$, and accordingly, $(\mathbf z_{t-1},\mathbf a_{t-1})$ is a deterministic function of $(\mathbf s_{t-1},\mathbf a_{t-1})$, the data-processing inequality gives
\begin{equation}
I(\mathbf z_t;\mathbf z_{t-1},\mathbf a_{t-1}) \leq I(\mathbf z_t;\mathbf s_{t-1},\mathbf a_{t-1}) \leq I(\mathbf s_t;\mathbf s_{t-1},\mathbf a_{t-1}).
\end{equation}
Together with Eq.~\eqref{eq:proof-mi-equality}, both inequalities must be equalities, as follows:
\begin{equation}
I(\mathbf z_t;\mathbf z_{t-1},\mathbf a_{t-1}) = I(\mathbf z_t;\mathbf s_{t-1},\mathbf a_{t-1}) = I(\mathbf s_t;\mathbf s_{t-1},\mathbf a_{t-1}).
\label{app:eq:infoequ}
\end{equation}

By the chain rule for mutual information, i.e.,
$I(X;Z\mid Y)=I(X,Y;Z)-I(Y;Z)$, together with
$\mathbf z_t=\boldsymbol r(\mathbf s_t)$, we have
\begin{align}
I(\mathbf s_t;\mathbf s_{t-1},\mathbf a_{t-1}\mid\mathbf z_t)
&=I(\mathbf s_t,\mathbf z_t;\mathbf s_{t-1},\mathbf a_{t-1})-I(\mathbf z_t;\mathbf s_{t-1},\mathbf a_{t-1}) \label{eq:proof-sufficiency-1:1} \\
&=\underbrace{I(\mathbf s_t;\mathbf s_{t-1},\mathbf a_{t-1})-I(\mathbf z_t;\mathbf s_{t-1},\mathbf a_{t-1})}_{=\,0\ \text{by Eq.~\eqref{app:eq:infoequ}}} \label{eq:proof-sufficiency-1:2} \\
&=0.
\label{eq:proof-sufficiency-1}
\end{align}
Here, $I(\mathbf s_t,\mathbf z_t; \mathbf s_{t-1},\mathbf a_{t-1})$ in Eq.~\eqref{eq:proof-sufficiency-1:1} is equal to $ I(\mathbf s_t; \mathbf s_{t-1},\mathbf a_{t-1})$ in Eq.~\eqref{eq:proof-sufficiency-1:2}, because $\mathbf z_t=\boldsymbol r(\mathbf s_t)$ is a deterministic function of $\mathbf s_t$. Similarly, since $\mathbf z_{t-1}=\boldsymbol r(\mathbf s_{t-1})$, we have
\begin{align}
I(\mathbf z_t;\mathbf s_{t-1}\mid
\mathbf z_{t-1},\mathbf a_{t-1})
&=I(\mathbf z_t;\mathbf s_{t-1},\mathbf z_{t-1},\mathbf a_{t-1})
-I(\mathbf z_t;\mathbf z_{t-1},\mathbf a_{t-1})\label{eq:proof-sufficiency-2:1}\\
&=\underbrace{I(\mathbf z_t;\mathbf s_{t-1},\mathbf a_{t-1})-I(\mathbf z_t; \mathbf z_{t-1},\mathbf a_{t-1})}_{=\,0\ \text{by Eq.~\eqref{app:eq:infoequ}}}\label{eq:proof-sufficiency-2:2}\\
&=0.
\label{eq:proof-sufficiency-2}
\end{align}
Similarly, $I(\mathbf z_t; \mathbf s_{t-1},\mathbf z_{t-1},\mathbf a_{t-1}) $ in Eq.~\eqref{eq:proof-sufficiency-2:1} is equal to $I(\mathbf z_t; \mathbf s_{t-1},\mathbf a_{t-1})$ in Eq.~\eqref{eq:proof-sufficiency-2:2}, because $\mathbf z_{t-1}=\boldsymbol r(\mathbf s_{t-1})$ is a deterministic function of $\mathbf s_{t-1}$.

Eq.~\eqref{eq:proof-sufficiency-1} implies
\begin{align}
p(\mathbf s_{t-1},\mathbf a_{t-1}\mid
\mathbf s_t,\mathbf z_t) = p(\mathbf s_{t-1},\mathbf a_{t-1}\mid\mathbf z_t).
\label{eq:proof-sufficiency-22}
\end{align}
Since $\mathbf z_t=\boldsymbol r(\mathbf s_t)$ is deterministic given
$\mathbf s_t$, the left-hand side in Eq.~\eqref{eq:proof-sufficiency-22} equals
$p(\mathbf s_{t-1},\mathbf a_{t-1}\mid\mathbf s_t)$, yielding
\begin{align}
p(\mathbf s_{t-1},\mathbf a_{t-1}\mid\mathbf s_t)= p(\mathbf s_{t-1},\mathbf a_{t-1}\mid\mathbf z_t).
\end{align}
Applying Bayes' rule to both sides gives
\begin{align}
\frac{p(\mathbf s_t\mid\mathbf s_{t-1},\mathbf a_{t-1})p(\mathbf s_{t-1},\mathbf a_{t-1})}{p(\mathbf s_t)}
=\frac{p(\mathbf z_t\mid\mathbf s_{t-1},\mathbf a_{t-1})p(\mathbf s_{t-1},\mathbf a_{t-1})}{p(\mathbf z_t)}.
\end{align}
Canceling the common term yields
\begin{equation}
\frac{p(\mathbf s_t\mid\mathbf s_{t-1},\mathbf a_{t-1})}{p(\mathbf s_t)}
=\frac{p(\mathbf z_t\mid\mathbf s_{t-1},\mathbf a_{t-1})}{p(\mathbf z_t)}.
\label{eq:proof-ratio-intermediate}
\end{equation}

Similarly, Eq.~\eqref{eq:proof-sufficiency-2} implies $p(\mathbf z_t\mid \mathbf s_{t-1},\mathbf z_{t-1},\mathbf a_{t-1}) = p(\mathbf z_t\mid \mathbf z_{t-1},\mathbf a_{t-1})$. Since $\mathbf z_{t-1}=\boldsymbol r(\mathbf s_{t-1})$ is deterministic given $\mathbf s_{t-1}$, the left-hand side reduces to $p(\mathbf z_t\mid\mathbf s_{t-1},\mathbf a_{t-1})$. Hence,
\begin{equation}
p(\mathbf z_t\mid\mathbf s_{t-1},\mathbf a_{t-1}) = p(\mathbf z_t\mid\mathbf z_{t-1},\mathbf a_{t-1}) = Q_{\boldsymbol h} (\mathbf z_t\mid\mathbf z_{t-1},\mathbf a_{t-1}).
\label{eq:proof-cond-independence-2}
\end{equation}

Substituting Eq.~\eqref{eq:proof-cond-independence-2} into Eq.~\eqref{eq:proof-ratio-intermediate} gives
\begin{equation}
\frac{ p(\mathbf s_t\mid\mathbf s_{t-1},\mathbf a_{t-1}) }{ p(\mathbf s_t) } = \frac{ Q_{\boldsymbol h} (\mathbf z_t\mid\mathbf z_{t-1},\mathbf a_{t-1})}{ p(\mathbf z_t)}.
\label{eq:proof-likelihood-ratio}
\end{equation}

We next exploit the action variability in Assumption~\ref{assump:action-variability}. Fix $(\mathbf s_t,\mathbf s_{t-1})$ and consider the corresponding $2d+1$ actions $\mathbf a_{t-1}^{(0)},\ldots,\mathbf a_{t-1}^{(2d)}$.
For each $\ell=1,\ldots,2d$, taking logarithms of Eq.~\eqref{eq:proof-likelihood-ratio} and subtracting the equation under $\mathbf a_{t-1}^{(0)}$ from that under $\mathbf a_{t-1}^{(\ell)}$ gives
\begin{align}
&\log p(\mathbf s_t\mid\mathbf s_{t-1},\mathbf a_{t-1}^{(\ell)})
-\log p(\mathbf s_t\mid\mathbf s_{t-1},\mathbf a_{t-1}^{(0)}) \nonumber\\
&\qquad= \log Q_{\boldsymbol h} (\mathbf z_t\mid\mathbf z_{t-1},\mathbf a_{t-1}^{(\ell)}) - \log Q_{\boldsymbol h} (\mathbf z_t\mid\mathbf z_{t-1},\mathbf a_{t-1}^{(0)}).
\label{eq:proof-action-difference}
\end{align}

Using the latent transition factorization, $\log p(\mathbf s_t\mid\mathbf s_{t-1},\mathbf a_{t-1})=\sum_{i=1}^d q_i$, where $q_i$ is defined in Assumption~\ref{assump:action-variability}. Differentiating Eq.~\eqref{eq:proof-action-difference} with respect to $\mathbf s_t$ and using $\mathbf z_t=\boldsymbol r(\mathbf s_t)$ gives
\begin{align}
&
\begin{pmatrix}
q_1'(\mathbf a_{t-1}^{(\ell)})
-
q_1'(\mathbf a_{t-1}^{(0)})
\\
\vdots\\
q_d'(\mathbf a_{t-1}^{(\ell)})
-
q_d'(\mathbf a_{t-1}^{(0)})
\end{pmatrix}
=
J_{\boldsymbol r}(\mathbf s_t)^\top
\Big[
\nabla_{\mathbf z_t}
\log Q_{\boldsymbol h}
(\mathbf z_t\mid\mathbf z_{t-1},\mathbf a_{t-1}^{(\ell)})
-
\nabla_{\mathbf z_t}
\log Q_{\boldsymbol h}
(\mathbf z_t\mid\mathbf z_{t-1},\mathbf a_{t-1}^{(0)})
\Big],
\label{eq:proof-gradient-relation}
\end{align}
where $ q_i' = \frac{\partial q_i}{\partial s_{t,i}}$, and $\mathbf z_{t-1}$ is fixed when differentiating with respect to $\mathbf s_t$.

For brevity, with $(\mathbf s_t,\mathbf s_{t-1})$ fixed, write $\mathbf w(\mathbf a_{t-1}) = \mathbf w(\mathbf s_t,\mathbf s_{t-1},\mathbf a_{t-1})$. By Assumption~\ref{assump:action-variability}, the matrix
\begin{align}
\boldsymbol L
=
\left[
\boldsymbol w(\mathbf a_{t-1}^{(1)})
-
\boldsymbol w(\mathbf a_{t-1}^{(0)}),
\ldots,
\boldsymbol w(\mathbf a_{t-1}^{(2d)})
-
\boldsymbol w(\mathbf a_{t-1}^{(0)})
\right]
\end{align}
is invertible, where $ \boldsymbol w = (q_1',\ldots,q_d',q_1'',\ldots,q_d'')^\top$.

Since $\boldsymbol L$ is invertible, the submatrix formed by its first
$d$ rows has rank $d$. Its columns are
\begin{align}
\mathbf v_\ell
=
\begin{pmatrix}
q_1'(\mathbf a_{t-1}^{(\ell)})-q_1'(\mathbf a_{t-1}^{(0)})\\
\vdots\\
q_d'(\mathbf a_{t-1}^{(\ell)})-q_d'(\mathbf a_{t-1}^{(0)})
\end{pmatrix},
\qquad
\ell=1,\ldots,2d,
\end{align}
and therefore span $\mathbb R^d$. By Eq.~\eqref{eq:proof-gradient-relation}, each $\mathbf v_\ell$ can be
written as $\mathbf v_\ell=J_{\boldsymbol r}(\mathbf s_t)^\top \mathbf u_\ell$ for some $\mathbf u_\ell\in\mathbb R^d$. Hence, stacking these vectors column-wise gives
\begin{align}
\boldsymbol V
=
J_{\boldsymbol r}(\mathbf s_t)^\top \boldsymbol U,
\end{align}
with $\operatorname{rank}(\boldsymbol V)=d$. Therefore,
\begin{align}
d
=
\operatorname{rank}(\boldsymbol V)
\leq
\operatorname{rank}(J_{\boldsymbol r}(\mathbf s_t)^\top)
\leq d,
\end{align}
which implies
\begin{equation}
\operatorname{rank}J_{\boldsymbol r}(\mathbf s_t)=d.
\label{eq:proof-full-rank}
\end{equation}
Since the above argument applies throughout the latent support, $J_{\boldsymbol r}(\mathbf s_t)$ has full rank everywhere. By the inverse function theorem, $\boldsymbol r$ is locally invertible around every point in the latent support.

\noindent\textbf{Step II: Ruling out cross-component mixing.}
Consider an arbitrary neighborhood on which a local inverse
$\boldsymbol T$ of $\boldsymbol r$ is defined. Using
$\mathbf s_t=\boldsymbol T(\mathbf z_t)$,
Eq.~\eqref{eq:proof-action-difference} becomes
\begin{align}
\sum_{i=1}^d
\left[
q_i(\mathbf a_{t-1}^{(\ell)})
-
q_i(\mathbf a_{t-1}^{(0)})
\right]
=
\log Q_{\boldsymbol h}
(\mathbf z_t\mid\mathbf z_{t-1},\mathbf a_{t-1}^{(\ell)})
\quad-
\log Q_{\boldsymbol h}
(\mathbf z_t\mid\mathbf z_{t-1},\mathbf a_{t-1}^{(0)}),
\label{eq:proof-local-action-difference}
\end{align}
where the left-hand side is evaluated at
$s_{t,i}=T_i(\mathbf z_t)$.

At the lower-bound-attaining global optimum,
Theorem~\ref{thm:optimal-solution} gives
$Q_{\boldsymbol h}=p_{\boldsymbol\phi}$ a.e.
Moreover, the transition model factorizes across representation components:
\begin{equation}
p_{\boldsymbol\phi}
(\mathbf z_t\mid\mathbf z_{t-1},\mathbf a_{t-1})
=
\prod_{j=1}^d
p_{\boldsymbol\phi,j}
(z_{t,j}\mid\mathbf z_{t-1},\mathbf a_{t-1}).
\label{eq:proof-factorized-transition}
\end{equation}
Hence, for any $j\neq k$,
\begin{equation}
\frac{\partial^2}
{\partial z_{t,j}\partial z_{t,k}}
\log Q_{\boldsymbol h}
(\mathbf z_t\mid\mathbf z_{t-1},\mathbf a_{t-1})
=
0.
\label{eq:proof-zero-mixed}
\end{equation}

Taking the mixed derivative of
Eq.~\eqref{eq:proof-local-action-difference} with respect to
$z_{t,j}$ and $z_{t,k}$ gives
\begin{align}
\sum_{i=1}^d
\Bigg[
&
\left(
q_i'(\mathbf a_{t-1}^{(\ell)})
-
q_i'(\mathbf a_{t-1}^{(0)})
\right)
\frac{\partial^2T_i}
{\partial z_{t,j}\partial z_{t,k}}
+
\left(
q_i''(\mathbf a_{t-1}^{(\ell)})
-
q_i''(\mathbf a_{t-1}^{(0)})
\right)
\frac{\partial T_i}{\partial z_{t,j}}
\frac{\partial T_i}{\partial z_{t,k}}
\Bigg]
=
0.
\label{eq:proof-mixed-derivative}
\end{align}

For each $j\neq k$, define
\begin{equation}
\boldsymbol v_{jk}
=
\left(
\frac{\partial^2T_1}{\partial z_{t,j}\partial z_{t,k}},
\ldots,
\frac{\partial^2T_d}{\partial z_{t,j}\partial z_{t,k}},
\frac{\partial T_1}{\partial z_{t,j}}
\frac{\partial T_1}{\partial z_{t,k}},
\ldots,
\frac{\partial T_d}{\partial z_{t,j}}
\frac{\partial T_d}{\partial z_{t,k}}
\right)^\top.
\end{equation}
Evaluating Eq.~\eqref{eq:proof-mixed-derivative} for
$\ell=1,\ldots,2d$ yields
\begin{equation}
\boldsymbol L^\top\boldsymbol v_{jk}=0.
\end{equation}
Since $\boldsymbol L$ is invertible,
$\boldsymbol v_{jk}=0$. In particular,
\begin{equation}
\frac{\partial T_i}{\partial z_{t,j}}
\frac{\partial T_i}{\partial z_{t,k}}
=
0,
\qquad
\forall i,\quad j\neq k.
\label{eq:proof-no-mixing}
\end{equation}
Thus, for any fixed $i$, two distinct partial derivatives of $T_i$
cannot be nonzero simultaneously. Hence, each row of
$J_{\boldsymbol T}(\mathbf z_t)$ contains at most one nonzero entry.

\medskip
\noindent\textbf{Step III: Component-wise identifiability.}
By Step~I, $\boldsymbol r$ is locally invertible, so on every neighborhood
where $\boldsymbol T=\boldsymbol r^{-1}$ is defined, $J_{\boldsymbol T}(\mathbf z_t)$ is nonsingular. Step~II shows that each row of $J_{\boldsymbol T}(\mathbf z_t)$ contains at most one nonzero entry. Nonsingularity rules out zero rows, so each row contains exactly one nonzero entry. Since the matrix is square and
nonsingular, each column also contains exactly one nonzero entry. Thus, $J_{\boldsymbol T}(\mathbf z_t)$ is a generalized permutation matrix. Its inverse
$ J_{\boldsymbol r}(\mathbf s_t)= J_{\boldsymbol T}(\mathbf z_t)^{-1}$
is also a generalized permutation matrix. Hence, locally, each component
$r_i$ depends on exactly one latent coordinate.

Since $J_{\boldsymbol r}$ is continuous and nonsingular, its generalized-permutation pattern is locally constant. On a connected latent domain, this pattern is constant throughout the domain. Hence, there exists
a fixed permutation $\pi$ such that, for all $i=1,\ldots,d$, 
\begin{equation}
\frac{\partial r_i}{\partial s_j}=0
\quad\text{for }j\neq\pi(i),
\qquad
\frac{\partial r_i}{\partial s_{\pi(i)}}\neq0.
\label{eq:proof-fixed-permutation}
\end{equation}

On a connected product domain, Eq.~\eqref{eq:proof-fixed-permutation} implies that each $r_i$ depends only on $s_{\pi(i)}$. Thus, there exist one-dimensional functions $\rho_1,\ldots,\rho_d$ such that
\begin{equation}
z_{t,i}
=
\rho_i(s_{t,\pi(i)}),
\qquad i=1,\ldots,d.
\end{equation}
Furthermore, $\rho_i'(s_{t,\pi(i)})= \frac{\partial r_i}{\partial s_{\pi(i)}}\neq0$. Since each one-dimensional latent domain is connected and $\rho_i'$ is continuous and nowhere zero, $\rho_i$ is strictly monotone and invertible. Consequently,
\begin{equation}
\mathbf z_t
=
\left(
\rho_1(s_{t,\pi(1)}),
\ldots,
\rho_d(s_{t,\pi(d)})
\right),
\end{equation}
which establishes component-wise identifiability up to permutation and
component-wise invertible transformations.
\end{proof}

\newpage
\section{Negative-Sample Contrastive Approximation to Entropy}
\label{app:contrastive-entropy}

We provide a theoretical interpretation of the Negative-Sample contrastive term used in Eq.~\eqref{eq:gaussian-contrastive-loss}. The argument follows the kernel-density interpretation of contrastive uniformity in
\citet{wang2020understanding}, while adapting it to our action-conditioned transition likelihood.

Consider an isotropic Gaussian transition model
\begin{equation}
p_{\boldsymbol\phi}(\mathbf z_t\mid \mathbf z_{t-1},\mathbf a_{t-1} )
=
\mathcal N
\left(
\mathbf z_t;
\boldsymbol\mu_{\boldsymbol\phi}(\mathbf z_{t-1},\mathbf a_{t-1}),
\sigma^2\mathbf I
\right).
\end{equation}
Its negative log-likelihood is
\begin{equation}
\ell_{\boldsymbol\phi}(\mathbf z\mid \mathbf z_{t-1},\mathbf a_{t-1})
=
\frac{1}{2\sigma^2}
\left\|
\mathbf z-\boldsymbol\mu_{\boldsymbol\phi}(\mathbf z_{t-1},\mathbf a_{t-1})
\right\|_2^2
+
\frac d2\log(2\pi\sigma^2).
\end{equation}

\begin{proposition}[Entropy interpretation of the contrastive term]
\label{prop:contrastive-entropy}
Let
$\mathbf z^{(1)},\ldots,\mathbf z^{(K)}$
be i.i.d. samples from the marginal representation distribution
$p_{\mathbf z}$. Suppose that the transition prediction is locally accurate, such that
$\boldsymbol\mu_{\boldsymbol\phi}(\mathbf z_{t-1},\mathbf a_{t-1})=\mathbf z_t$.
Define
\begin{equation}
\mathcal U_K
=
\log
\left[
\frac1K
\sum_{k=1}^K
\exp
\left(
-\frac{
\ell_{\boldsymbol\phi}
(\mathbf z^{(k)}\mid \mathbf z_{t-1},\mathbf a_{t-1} )
}{\tau}
\right)
\right].
\end{equation}
Then, as $K\rightarrow\infty$,
\begin{equation}
\mathcal U_K
\longrightarrow
\log
\left[
(p_{\mathbf z}*\varphi_h)(\mathbf z_t)
\right]
+
C_{\sigma,\tau},
\qquad
h^2=\tau\sigma^2,
\label{eq:contrastive-kde-limit}
\end{equation}
where $\varphi_h$ is the Gaussian kernel with bandwidth $h$ and
$C_{\sigma,\tau}$ is independent of the representation distribution.
Consequently,
\begin{equation}
\mathbb E_{\mathbf z_t}[\mathcal U_K]
\longrightarrow
-H_h(\mathbf z_t)+C_{\sigma,\tau},
\end{equation}
where
\begin{equation}
H_h(\mathbf z_t)
:=
-
\mathbb E_{\mathbf z_t}
\log
\left[
(p_{\mathbf z}*\varphi_h)(\mathbf z_t)
\right]
\end{equation}
is the Gaussian-kernel resubstitution entropy.
Under the standard regularity conditions for kernel-density estimation,
as $h\rightarrow0$,
\begin{equation}
H_h(\mathbf z_t)\longrightarrow H(\mathbf z_t).
\end{equation}
Thus, up to a constant, minimizing the
negative-sample term asymptotically corresponds to maximizing the
representation entropy.
\end{proposition}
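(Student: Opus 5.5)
We write out the negative-sample term explicitly under the isotropic Gaussian model with $\boldsymbol\mu_{\boldsymbol\phi}(\mathbf z_{t-1},\mathbf a_{t-1})=\mathbf z_t$. Each summand becomes
\begin{equation*}
\exp\!\left(-\frac{\ell_{\boldsymbol\phi}(\mathbf z^{(k)}\mid\mathbf z_{t-1},\mathbf a_{t-1})}{\tau}\right)
=(2\pi\sigma^2)^{-d/(2\tau)}\exp\!\left(-\frac{\|\mathbf z^{(k)}-\mathbf z_t\|_2^2}{2\tau\sigma^2}\right).
\end{equation*}
With $h^2=\tau\sigma^2$ and $\varphi_h(\mathbf u)=(2\pi h^2)^{-d/2}\exp(-\|\mathbf u\|^2/(2h^2))$, this equals $c_{\sigma,\tau}\,\varphi_h(\mathbf z^{(k)}-\mathbf z_t)$, where
\begin{equation*}
c_{\sigma,\tau}=(2\pi\sigma^2)^{-d/(2\tau)}(2\pi\tau\sigma^2)^{d/2}.
\end{equation*}
The constant depends only on $(\sigma,\tau,d)$ and not on $p_{\mathbf z}$. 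Hence
\begin{equation*}
\mathcal U_K=\log\!\left[\tfrac1K\sum_k\varphi_h(\mathbf z^{(k)}-\mathbf z_t)\right]+C_{\sigma,\tau},\qquad C_{\sigma,\tau}=\log c_{\sigma,\tau}.
\end{equation*}

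For the first limit, fix $\mathbf z_t$. The summands are i.i.d., nonnegative, and bounded by $(2\pi h^2)^{-d/2}$, so they are integrable. By the strong law of large numbers, the average converges a.s. to $\mathbb E_{\mathbf z\sim p_{\mathbf z}}\varphi_h(\mathbf z-\mathbf z_t)$. Since $\varphi_h$ is symmetric, this equals $(p_{\mathbf z}*\varphi_h)(\mathbf z_t)$. This limit is strictly positive because $\varphi_h>0$ everywhere. The logarithm is continuous at positive values, so the continuous mapping theorem gives the stated limit for $\mathcal U_K$.

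For the expectation over $\mathbf z_t$, we need to interchange limit and expectation, and this is the step we expect to need the most care. The upper bound is easy, since $\mathcal U_K\le -\tfrac d2\log(2\pi h^2)+C_{\sigma,\tau}$. The lower tail is harder because $\log$ of the empirical average can be very negative. We would first reduce to the setting of the paper, where $\mathbf z_t\in(0,1)^d$ because $\boldsymbol{\mathrm h}$ maps into $(0,1)^d$, so every pair of points satisfies $\|\mathbf z^{(k)}-\mathbf z_t\|^2\le d$. Then each kernel value is at least $(2\pi h^2)^{-d/2}e^{-d/(2h^2)}$, so the average is bounded below by the same quantity. This makes $\mathcal U_K$ uniformly bounded, and dominated convergence yields
\begin{equation*}
\mathbb E_{\mathbf z_t}[\mathcal U_K]\to\mathbb E\log(p_{\mathbf z}*\varphi_h)(\mathbf z_t)+C_{\sigma,\tau}=-H_h(\mathbf z_t)+C_{\sigma,\tau}.
\end{equation*}
Here the expectation is taken jointly over the negatives and $\mathbf z_t$. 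Outside bounded support, we would instead invoke a moment condition on $p_{\mathbf z}$ together with Jensen's inequality.

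For the final claim, $H_h\to H$ as $h\to0$, we invoke standard kernel-smoothing facts. If $p_{\mathbf z}$ is a density, then $p_{\mathbf z}*\varphi_h\to p_{\mathbf z}$ a.e. by the Lebesgue differentiation theorem for approximate identities. Under the usual regularity conditions (bounded density, bounded away from zero on a bounded support, or $\mathbb E|\log p_{\mathbf z}|<\infty$ with a domination argument), we get $\mathbb E\log(p_{\mathbf z}*\varphi_h)(\mathbf z_t)\to\mathbb E\log p_{\mathbf z}(\mathbf z_t)=-H(\mathbf z_t)$. Since the proposition assumes these regularity conditions, we would cite them rather than re-derive them. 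The concluding interpretation then follows because minimizing $\mathbb E[\mathcal U_K]$ asymptotically corresponds to minimizing $-H(\mathbf z_t)$, up to the constant $C_{\sigma,\tau}$.
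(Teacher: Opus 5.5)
Your proposal is correct and takes the same route as the paper. You rewrite each negative-sample summand as a constant times $\varphi_h(\mathbf z^{(k)}-\mathbf z_t)$ with $h^2=\tau\sigma^2$, apply the law of large numbers to obtain $(p_{\mathbf z}*\varphi_h)(\mathbf z_t)$, take expectations over $\mathbf z_t$, and cite kernel-smoothing consistency as $h\to0$. You also supply details the paper leaves implicit: the explicit constant $c_{\sigma,\tau}$, positivity of the limit for the continuous-mapping step, and the uniform two-sided bound on $(0,1)^d$ that justifies exchanging the $K\to\infty$ limit with the expectation.
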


\begin{proof}
Under
$\boldsymbol\mu_{\boldsymbol\phi}(\mathbf z_{t-1},\mathbf a_{t-1} )=\mathbf z_t$,
\begin{align}
\exp
\left(
-\frac{
\ell_{\boldsymbol\phi}
(\mathbf z^{(k)}\mid\mathbf z_{t-1},\mathbf a_{t-1} )
}{\tau}
\right)
&=
C_{\sigma,\tau}'
\exp
\left(
-\frac{
\|\mathbf z^{(k)}-\mathbf z_t\|_2^2
}{
2\tau\sigma^2
}
\right)
\nonumber\\
&=
C_{\sigma,\tau}
\varphi_h(\mathbf z^{(k)}-\mathbf z_t),
\qquad
h^2=\tau\sigma^2,
\end{align}
where $C_{\sigma,\tau}$ and $C_{\sigma,\tau}'$ do not depend on
$\mathbf z_t$.

Therefore,
\begin{align}
\mathcal U_K
=
\log
\left[
\frac1K
\sum_{k=1}^K
\varphi_h(\mathbf z^{(k)}-\mathbf z_t)
\right]
+
C_{\sigma,\tau}.
\end{align}
The quantity inside the logarithm is precisely the Gaussian kernel-density
estimator of $p_{\mathbf z}$ evaluated at $\mathbf z_t$.
By the law of large numbers,
\begin{equation}
\frac1K
\sum_{k=1}^K
\varphi_h(\mathbf z^{(k)}-\mathbf z_t)
\longrightarrow
\int
\varphi_h(\mathbf z-\mathbf z_t)
p_{\mathbf z}(\mathbf z)
\,d\mathbf z
=
(p_{\mathbf z}*\varphi_h)(\mathbf z_t).
\end{equation}
This establishes Eq.~\eqref{eq:contrastive-kde-limit}.
Taking expectation over $\mathbf z_t$ gives
\[
\mathbb E[\mathcal U_K]
\longrightarrow
- H_h(\mathbf z_t)+C_{\sigma,\tau}.
\]
Finally, consistency of Gaussian kernel-density estimation gives
$H_h(\mathbf z_t)\rightarrow H(\mathbf z_t)$ as the bandwidth vanishes
under the usual regularity conditions.
\end{proof}

\newpage

\section{Implementation of A-JEPA}
\label{sec:graph-implementation}

A key practical question in A-JEPA is how to learn the temporal DAG structure in the representation space. We first introduce a predefined causal order, which allows us to incorporate DAG structure through a fully connected candidate graph consistent with this order. Ideally, with sufficient data,
a well-specified model, and global optimization, the learned transition mechanisms recover the true conditional dependencies, making non-parent candidate inputs functionally redundant. These ideal conditions are difficult to satisfy in practice, however, and redundant edges may not be clearly
suppressed under finite-sample optimization. We therefore also introduce a sparse implementation with learnable edge gates and an $\ell_1$ regularizer for explicit parent selection.
\paragraph{Predefined causal order in representation space.}
The component-wise identifiability result determines the latent causal variables only up to permutation. Consequently, the indices of the learned representation coordinates carry no predefined semantic meaning. For a latent transition graph that is acyclic over the latent components, there exists a topological ordering under which every non-self causal edge points from a lower-order component to a higher-order component. Since permutations of the learned coordinates represent equivalent identifiable solutions, we fix the coordinate order $z_1\prec\cdots\prec z_d$ as a canonical representative, and denote the ordered coordinates by $z_1,\ldots,z_d$. This choice does not assume the ground-truth semantic labels of the latent variables; it only fixes one representative within the permutation-equivalent latent space, following~\citep{JMLR:v26:22-0136}.

For example, consider three latent variables with non-self causal relations $s_3\rightarrow s_2$, $s_3\rightarrow s_1$, and $s_2\rightarrow s_1$. One valid topological ordering is $(s_3,s_2,s_1)$. Suppose the learned representation instead recovers $(s_1,s_3,s_2)$ up to component-wise invertible transformations. Since the identifiable solution is invariant to permutation, we may simply reorder the learned coordinates as $(s_3,s_2,s_1)$. Under this canonical ordering, all non-self causal edges point from lower-order to higher-order coordinates. Thus, imposing such an ordering fixes only the otherwise
arbitrary permutation of the learned representation, rather than introducing additional semantic information about the latent variables.

\paragraph{Implementation of A-JEPA.}
Based on this predefined order, we construct a fully connected candidate DAG in the representation space. For each child component $z_{t,i}$, all lower-order components $z_{t-1,j}$ with $j<i$ are treated as candidate non-self temporal parents, together with the self-transition
$z_{t-1,i}\rightarrow z_{t,i}$. Accordingly, the candidate parent set is
\begin{equation}
\mathrm{pa}_{i}
=
\{1,\ldots,i-1\}\cup\{i\}.
\label{eq:full-parent-set}
\end{equation}
Because all non-self edges follow the predefined order, the resulting graph is
acyclic by construction.

Using this full candidate parent set, the Gaussian transition model in Eq.~\eqref{eq:learned-gaussian-transition} becomes
\begin{equation}
p_{\boldsymbol{\phi}}
(\mathbf z_t\mid\mathbf z_{t-1},\mathbf a_{t-1})
=
\prod_{i=1}^{d}
\mathcal N\left(
z_{t,i};
\mu_{\boldsymbol{\phi},i}
(\mathbf z_{t-1, j \le i},
\mathbf a_{t-1}),
\sigma_{\boldsymbol{\phi},i}^2
(\mathbf a_{t-1})
\right).
\label{eq:full-dag-gaussian-transition}
\end{equation}

\begin{lstlisting}[
caption={A-JEPA with a fully connected candidate DAG.},
label={alg:ajepa-full-dag},
basicstyle=\ttfamily\small,
frame=single
]
for each minibatch (x_{t-1}, a_{t-1}, x_t):
    z_{t-1} = h(x_{t-1})
    z_t     = h(x_t)
    for i = 1, ..., d:
        # candidate parents follow the predefined order
        parent_i = z_{t-1, 1:i}
        mu_i    = mean_phi_i(parent_i, a_{t-1})
        sigma_i = std_phi_i(a_{t-1})
    # positive transition likelihood
    nll_pos = GaussianNLL(z_t,{mu_i, sigma_i}_{i=1}^d)
    # sample alternative future representations
    {z_t^(k)}_{k=1}^K = SampleNegatives(z_t)
    for k = 1, ..., K:
        nll_neg[k] = GaussianNLL(z_t^(k),{mu_i, sigma_i}_{i=1}^d)
    # A-JEPA objective
    L = alpha*nll_pos + (1-alpha)*log((1/K)*sum_k exp(-nll_neg[k]/tau))
    update h, phi by minimizing L
\end{lstlisting}

\paragraph{Implementation of A-JEPA with Sparse Parent Selection.}
In practice, finite samples and imperfect optimization make it difficult to
determine whether a fitted transition mechanism is functionally independent of
a redundant candidate parent. We therefore make the parent selection explicit
by introducing a learnable scalar gate $g_{ji}$ for each candidate non-self
edge $z_{t-1,j}\rightarrow z_{t,i}$ with $j<i$. The self-transition
$z_{t-1,i}\rightarrow z_{t,i}$ is always retained.

For the $i$-th transition mechanism, we define the gated parent input as
\begin{equation}
\widetilde{\mathbf z}_{t-1,1:i}^{(i)}
=
\left(
g_{1i}z_{t-1,1},
\ldots,
g_{(i-1)i}z_{t-1,i-1},
z_{t-1,i}
\right).
\label{eq:gated-parent-input}
\end{equation}
The Gaussian transition model is then implemented as
\begin{equation}
p_{\boldsymbol{\phi}}
(\mathbf z_t\mid\mathbf z_{t-1},\mathbf a_{t-1})
=
\prod_{i=1}^{d}
\mathcal N\left(
z_{t,i};
\mu_{\boldsymbol{\phi},i}
(\widetilde{\mathbf z}_{t-1,1:i}^{(i)},\mathbf a_{t-1}),
\sigma_{\boldsymbol{\phi},i}^2
(\mathbf a_{t-1})
\right).
\label{eq:sparse-gated-gaussian-transition}
\end{equation}

To encourage sparse parent selection, we augment the practical A-JEPA
objective in Eq.~\eqref{eq:gaussian-contrastive-loss} with an $\ell_1$
penalty on the non-self gates:
\begin{equation}
\mathcal L_{\mathrm{sparse}}
=
\mathcal L
+
\lambda_{\mathrm{sp}}
\sum_{i=1}^{d}
\sum_{j<i}
|g_{ji}|,
\label{eq:sparse-ajepa-objective}
\end{equation}
where $\lambda_{\mathrm{sp}}>0$ controls the sparsity strength. After
optimization, the estimated non-self parent set of $z_{t,i}$ is obtained by
thresholding the learned gates:
\begin{equation}
\widehat{\mathrm{pa}}_i^z
=
\{j<i:\ |g_{ji}|>\delta\}
\cup\{i\},
\label{eq:estimated-parent-set}
\end{equation}
where $\delta$ is a fixed threshold.

\begin{lstlisting}[
caption={A-JEPA with sparse parent selection.},
label={alg:ajepa-sparse-dag},
basicstyle=\ttfamily\small,
frame=single
]
for each minibatch (x_{t-1}, a_{t-1}, x_t):
    z_{t-1} <- h(x_{t-1})
    z_t     <- h(x_t)
    for i = 1, ..., d:
        P_i <- (g_{1i} z_{t-1,1},...,g_{(i-1)i} z_{t-1,i-1},z_{t-1,i})
        mu_i    <- mean_phi_i(P_i, a_{t-1})
        sigma_i <- std_phi_i(a_{t-1})
    L_pos <- GaussianNLL(z_t | z_{t-1}, a_{t-1})
    {z_t^(k)}_{k=1}^K <- sample alternative futures
    for k = 1, ..., K:
        L_neg^(k) <- GaussianNLL(z_t^(k) | z_{t-1},a_{t-1})
    L <- alpha*L_pos + (1-alpha) * log[(1/K) * sum_k exp(-L_neg^(k) / tau)]
         + lambda_sp * sum_{i} sum_{j<i} |g_{ji}|
    update h, phi, and {g_{ji}} by minimizing L
for each i:
    pa_hat_i <- {j < i : |g_{ji}| > delta} union {i}
\end{lstlisting}

\section{Simulation Details}
\label{app:simulation-details}

This section provides additional details of the synthetic experiments. The data are generated according to the action-modulated Gaussian ANM in
Sec.~\ref{sec:case-study}. The simulation is designed to control the main factors appearing in our theory, including the observation mapping and
action-induced variation in the latent transition mechanisms.

\paragraph{Latent transition model.}
We generate a latent causal state $\mathbf s_t\in\mathbb R^d$ following a temporal DAG. For each random seed, the DAG is sampled once and kept fixed across all episodes. Each latent variable has a self-transition and may have additional parents from the previous state, following the known causal order. Each variable has at most three temporal parents including itself. The transition follows
\begin{equation}
s_{t,i} = f_i(\mathbf s_{t-1,\mathrm{pa}_i},\mathbf a_{t-1}^{\mu}) + \sigma_i(\mathbf a_{t-1}^{\sigma})\epsilon_{t,i},
\qquad \epsilon_{t,i}\sim\mathcal N(0,1),
\end{equation}
where
\begin{equation}
\mathbf a_{t-1} = [\mathbf a_{t-1}^{\mu},\mathbf a_{t-1}^{\sigma}], \qquad \mathbf a_{t-1}^{\mu},\mathbf a_{t-1}^{\sigma}\in\mathbb R^d.
\end{equation}
Thus, the default action dimension is $2d$. The first action block modulates the conditional mean, while the second modulates the conditional variance.

For a temporal causal edge $j\rightarrow i$, its action-dependent coefficient is
\begin{equation}
A_{ji}(\mathbf a_{t-1}^{\mu})= \kappa G_{ji} \tanh\!\left( \mathbf w_{ji}^{\top}\mathbf a_{t-1}^{\mu}
\right), \qquad \kappa=0.5.
\end{equation}
Self-transitions additionally have a fixed coefficient $\rho=0.6$. The resulting mean is passed through a LeakyReLU nonlinearity. Action coordinates not associated with an active parent mechanism are masked
out, so that the conditional mean is modulated only through the sampled
temporal causal edges. The conditional standard deviation is
\begin{equation}
\sigma_i(\mathbf a_{t-1}^{\sigma}) = \sigma_{\min} + s_{\sigma}\, \mathrm{softplus}(a_{t-1,i}^{\sigma}),
\end{equation}
where $\sigma_{\min}=0.08$ and $s_{\sigma}=1.0$.

\paragraph{Action settings.}
The action is fixed within each episode and varied across episodes. Therefore, different episodes correspond to different action-conditioned transition mechanisms. Unless otherwise specified, each episode contains $2{,}000$ samples and we generate $10d$ episodes. To study the effect of action variability, we vary the number of action settings. For $d=5$, we use $5$, $10$, $20$, $30$, $40$, and $50$ different action settings.

\paragraph{Observation model.}
The latent state is mapped to the observation as
\begin{equation}
\mathbf x_t=\boldsymbol{\mathrm g}(\mathbf s_t),
\end{equation}
where $\boldsymbol{\mathrm g}$ is constructed using random orthogonal transformations and LeakyReLU nonlinearities. The default mapping is
invertible. To examine the role of this assumption, we progressively remove observed coordinates while keeping the latent dimension fixed, making the observation mapping increasingly non-invertible.

\paragraph{Network architecture.}
The used encoder is a three-layer MLP with hidden width $256$, where each hidden block consists of Linear-BatchNorm-SiLU layers, followed by a linear projection to the $d$-dimensional representation space. The encoder output is mapped coordinate-wise to $(\delta,1-\delta)$ using a sigmoid transform with $\delta=10^{-4}$. The used transition model predicts a factorized Gaussian distribution over the learned coordinates. For each coordinate $i$, the conditional mean is produced by a separate two-layer MLP with hidden width $256$, taking as input the previous representations of its candidate temporal parents together with the mean-action variables. The conditional variance is predicted by a separate coordinate-wise MLP from the action variables, with log-variance clamped to $[-1,1]$.

\paragraph{Model and training.}
The encoder maps $\mathbf x_t$ to a learned representation $\mathbf z_t$. The learned transition model predicts a diagonal Gaussian distribution
\begin{equation}
p_{\boldsymbol\phi} (\mathbf z_t\mid\mathbf z_{t-1},\mathbf a_{t-1}) = \prod_{i=1}^{d} \mathcal N\left( z_{t,i}; \mu_{\boldsymbol\phi,i}, \sigma_{\boldsymbol\phi,i}^{2}\right),
\end{equation}
where the conditional mean and variance depend on the previous representation and action. In the main representation-recovery experiments, A-JEPA uses the
maximally connected candidate DAG consistent with the predefined causal
order, as described in Appendix~\ref{sec:graph-implementation}. For graph
recovery, we use its sparse-gated variant and learn the active non-self
parents from data.

We optimize the Gaussian contrastive objective in Eq.~\eqref{eq:gaussian-contrastive-loss}. Unless otherwise specified, we use $\alpha=0.51$, $\tau=1.0$, Adam with learning rate $10^{-4}$, weight decay $10^{-5}$, batch size $512$, and $200$ training epochs. We use $80\%$ of the
generated transitions for training and $20\%$ for testing.

\paragraph{Evaluation.}
Latent state recovery is measured using component-wise Spearman MCC. We compute the absolute Spearman correlations between all learned and ground-truth latent coordinates, align them using the Hungarian algorithm, and average the matched correlations. This metric is consistent with the component-wise equivalence class in Theorem~\ref{thm:componentwise-identifiability}.

For graph recovery, we first align the learned coordinates using the same matching. We then threshold the learned parent gates and report precision, recall, F1, and structural Hamming distance (SHD) against the ground-truth temporal DAG. Self-transitions are excluded from graph evaluation because they are always included in the simulation.

\paragraph{Assumption and model-specification analysis.}
We consider three controlled changes to the default setting. First, we reduce action-induced variation by allowing actions to modulate only the conditional mean or only the conditional variance. Second, we replace Gaussian noise with Laplace, Student-$t$, or Uniform noise while keeping the same action-dependent variance. Third, we progressively remove observation coordinates to weaken the invertibility of the observation mapping. These experiments examine how latent-state recovery changes when the main assumptions or modeling choices are relaxed.

\paragraph{Computational resources.}
All experiments were conducted on NVIDIA A100 GPUs with 40\,GB of memory, with different random seeds and experimental settings executed independently.

\section{Visual Benchmark Details}
\label{app:visual-benchmark-details}

This section provides additional details of the four visual control benchmarks
used in Sec.~\ref{sec:exp}. These environments provide image observations and
ground-truth physical state variables.
We therefore use them to evaluate whether the learned representations align
with interpretable state factors, preserve information about the underlying
state, and support action-conditioned latent prediction.

\paragraph{Encoder and transition architecture.}
For all visual benchmarks, A-JEPA and the baselines use the same compact CNN encoder. Given an RGB observation resized to $224\times224$, the encoder applies four convolutional blocks with channel widths $32$, $64$, $128$, and $256$. Each block consists of a $5\times5$ convolution with stride $2$ and padding $2$, followed by BatchNorm and LeakyReLU. The resulting feature map is globally average pooled and projected to the $d_z$-dimensional representation space through a two-layer MLP with hidden width $512$. Following the data convention used by the predictive baselines, we use a frame skip of $5$. The five primitive actions between two encoded frames are concatenated and mapped to a $d_z$-dimensional action representation through a small action encoder.

A-JEPA uses a coordinate-wise transition model following a predefined latent order, which is without loss of generality due to the permutation indeterminacy in latent space, see the discussion in Sec.~\ref{sec:graph-implementation}. For each coordinate $i$, the conditional mean is predicted from the previous latent coordinates up to $i$ together with the encoded action,
\begin{equation}
\mu_{t+1,i}
=
f_i\!\left(
\mathbf z_{t,\le i},
\mathbf a_t^{z}
\right),
\end{equation}
where each $f_i$ is a separate two-layer MLP with hidden width $256$. The transition variance is predicted by a separate action-conditioned head, with one log-variance per latent coordinate and values clamped to $[-1,1]$. Together, these heads define the factorized Gaussian transition model used by the A-JEPA objective.

\paragraph{Baselines and training.}
All methods take image observations as input and use the same CNN encoder and
comparable training setup. A-JEPA is trained with the objective in Eq.~\eqref{eq:gaussian-contrastive-loss} and the action-conditioned transition model described above. LeWM and PLDM use the same image encoder, action representation, latent dimension, frame skip, optimizer settings, batch size, and number of training epochs, while retaining their original predictive
objectives and non-DAG transition models.

Unless otherwise specified, models are trained from scratch for $10$ epochs using Adam with learning rate $10^{-4}$, weight decay $10^{-5}$, batch size $128$, and history size $1$. We reserve $10\%$ of the official training split for validation and evaluate on the held-out test split. For A-JEPA, we use $\alpha=0.51$ and $\tau=1.0$. All reported results are averaged over three random seeds.

\paragraph{Evaluation metrics.}
We evaluate the learned representations from three complementary perspectives.
Component-wise MCC measures alignment between individual learned coordinates
and ground-truth physical state factors. We compute absolute Spearman
correlations, perform Hungarian matching, and average the matched scores. This
metric is consistent with the component-wise equivalence class in
Theorem~\ref{thm:componentwise-identifiability}.

Nonlinear state $R^2$ measures how much information about the underlying state
is retained in the full representation, regardless of coordinate-wise
alignment. Latent rollout $R^2$ evaluates action-conditioned prediction by
recursively applying the learned transition model and comparing the predicted
future representation with the encoder representation of the corresponding
future observation.

\paragraph{TwoRoom.}
\textsc{TwoRoom} is a two-dimensional navigation environment where the action-driven state is the agent position,
\begin{equation}
\mathbf s_t=[x_t,y_t].
\end{equation}
The dataset observation also contains target coordinates, but we exclude them from MCC because they specify the episode goal/context rather than the evolving state. Concretely, we evaluate component recovery only on the two dynamic
coordinates $(x_t,y_t)$.
\begin{figure}[h]
    \centering
    \includegraphics[width=0.8\linewidth]{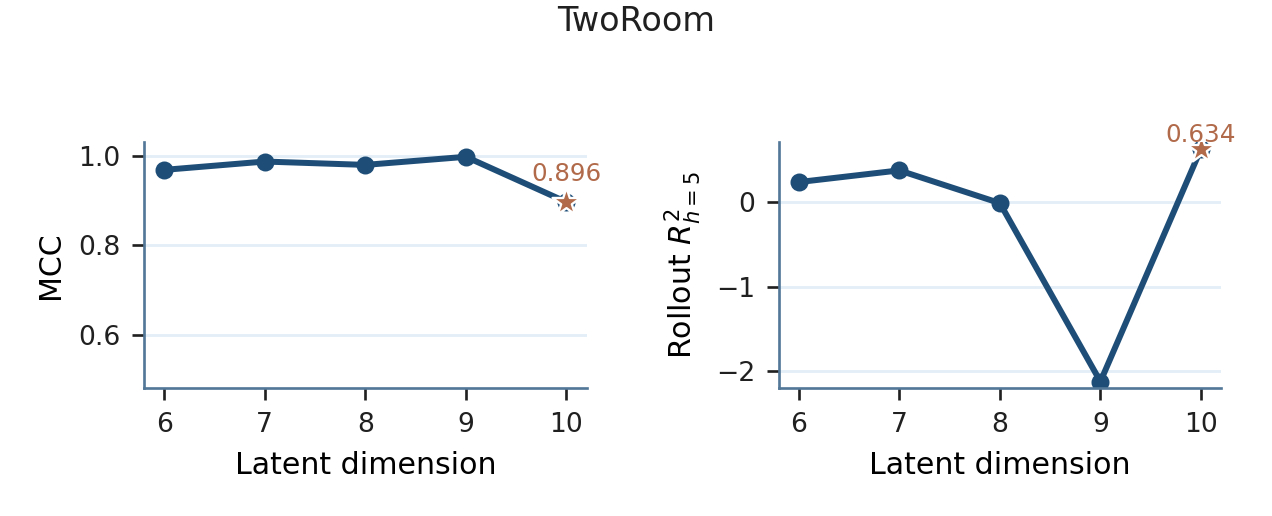}
    \caption{
    Latent-dimension selection on \textsc{TwoRoom}. We sweep
    $d_z\in\{6,7,8,9,10\}$ and evaluate both component-wise recovery
    (MCC) and rollout $R^2$ at horizon $5$. We use $d_z=10$ for the
    main experiments because it provides a strong balance between
    component recovery and multi-step transition prediction.
    }
    \label{fig:tworoom-zdim}
\end{figure}
For evaluation, we use $5{,}000$ held-out test samples for each seed and compute the absolute Spearman correlation between every learned latent coordinate and each ground-truth state factor. We then apply Hungarian matching before averaging the matched scores. This accounts for both the permutation ambiguity and the component-wise monotone transformations allowed by our identifiability result. We sweep the latent dimension over $d_z\in\{6,7,8,9,10\}$. As shown in Figure~\ref{fig:tworoom-zdim}, although smaller dimensions achieve high MCC, the $d_z=10$ model gives the best rollout behavior among the sweep settings, with rollout $R^2_{h=5}=0.6343$ in the dimension-selection run. We therefore use $d_z=10$ for the main three-seed comparison in Figure~\ref{fig:visual-control-main}, selecting the dimension by the joint criterion of component recovery and action conditioned dynamics quality.

\paragraph{Reacher.}
\textsc{Reacher} is a two-link articulated arm with two rotational state variables. Because joint angles are periodic, direct correlation with the raw angles can be misleading near the $-\pi/\pi$ boundary. Thus, we represent each angle $q_j$ by its circular coordinates $(\sin q_j,\cos q_j)$. We measure component recovery over the two physical angular factors, namely the proximal joint angle $q_0$ and the distal joint angle $q_1$, treating each $(\sin q_j,\cos q_j)$ pair as one circular state factor rather than two independent scalar factors. For each learned coordinate $z_i$, we compute its best one-dimensional projection onto $(\sin q_j,\cos q_j)$ and use the resulting correlation for Hungarian matching across the two joint-angle factors. Evaluation is performed on $5{,}000$ held-out test samples.
\begin{figure}[h]
    \centering
    \includegraphics[width=0.8\linewidth]{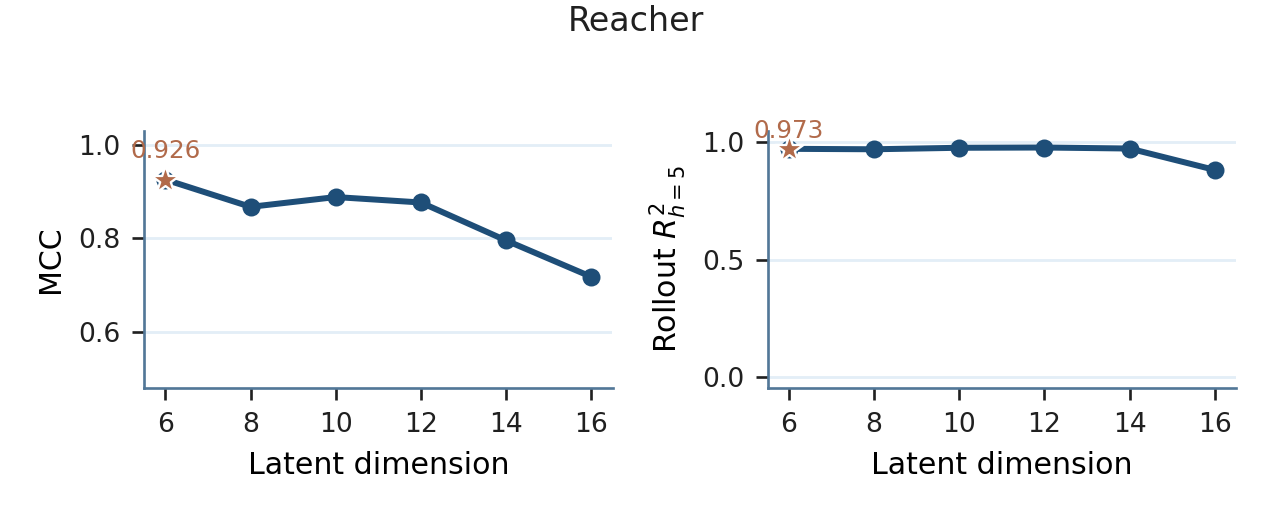}
    \caption{
    Latent-dimension selection on \textsc{Reacher}. We sweep
    $d_z\in\{6,8,10,12,14,16\}$ and evaluate grouped circular MCC and
    rollout $R^2$ at horizon $5$. We use $d_z=6$ for the main experiments
    because it achieves strong component-wise recovery and multi-step
    prediction with the most compact representation.
    }
    \label{fig:reacher-zdim}
\end{figure}
We sweep the latent dimension over $d_z\in\{6,8,10,12,14,16\}$ and select the representation size by jointly considering component-wise recovery and multi-step prediction. As shown in Fig.~\ref{fig:reacher-zdim}, $d_z=6$ achieves strong performance on both criteria, with MCC $0.926$ and rollout $R^2_{h=5}=0.973$, while increasing
the latent dimension does not improve component recovery. We therefore use
$d_z=6$ for the main three-seed comparison.

\paragraph{PushT.}
\textsc{PushT} is a planar manipulation environment in which a controlled pusher moves a T-shaped block through contact. We evaluate component recovery using five physical state factors,
\begin{equation}
\mathbf s_t=
[x_t^p,y_t^p,x_t^b,y_t^b,\theta_t^b],
\end{equation}
where $p$ and $b$ denote the pusher and block, respectively. Since pusher motion is directly driven by the action, we additionally include two finite-difference motion factors,
\begin{equation}
\Delta x_t^p=x_{t+1}^p-x_t^p,
\qquad
\Delta y_t^p=y_{t+1}^p-y_t^p,
\end{equation}
computed from adjacent pusher positions. The main MCC is evaluated over seven factors:
\begin{equation}
[x_t^p,y_t^p,x_t^b,y_t^b,\theta_t^b,
\Delta x_t^p,\Delta y_t^p].
\end{equation}
\begin{figure}[h]
    \centering
    \includegraphics[width=0.8\linewidth]{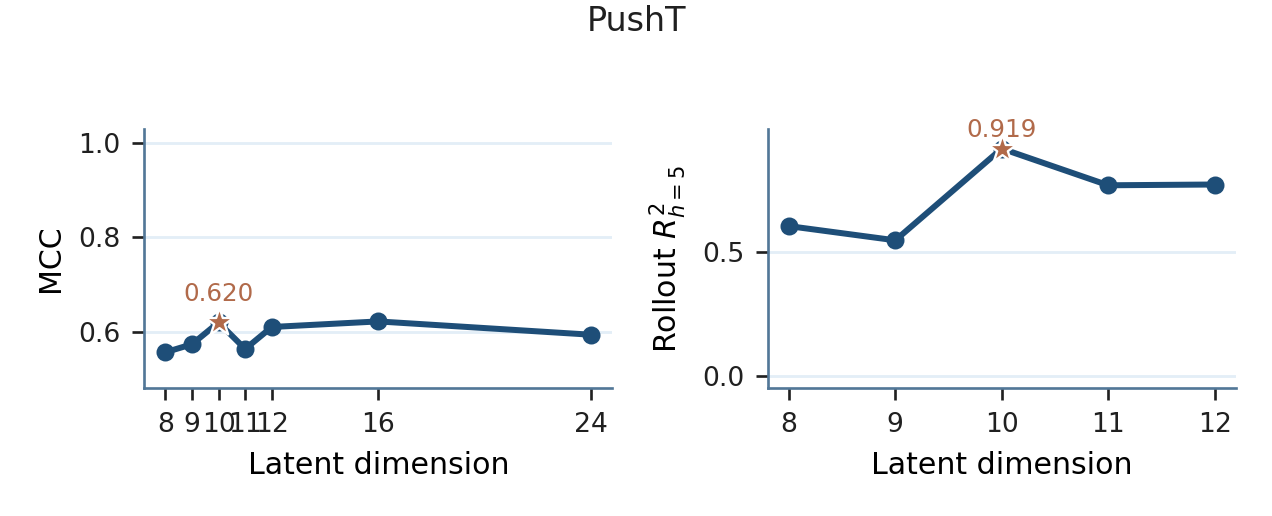}
    \caption{
    Latent-dimension selection on \textsc{PushT}. We evaluate both
    component-wise recovery and rollout $R^2$ at horizon $5$.
    We use $d_z=10$ for the main experiments because it provides the
    strongest joint component-recovery and multi-step prediction performance
    in the dimension-selection run.
    }
    \label{fig:pusht-zdim}
\end{figure}
Again, we select the representation dimension by jointly considering component-wise recovery and multi-step prediction. As shown in Fig.~\ref{fig:pusht-zdim}, the dimension sweep includes $d_z\in\{8,9,10,11,12,16,24\}$ for MCC, with rollout evaluation concentrated around the smaller dimensions. The $d_z=10$ setting provides the strongest joint performance in the selection run, achieving MCC $0.620$ and rollout $R^2_{h=5}=0.919$. We use $d_z=10$ for the main three-seed comparison reported in Figure~\ref{fig:visual-control-main}.

\paragraph{OGBench Cube.}
\textsc{OGBench Cube} is a robot manipulation environment involving end-effector motion, gripper actuation, and object manipulation. We evaluate
component recovery on seven task-relevant physical factors,
\begin{equation}
\mathbf s_t=
[
x_t^{ee},y_t^{ee},z_t^{ee},
x_t^{obj},y_t^{obj},z_t^{obj},
g_t
],
\end{equation}
where $ee$ denotes the end-effector, $obj$ denotes the manipulated object, and $g_t$ denotes the gripper opening. We use these factors rather than the full raw state vector because they directly describe the visible and action-relevant state of the manipulation task. For each seed, evaluation is performed on $5{,}000$ held-out test samples.
\begin{figure}[h]
    \centering
    \includegraphics[width=0.8\linewidth]{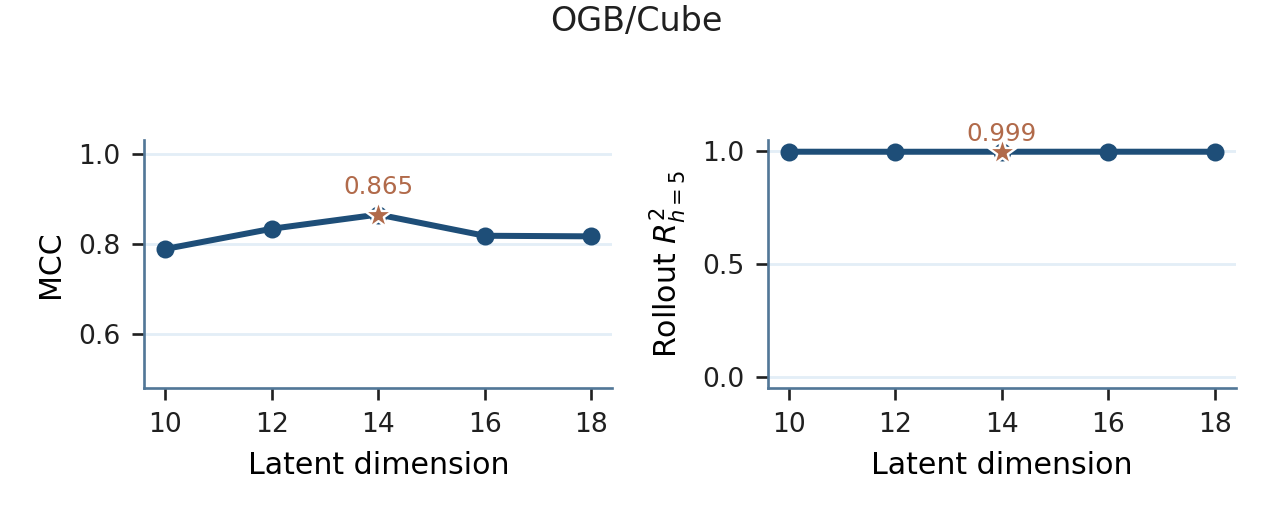}
    \caption{
    Latent-dimension selection on \textsc{OGBench Cube}. We sweep
    $d_z\in\{10,12,14,16,18\}$ and evaluate component-wise recovery and
    rollout $R^2$ at horizon $5$. We use $d_z=14$ for the main experiments
    because it achieves the strongest joint recovery and multi-step prediction
    performance.
    }
    \label{fig:ogb-zdim}
\end{figure}
We sweep the latent dimension over $d_z\in{10,12,14,16,18}$ and select the representation size by jointly considering component-wise recovery and multi-step prediction. As shown in Fig.~\ref{fig:ogb-zdim}, $d_z=14$ achieves the strongest overall performance, with MCC $0.865$ and rollout $R^2_{h=5}=0.999$, and is therefore used for the main three-seed comparison. At this dimension, A-JEPA strongly recovers the end-effector coordinates and horizontal object position, while object height and gripper opening remain more challenging. The learned transition is also highly stable, with rollout $R^2=0.9987$ at horizon $5$ and $0.9940\pm0.0051$ at horizon $10$.

\section{Visual Intervention Experiments on \textsc{Reacher}}
\label{app:visual-intervention}
\begin{figure}[t]
    \centering
    \includegraphics[width=0.5\linewidth]
    {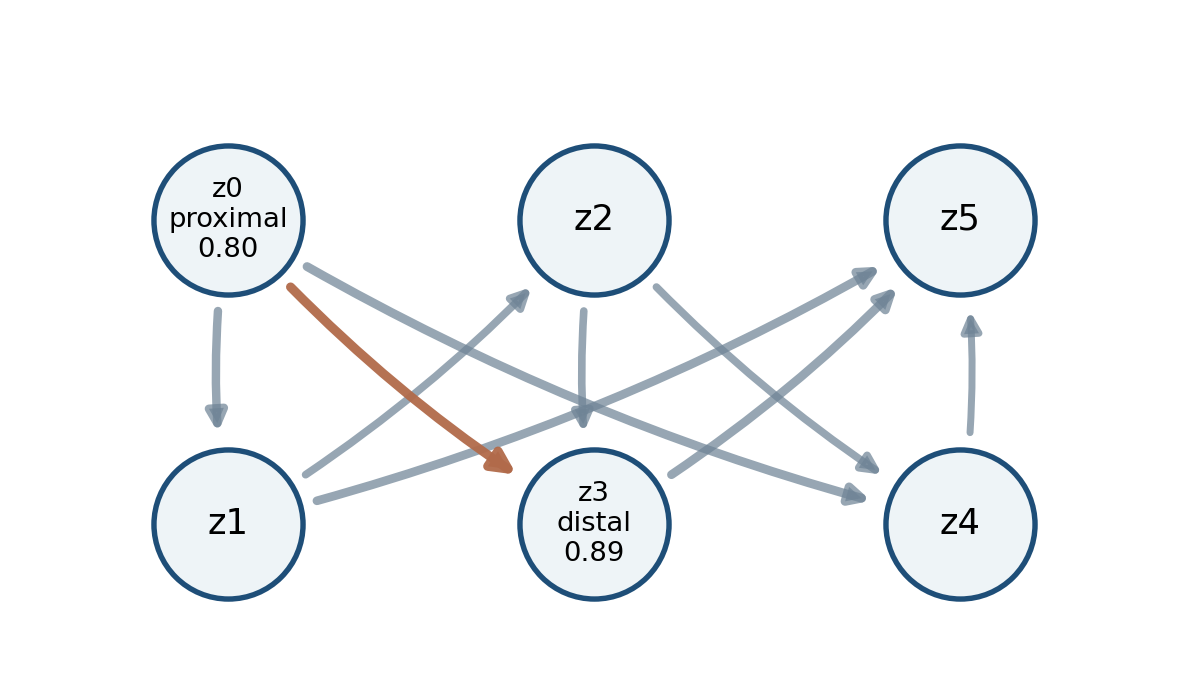}
    \caption{
    Learned latent transition graph on \textsc{Reacher}. MCC assigns $z_0$ to the proximal joint factor and $z_3$ to the distal joint
    factor. The learned graph contains the directed edge $z_0\rightarrow z_3$, consistent with the proximal-to-distal dependency of
    the two-link arm.
    }
    \label{fig:reacher-latent-graph}
\end{figure}
We further examine whether the component-wise semantics and directed latent dependencies learned by A-JEPA correspond to meaningful physical changes in the visual environment. We use \textsc{Reacher} for this analysis because its two-link articulated structure provides a simple setting in which the direction of physical dependencies can be interpreted clearly.

Our evaluation follows three steps. First, we assign physical semantics to individual learned coordinates using the grouped circular MCC described in Sec.~\ref{app:visual-benchmark-details}. Second, we inspect whether the directed edges learned between these coordinates are consistent with the physical structure of the system. Third, we intervene on the corresponding latent coordinates and propagate the intervention through the learned transition model to examine whether the resulting visual changes follow the
learned direction of dependence.
\begin{figure*}[t]
    \centering
    \includegraphics[width=\textwidth]
    {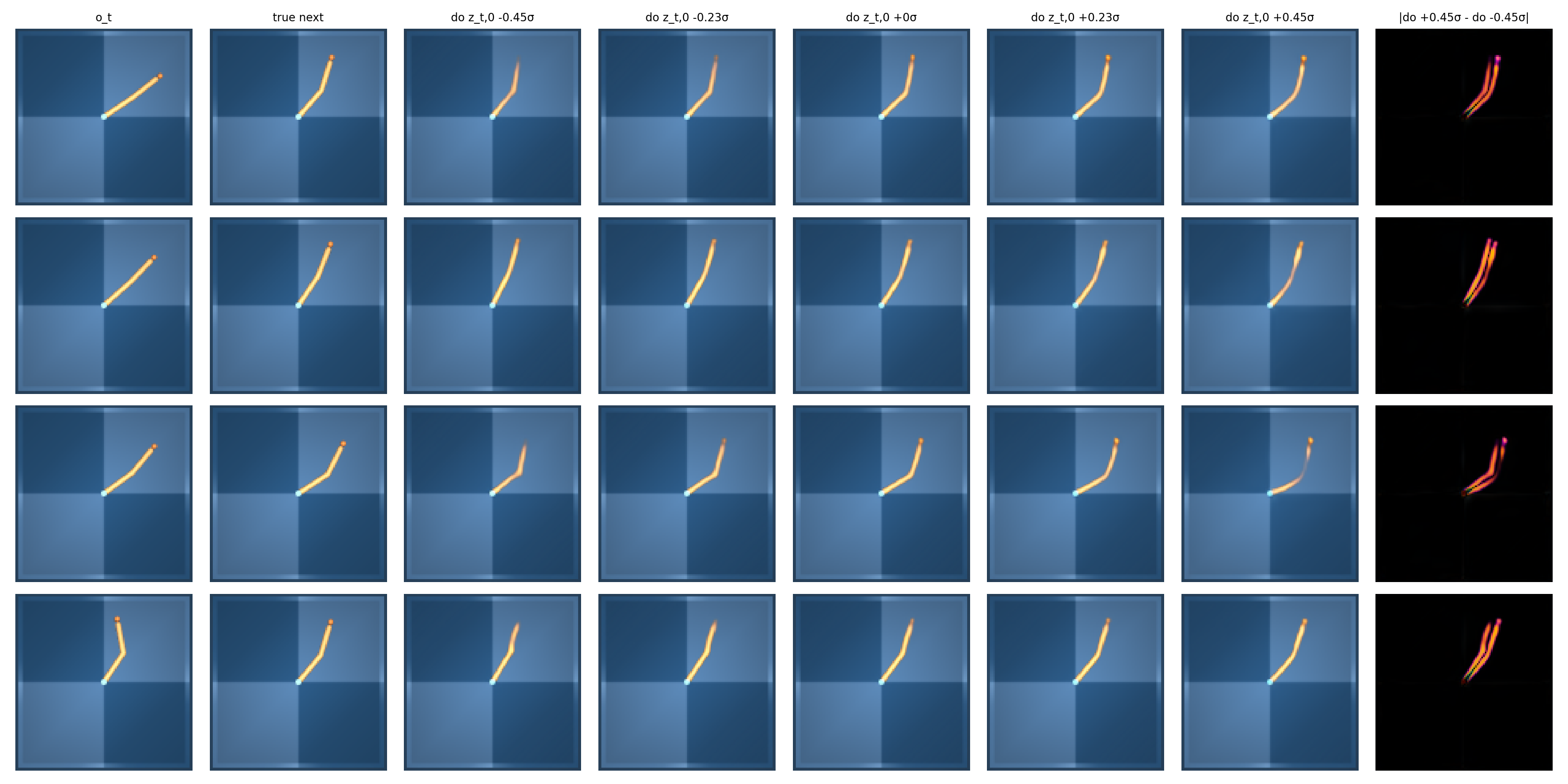}
    \vspace{1.5mm}
    \includegraphics[width=\textwidth]
    {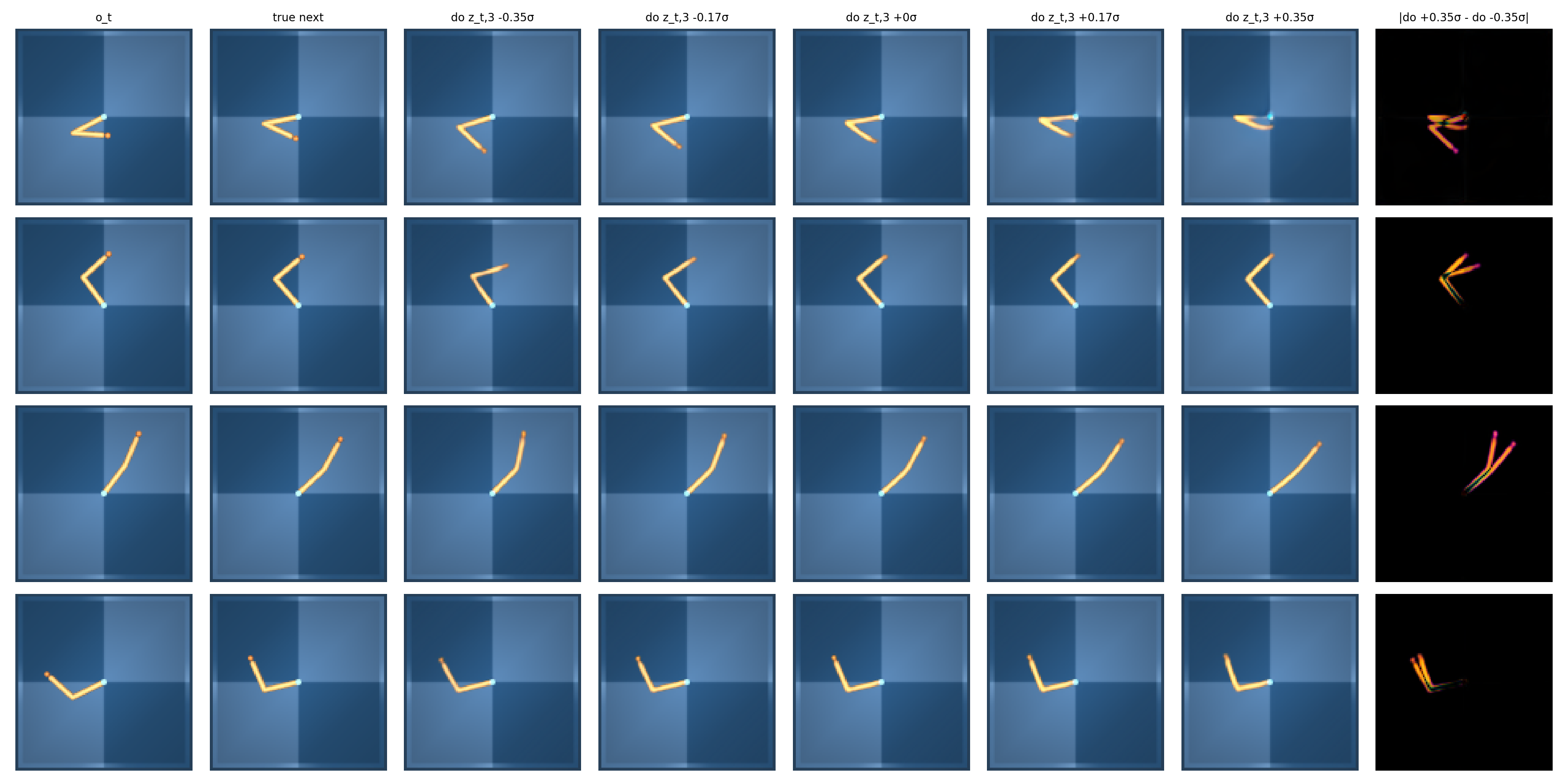}
    \caption{ Latent intervention diagnostics on \textsc{Reacher}. Top: intervention on the proximal-aligned coordinate $z_0$ changes the
    proximal joint and induces corresponding changes in the downstream distal arm segment. Bottom: intervention on the distal-aligned coordinate $z_3$ produces a more localized distal-joint change while leaving the proximal configuration relatively unchanged. This asymmetric response is consistent with the learned directed dependency $z_0\rightarrow z_3$.
    }
    \label{fig:reacher-do-interventions}
\end{figure*}
\paragraph{Semantic alignment.}
The physical state of \textsc{Reacher} contains two angular factors: the proximal joint angle and the distal joint angle. Using grouped circular MCC, the learned coordinate $z_0$ is matched to the proximal joint factor, while
$z_3$ is matched to the distal joint factor. Their corresponding matching scores are approximately $0.80$ and $0.89$ shown in Figure~\ref{fig:reacher-latent-graph}, respectively. This semantic assignment is performed before examining the learned graph or intervention results.

\paragraph{Learned directed dependency.}
After assigning semantics to the latent coordinates, we inspect the learned gated transition graph. As shown in Fig.~\ref{fig:reacher-latent-graph}, the model identifies a directed edge $z_0 \rightarrow z_3$, corresponding to a proximal-to-distal dependency. This relation is consistent with the physical structure of a two-link arm: changing the proximal joint changes the configuration of the upstream link and consequently affects the position and orientation of the downstream link, whereas the distal joint
does not analogously determine the proximal joint.

\paragraph{Latent interventions.}
To further examine the direction of this learned dependency, we intervene on one latent coordinate at time $t$ while keeping the remaining coordinates fixed. For a selected coordinate $z_i$, we consider interventions of the form $z_{t,i}^{\mathrm{do}}=z_{t,i}
+\delta\,\mathrm{Std}(z_i)$, where $\delta$ controls the intervention magnitude. The intervened latent state is then propagated through the learned action-conditioned transition model, $\hat{\mathbf z}_{t+1}^{\mathrm{do}}=T_{\boldsymbol\phi}(\mathbf z_t^{\mathrm{do}},\mathbf a_t)$, and the predicted next representation is decoded back to image space. Thus, the visualization reflects the effect of the intervention after propagation through the learned transition dynamics, rather than simply decoding a manually perturbed latent coordinate.

\paragraph{Directional intervention results.}
The intervention responses are consistent with the learned edge $z_0\rightarrow z_3$. As shown in Figure~\ref{fig:reacher-do-interventions}, when intervening on the proximal-aligned coordinate
$z_0$, changing its value produces a clear change in the proximal joint and also induces a corresponding change in the distal arm segment. In contrast, intervening on the distal-aligned coordinate $z_3$ primarily changes the distal joint while leaving the proximal configuration largely unchanged. This asymmetric response is consistent with the learned proximal-to-distal
direction: intervention on the parent affects both the parent and its downstream child, whereas intervention on the child produces a more localized change.

Taken together, these experiments provide three complementary checks on the learned latent structure: MCC assigns interpretable physical semantics to individual coordinates, the learned graph recovers a physically plausible directed dependency between them, and latent interventions produce visual changes consistent with this direction. These results provide qualitative support that the learned transition structure captures meaningful directed dependencies rather than only preserving information about the physical state.

\section{Details of Visual Transfer Experiments}
\label{app:visual-transfer-details}

This section provides additional details of the held-out robot transfer experiment on RoboNet~\citep{dasari2019robonet}. The experiment evaluates whether action-conditioned latent dynamics learned from a set of robot platforms remain predictive on robots not observed during training.

\paragraph{Dataset and split.}
RoboNet contains image-based manipulation data collected from multiple robot platforms. We train all models on Baxter, Kuka, and WidowX and evaluate them without adaptation on held-out Franka and Sawyer robots. Images are resized to
$224\times224$, and all selected subsets use a four-dimensional action input. The source training set contains $9{,}300$ transition samples. We restrict the experiment to robot subsets with compatible action interfaces so that transfer
can be evaluated without introducing additional action mappings.

\paragraph{Models and training.}
For the RoboNet transfer experiments, A-JEPA uses the same compact CNN encoder as in the visual benchmarks, with latent dimension $8$. The encoder contains four convolutional blocks with channel widths $32$, $64$, $128$, and $256$. Each block applies a $5\times5$ convolution with stride $2$ and padding $2$, followed by BatchNorm and $\mathrm{LeakyReLU}(0.1)$. The resulting feature map is globally average pooled and projected to the latent space through
\[
\mathrm{Linear}(256,512)
\rightarrow
\mathrm{LeakyReLU}(0.1)
\rightarrow
\mathrm{Linear}(512,8).
\]

The four-dimensional action input is mapped to the latent dimension using the same action encoder as in the visual benchmark experiments. A-JEPA uses a coordinate-wise transition model following the implementation in Sec.~\ref{sec:graph-implementation}. For coordinate $i$, the conditional mean is predicted from the ordered latent
coordinates up to $i$ together with the encoded action, $\mu_{t+1,i}=f_i(\mathbf z_{t,\leq i},\mathbf a_t^{z}),$ where each $f_i$ is a two-layer MLP with hidden width $256$. A separate
action-conditioned head predicts one log-variance per latent coordinate for the
factorized Gaussian transition model. All models are trained for $5$ epochs using AdamW with learning rate $10^{-4}$, weight decay $10^{-5}$, batch size $64$, and history size $1$. For A-JEPA, we use $\alpha=0.51$ and $\tau=1.0$. Results are averaged over three random seeds.

\paragraph{Baselines.}
We compare A-JEPA with matched-capacity LeWM and PLDM baselines. All methods use the same CNN encoder, latent dimension $8$, action input, history size, optimizer, batch size, and number of training epochs. LeWM and PLDM retain their respective predictive objectives and non-DAG transition predictors; they are not given the coordinate-wise transition structure or the
Gaussian-contrastive A-JEPA objective.

\paragraph{Evaluation protocol.}
At test time, all model parameters are frozen. Given the encoded state $\mathbf z_t$ of a held-out robot and its subsequent action sequence, we recursively apply the learned transition model to obtain
$\hat{\mathbf z}_{t+h}$. At each horizon $h$, we compare the predicted latent state with the encoded future observation $\mathbf z_{t+h}=\boldsymbol{\mathrm h}(\mathbf x_{t+h})$ using rollout $R^2$, where larger values indicate more accurate latent prediction. We report rollout $R^2$ over horizons $h=1,\ldots,20$ and its average across horizons.
\begin{table}[t]
\centering
\caption{Held-out robot transfer on RoboNet. Models are trained on Baxter, Kuka, and
WidowX and evaluated without adaptation on Franka and Sawyer. We report latent
rollout $R^2$ at selected horizons and the average over $h=1,\ldots,20$.
Values are mean $\pm$ standard deviation over three seeds.
}
\label{tab:robonet-transfer}
\begin{tabular}{lccc}
\toprule
Horizon & A-JEPA & LeWM & PLDM \\
\midrule
\multicolumn{4}{c}{\textbf{Held-out Franka}} \\
$h=1$ & $0.9691\pm0.0088$ & $0.6154\pm0.2418$ & $0.4486\pm0.4487$ \\
$h=2$  & $0.9605\pm0.0112$ & $0.6033\pm0.1973$ & $-0.0679\pm0.4816$ \\
$h=3$  & $0.9534\pm0.0128$ & $0.5728\pm0.1905$ & $-0.8018\pm1.1256$ \\
$h=4$  & $0.9468\pm0.0141$ & $0.5428\pm0.1928$ & $-1.7901\pm2.0465$ \\
$h=6$  & $0.9387\pm0.0158$ & $0.4878\pm0.1995$ & $-3.9602\pm3.8622$ \\
$h=7$  & $0.9353\pm0.0165$ & $0.4588\pm0.2028$ & $-4.8820\pm4.5269$ \\
$h=8$  & $0.9314\pm0.0176$ & $0.4261\pm0.2080$ & $-5.6401\pm5.0212$ \\
$h=9$  & $0.9282\pm0.0181$ & $0.3948\pm0.2155$ & $-6.2633\pm5.3929$ \\
$h=11$ & $0.9247\pm0.0192$ & $0.3278\pm0.2234$ & $-7.2379\pm5.9311$ \\
$h=12$ & $0.9227\pm0.0187$ & $0.2931\pm0.2235$ & $-7.6042\pm6.1337$ \\
$h=13$ & $0.9210\pm0.0186$ & $0.2582\pm0.2223$ & $-7.8627\pm6.2301$ \\
$h=14$ & $0.9192\pm0.0191$ & $0.2197\pm0.2193$ & $-8.0768\pm6.3313$ \\
$h=15$ & $0.9170\pm0.0191$ & $0.1802\pm0.2132$ & $-8.1900\pm6.3764$ \\
$h=16$ & $0.9146\pm0.0183$ & $0.1445\pm0.2059$ & $-8.3055\pm6.4592$ \\
$h=17$ & $0.9130\pm0.0184$ & $0.1087\pm0.2030$ & $-8.4868\pm6.6007$ \\
$h=18$ & $0.9115\pm0.0176$ & $0.0753\pm0.2007$ & $-8.5143\pm6.6106$ \\
$h=19$ & $0.9113\pm0.0166$ & $0.0426\pm0.1919$ & $-8.5835\pm6.6533$ \\
$h=20$ & $0.9099\pm0.0167$ & $0.0119\pm0.1864$ & $-8.7251\pm6.7523$ \\

\multicolumn{4}{c}{\textbf{Held-out Sawyer}} \\
$h=1$ & $0.9816\pm0.0051$ & $0.8782\pm0.0225$ & $0.6821\pm0.0550$ \\
$h=2$  & $0.9720\pm0.0074$ & $0.8258\pm0.0366$ & $0.1957\pm0.3168$ \\
$h=3$  & $0.9656\pm0.0094$ & $0.7963\pm0.0415$ & $-0.5956\pm0.9193$ \\
$h=4$  & $0.9585\pm0.0111$ & $0.7695\pm0.0414$ & $-1.7371\pm1.7133$ \\
$h=6$  & $0.9514\pm0.0148$ & $0.7307\pm0.0406$ & $-4.5519\pm3.5895$ \\
$h=7$  & $0.9484\pm0.0134$ & $0.7118\pm0.0404$ & $-5.9537\pm4.3826$ \\
$h=8$  & $0.9440\pm0.0145$ & $0.6940\pm0.0377$ & $-7.2737\pm5.0679$ \\
$h=9$  & $0.9408\pm0.0141$ & $0.6761\pm0.0360$ & $-8.4299\pm5.6460$ \\
$h=11$ & $0.9328\pm0.0132$ & $0.6383\pm0.0286$ & $-10.1638\pm6.4574$ \\
$h=12$ & $0.9296\pm0.0121$ & $0.6236\pm0.0295$ & $-10.6728\pm6.6503$ \\
$h=13$ & $0.9276\pm0.0138$ & $0.6120\pm0.0285$ & $-11.2198\pm6.9406$ \\
$h=14$ & $0.9242\pm0.0158$ & $0.5993\pm0.0270$ & $-11.6801\pm7.1710$ \\
$h=15$ & $0.9222\pm0.0163$ & $0.5854\pm0.0274$ & $-12.0939\pm7.4304$ \\
$h=16$ & $0.9194\pm0.0147$ & $0.5714\pm0.0288$ & $-12.3088\pm7.5915$ \\
$h=17$ & $0.9175\pm0.0141$ & $0.5614\pm0.0277$ & $-12.5487\pm7.7254$ \\
$h=18$ & $0.9139\pm0.0144$ & $0.5458\pm0.0269$ & $-12.8602\pm7.9320$ \\
$h=19$ & $0.9087\pm0.0147$ & $0.5312\pm0.0326$ & $-13.0918\pm8.1273$ \\
$h=20$ & $0.9028\pm0.0159$ & $0.5155\pm0.0351$ & $-13.3202\pm8.2978$ \\
\bottomrule
\end{tabular}
\end{table}
\paragraph{Results.}
The main difference lies in long-horizon stability. All models use matched encoder capacity and latent dimension and are frozen when evaluated on robots not observed during training. As depicted by Table~\ref{tab:robonet-transfer}, A-JEPA maintains rollout $R^2$ above $0.90$ at horizon $20$ on both Franka and Sawyer, whereas LeWM degrades substantially and PLDM becomes unstable over longer horizons. These results indicate that the action-conditioned latent dynamics learned by A-JEPA transfer more
robustly across changes in robot platform, morphology, and visual appearance.

\newpage
\section{Discussion I: LeWorldModel as a Constrained Instance}
\label{sec:discussion-lewm}

It is instructive to compare our general information-theoretic objective with
the recent LeWorldModel (LeWM) objective~\citep{maes2026leworldmodel}. LeWM
optimizes two terms: a next-representation prediction loss and an
isotropic-Gaussian regularizer,
\begin{equation}
\mathcal L_{\mathrm{LeWM}}
=
\underbrace{
\left\|
\hat{\mathbf z}_{t+1}-\mathbf z_{t+1}
\right\|_2^2
}_{\text{prediction}}
+
\lambda_{\mathrm L}
\underbrace{
\mathrm{SIGReg}(\mathbf Z)
}_{\text{anti-collapse}},
\label{eq:lewm-objective}
\end{equation}
where
$\hat{\mathbf z}_{t+1}
=
\mathrm{pred}_{\boldsymbol\phi}
(\mathbf z_t,\mathbf a_t)$.
SIGReg encourages the marginal representation distribution to match an
isotropic Gaussian through normality tests over random one-dimensional
projections. In comparison, our general objective is
\begin{equation}
\mathcal L_{\lambda}
=
\underbrace{
-\mathbb E
\left[
\log
p_{\boldsymbol\phi}
(
\mathbf z_{t+1}
\mid
\mathbf z_t,\mathbf a_t
)
\right]
}_{\text{transition predictiveness}}
-
\lambda
\underbrace{
H(\mathbf z_{t+1})
}_{\text{information preservation}}.
\label{eq:discussion-general-objective}
\end{equation}
The two terms in Eq.~\eqref{eq:lewm-objective} can be related directly to the
two principles in Eq.~\eqref{eq:discussion-general-objective}.

\paragraph{Prediction term.}
Consider a restricted transition model with fixed isotropic covariance,
\begin{equation}
p_{\boldsymbol\phi}
(
\mathbf z_{t+1}
\mid
\mathbf z_t,\mathbf a_t
)
=
\mathcal N
\left(
\mathbf z_{t+1};
\boldsymbol\mu_{\boldsymbol\phi}
(\mathbf z_t,\mathbf a_t),
\sigma^2\mathbf I
\right).
\end{equation}
Its negative log-likelihood is
\begin{equation}
-\log
p_{\boldsymbol\phi}
(
\mathbf z_{t+1}
\mid
\mathbf z_t,\mathbf a_t
)
=
\frac{1}{2\sigma^2}
\left\|
\mathbf z_{t+1}
-
\boldsymbol\mu_{\boldsymbol\phi}
(\mathbf z_t,\mathbf a_t)
\right\|_2^2
+
C,
\end{equation}
where $C$ is independent of the predicted mean. Hence, with fixed isotropic
variance, conditional likelihood maximization reduces exactly, up to scaling
and an additive constant, to the MSE prediction objective used by LeWM.

\paragraph{Information-preservation term.}
LeWM addresses representation collapse by encouraging
$\mathbf z\sim\mathcal N(\mathbf 0,\mathbf I)$ through SIGReg. This can also
be connected to the entropy term in our general objective. In particular,
among distributions satisfying
\begin{equation}
\mathbb E[\mathbf z]=\mathbf 0,
\qquad
\mathrm{Cov}(\mathbf z)=\mathbf I,
\end{equation}
the isotropic Gaussian is the unique maximum-entropy distribution,
\begin{equation}
\mathcal N(\mathbf 0,\mathbf I)
=
\arg\max_{p_{\mathbf z}}
H(\mathbf z).
\end{equation}
Therefore, Gaussian regularization can be viewed as a constrained realization
of the information-preservation principle in
Eq.~\eqref{eq:discussion-general-objective}: rather than maximizing entropy
over a general admissible representation family, LeWM explicitly drives the
representation toward the maximum-entropy distribution associated with fixed
zero mean and identity covariance.

Taken together, these observations show that LeWM can be viewed as a
constrained instance of our general information-theoretic formulation:
its MSE term corresponds to conditional likelihood maximization under a
fixed-isotropic Gaussian transition model, while SIGReg realizes entropy-based
information preservation under prescribed first- and second-order moments.
This interpretation concerns the underlying objective principle; SIGReg is not
algebraically identical to the entropy term itself.

\paragraph{What is missing for component-wise identification?}
This connection also clarifies why strong predictive performance and
non-collapsed representations do not by themselves imply recovery of individual
causal states. Under an orthogonal transformation $\mathbf R$,
\begin{equation}
\left\|
\mathbf R\hat{\mathbf z}
-
\mathbf R\mathbf z
\right\|_2^2
=
\left\|
\hat{\mathbf z}-\mathbf z
\right\|_2^2,
\end{equation}
and an isotropic Gaussian is likewise invariant under orthogonal rotations.
Thus, the LeWM objective does not, by itself, distinguish a latent
representation from its orthogonal mixtures. Our identifiability analysis
shows that the additional ingredient needed to resolve this ambiguity is
sufficient action-induced variation in the conditional transition mechanisms.
A-JEPA therefore extends this predictive and information-preserving view by
allowing actions to modulate both conditional means and variances, and uses
their variation as an identifying signal that reduces the ambiguity to
permutation and component-wise invertible transformations.

\section{Discussion II: Two Levels of Representation Collapse}
\label{sec:discussion-collapse}

Representation collapse is a central issue in joint-embedding predictive
learning. We argue that it is useful to distinguish two conceptually different
levels of representation failure. The first is the conventional
\emph{trivial collapse} widely studied in the JEPA and joint-embedding
literature, where the learned representation discards information about the
underlying state. The second is a more subtle failure that can remain even
after such information loss has been prevented: the representation may preserve
the underlying state information globally, while mixing multiple latent factors
within individual representation coordinates. We refer to this second level
more precisely as \emph{representation mixing}.

\paragraph{Level I: information-loss collapse.}
The conventional representation-collapse problem arises when distinct
observations are mapped to identical or nearly identical representations. In
the extreme case, $h(\mathbf x)=\mathbf c$ for all $\mathbf x$, such that the
predictive objective can be satisfied by a trivial solution without retaining
meaningful information about the underlying state. For an action-conditioned
predictive model, for example, if $\mathbf z_t=\mathbf z_{t+1}=\mathbf c$ for
all observations, a predictor that always outputs $\mathbf c$ can achieve
perfect representation-space prediction while learning no useful state
representation. This is fundamentally an information-loss problem: the
variation that distinguishes different underlying states has been discarded by
the encoder.

Existing joint-embedding methods address this first level through different
anti-collapse mechanisms. I-JEPA uses an asymmetric target-encoder architecture
to avoid trivial constant solutions, while VICReg explicitly maintains
representation variability and reduces redundancy through variance and
covariance regularization~\citep{assran2025v,bardes2021vicreg}. More
recently, LeWorldModel (LeWM) uses SIGReg to encourage the marginal
representation distribution to approach an isotropic Gaussian, thereby
preventing the representation from degenerating to a constant
solution~\citep{maes2026leworldmodel}. Although these approaches differ in
their implementation, they share the goal of preserving sufficient variation
in the learned representation.

Our general information-theoretic objective expresses this requirement
directly through marginal entropy maximization:
\begin{equation}
\mathcal L_{\lambda}
=
-\mathbb E
\left[
\log
p_{\boldsymbol\phi}
(\mathbf z_t\mid\mathbf z_{t-1},\mathbf a_{t-1})
\right]
-
\lambda H(\mathbf z_t).
\label{eq:discussion-collapse-objective}
\end{equation}
The entropy term directly opposes concentration of the representation
distribution toward a degenerate solution. In particular, under the bounded
representation space considered in Theorem~\ref{thm:optimal-solution}, every
global optimum with $\lambda>1$ attains the maximum possible entropy, with
$H(\mathbf z_t)=0$ and
$\mathbf z_t\sim\mathrm{Unif}((0,1)^d)$. Hence, the constant-representation
solution is explicitly excluded at the global optimum. Rather than prescribing
a particular architectural or distributional anti-collapse mechanism, our
formulation captures this requirement through a general
information-preservation principle.

LeWM provides a particularly direct connection to this interpretation. SIGReg
encourages $\mathbf z$ to follow $\mathcal N(\mathbf 0,\mathbf I)$. Under fixed
first- and second-order moments,
$\mathbb E[\mathbf z]=\mathbf 0$ and
$\mathrm{Cov}(\mathbf z)=\mathbf I$, the isotropic Gaussian is the unique
maximum-entropy distribution. Therefore, Gaussian distribution matching in
LeWM can be viewed as a constrained realization of the same
entropy-maximization principle. Our practical A-JEPA objective instead
implements this principle through the negative-sample contrastive term in
Eq.~\eqref{eq:gaussian-contrastive-loss}, whose connection to representation
entropy is established in Sec.~\ref{app:contrastive-entropy}.

\paragraph{Level II: information-preserving representation mixing.}
Preventing information-loss collapse does not, however, determine how the
retained information is organized across representation coordinates. Consider
a representation $\mathbf z=T(\mathbf s)$, where $T$ is an invertible
transformation of the underlying latent state $\mathbf s$. Since $T$ is
invertible, no state information need be lost: $\mathbf s$ can in principle be
fully recovered from $\mathbf z$. Nevertheless, each learned coordinate may
depend on multiple latent state variables, such that
$z_i=T_i(s_1,\ldots,s_d)$. The representation is therefore non-collapsed in
the conventional information-preservation sense, while the underlying latent
factors remain mixed across representation coordinates.

Importantly, entropy maximization alone cannot resolve this second ambiguity.
An invertible transformation can preserve the information contained in the
latent state while substantially mixing its individual components. More
directly, the change-of-variables formula gives
\begin{equation}
H(T(\mathbf s))
=
H(\mathbf s)
+
\mathbb E
\log
\left|
\det J_T(\mathbf s)
\right|.
\label{eq:discussion-entropy-transform}
\end{equation}
For a volume-preserving transformation satisfying
$|\det J_T(\mathbf s)|=1$, this reduces to
$H(T(\mathbf s))=H(\mathbf s)$. Hence, a fully mixed representation can have
exactly the same entropy as the original latent state.

A particularly simple example arises for isotropic Gaussian representations.
If $\mathbf s\sim\mathcal N(\mathbf 0,\mathbf I)$ and
$\mathbf z=\mathbf R\mathbf s$ for an orthogonal matrix $\mathbf R$, then
$\mathbf z\sim\mathcal N(\mathbf 0,\mathbf I)$ and therefore has exactly the
same entropy and marginal distribution as $\mathbf s$. Nevertheless, a general
orthogonal matrix $\mathbf R$ can mix every learned coordinate across multiple
underlying latent variables. Thus, neither information preservation nor
maximum entropy alone distinguishes the underlying causal coordinates from
their information-preserving mixtures.

Resolving this second level therefore requires additional identifying
structure rather than stronger anti-collapse regularization. In our framework,
this structure is provided by action-induced variation in the transition
mechanisms. As established in Theorem~\ref{thm:componentwise-identifiability}, sufficiently
rich action-induced variation rules out arbitrary cross-component mixing and
restricts the learned representation to permutation and component-wise
invertible transformations of the underlying latent causal states. In this
sense, the identifiability analysis addresses a fundamentally different problem
from conventional anti-collapse mechanisms: it determines how the preserved
state information is organized across individual representation coordinates.

The two levels therefore require different solutions. The first asks whether
the representation retains sufficient information to distinguish different
underlying states and can be addressed through anti-collapse mechanisms such
as architectural asymmetry, variance preservation, distribution matching, or
entropy maximization. The second asks whether the retained information is
organized according to the underlying latent factors rather than arbitrary
mixtures and requires additional identifiability conditions. In our
formulation, these roles are separated explicitly: \emph{entropy maximization
prevents information-loss collapse, whereas action-induced mechanism variation
resolves information-preserving representation mixing}.
\end{document}